\documentclass{article}

\usepackage[margin=1.2in]{geometry}

\usepackage[utf8]{inputenc} 
\usepackage[T1]{fontenc}    
\usepackage{hyperref}       
\usepackage{url}            
\usepackage{booktabs}       
\usepackage{amsfonts}       
\usepackage{nicefrac}       
\usepackage{microtype}      

%

\usepackage[utf8]{inputenc}

\usepackage{amsmath}
\usepackage{amssymb}
\usepackage{graphicx}
\usepackage{bm}
\usepackage[colorinlistoftodos]{todonotes}
\usepackage{theorem}

\usepackage[title]{appendix}
\usepackage{comment}
\usepackage{amsfonts}
\usepackage{dsfont}
\usepackage{hyperref}
\usepackage{lipsum}
\usepackage{dsfont,subcaption,float}

\usepackage{thm-restate}
\newtheorem{theorem}{Theorem}[section]
\newtheorem{corollary}{Corollary}
\newtheorem{lemma}[theorem]{Lemma}
\newtheorem{remark}{Remark}
\newtheorem{proposition}[theorem]{Proposition}
\newenvironment{proof}{\paragraph{\it Proof.}}{\hfill$\square$}


\newcommand{\E}{\mathbb{E}}	
\newcommand{\D}{\mathcal{D}}	
\newcommand{\indic}{\mathds{1}} 
\newcommand{\vdot}[2]{\langle #1, #2 \rangle}
\newcommand{\Eps}{\mathcal{E}}

\newif\ifdraft
\drafttrue  

\title{EM Converges for a Mixture of Many Linear Regressions}

\author{
 Jeongyeol Kwon \\
   \texttt{kwonchungli@utexas.edu}
   \and
   Constantine Caramanis\\
   \texttt{constantine@utexas.edu} \\
   \and
   The University of Texas at Austin
}

\begin{document}

\maketitle

\allowdisplaybreaks

\begin{abstract}
    We study the convergence of the Expectation-Maximization (EM) algorithm for mixtures of linear regressions with an arbitrary number $k$ of components. We show that as long as signal-to-noise ratio (SNR) is $\tilde{\Omega}(k)$, well-initialized EM converges to the true regression parameters. Previous results for $k \geq 3$ have only established local convergence for the noiseless setting, i.e., where SNR is infinitely large. Our results enlarge the scope to the environment with noises, and notably, we establish a statistical error rate that is independent of the norm (or pairwise distance) of the regression parameters. In particular, our results imply exact recovery as $\sigma \rightarrow 0$, in contrast to most previous local convergence results for EM, where the statistical error scaled with the norm of parameters. Standard moment-method approaches may be applied to guarantee we are in the region where our local convergence guarantees apply.
\end{abstract}

\section{Introduction}
The Expectation-Maximization (EM) algorithm is a powerful tool for statistical inference when we have samples with missing information, often modeled as latent variables. It is a general-purpose heuristic for evaluating the maximum likelihood (ML) estimator for such problems \cite{wu1983convergence}. A canonical example is parameter estimation for the mixture of a known family of parameterized distributions such as Gaussian Mixture Models (GMM) or Mixture of Linear Regressions (MLR). In such problems, solving for maximum likelihood estimator is NP-hard due to the non-convexity of the log-likelihood function. The EM algorithm successively computes tighter lower bounds on the likelihood function; each iteration is no more complex than solving the ML problem with no missing data. Despite its simplicity and broad success in practice, a theoretical understanding of EM remains largely elusive (but see Section \ref{ssec:related} for important recent results). In general, the EM algorithm may fail to converge to a global optimum of log-likelihood function. Thus, its success story is specific to problems to which the EM algorithm is applied.

In this paper, we study the convergence behavior of the EM algorithm for mixture of linear regressions with $k$ components. We show that the EM algorithm converges to the true parameters when the signal-to-noise ratio (SNR) is larger than $\tilde{\Omega}(k)$, and the parameter is well initialized, within $\tilde{O}(1/k)$ of the true parameters (see related works and Remark \ref{remark:tensor_init} for some known initialization techniques). This is the first result, to the best of our knowledge, to establish the convergence of the EM algorithm in MLR with more than two components and finite SNR. Furthermore, under the same regularity conditions, we recover the results of \cite{pmlr-v99-kwon19a} for two-component mixtures, showing that the statistical error of the sample-splitting finite sample EM algorithm is $\tilde{O}(\sigma \sqrt{k^2 d/n})$ where $n$ is the number of samples per iteration. This is significant because our analysis then implies exact recovery in the noiseless setting even with finite number of samples, in contrast to earlier work \cite{balakrishnan2017statistical, klusowski2019estimating} that only showed statistical error scales with the norm (or pairwise distance) of the regression parameters.

\subsection{Related Work}
\label{ssec:related}
Work in \cite{balakrishnan2017statistical} established a characterization of the local region of attraction within which EM is guaranteed to converge to a point with the statistical precision of a global optimum. This complemented work in \cite{yi2014alternating} that gave an analogous result for noise-less mixed regression. A key aspect in \cite{balakrishnan2017statistical} involves coupling an analysis of population EM to finite sample EM. Several results have followed, providing convergence results for canonical problems such as GMM or MLR. In the special case of two balanced mixtures, global convergence results have been established in \cite{jin2016local, daskalakis2017ten} for GMMs, and in \cite{pmlr-v99-kwon19a} for MLR. Beyond more than two components, a negative result for global convergence of the EM algorithm for 3-GMM has been established \cite{jin2016local}, while \cite{zhao2018statistical, yan2017convergence} give a local convergence result for $k$-GMM with arbitrary $k \ge 3$. Attempts have been made to obtain analogous results for mixed linear regression. However, these efforts have only been successful in the setting of infinite SNR, i.e., the noiseless setting. Here, \cite{yi2016solving} establishes convergence of alternating minimization, while \cite{zhong2016mixed} obtain a similar result by solving a non-convex formulation; work in \cite{hand2018convex} gives a convex objective that solves the noiseless MLR problem for well-separated data.

Indeed, the problem of solving mixture of linear regressions has been extensively studied. In general, MLR is NP-hard \cite{yi2014alternating} due to the combinatorial nature of the problem. Therefore, it is natural to consider assumptions in the problem, and various efficient algorithms have been proposed under certain statistical assumptions \cite{sedghi2014provable, chaganty2013spectral, yi2014alternating, chen2014convex, zhong2016mixed, yi2016solving, chen2017convex, li2018learning, hand2018convex}. For instance, \cite{chen2014convex} proposed convex formulation which achieves the optimal minimax rate for equal-weighted 2-MLR, and later in \cite{chen2017convex} extended the treatment to unequally weighted mixtures, but again focus on the mixture of only two components. As mentioned, \cite{yi2016solving, zhong2016mixed, hand2018convex} all propose algorithms for solving $k$-MLR, in the noiseless setting. 

A common technical tool used by many algorithms is the powerful method of moments. In the various algorithms based on method of moments \cite{sedghi2014provable, li2018learning, yi2014alternating, chaganty2013spectral, zhong2016mixed, yi2016solving}, up to third-order tensors are constructed from Gaussian regression models, as all necessary information of the regression vectors are contained in those moments if all regression vectors are linearly independent. The drawback of a purely moment-based method is the high sample and computational complexity. In particular, the statistical error of the resulting estimator typically scales with the norm of the regression parameters. Therefore, these methods are often used in conjunction with fast iterative algorithms, such as gradient descent \cite{zhong2016mixed, li2018learning} or alternating minimization \cite{yi2014alternating, yi2016solving}. While the work cited provides guarantees for these iterative algorithms in the noiseless setting, they are no longer consistent estimators in the presence of noise. In practice, the EM algorithm seems to obtain better results; in theory, however, the question of whether EM always converges to the global optimum for $k$-MLR with $k \ge 3$ is open, even when initialized in a neighborhood of true parameters. This paper provides an affirmative answer to this question.

\subsection{Main Contribution}
We prove local convergence of the EM algorithm for $k$-MLR, showing that it converges to a global optimum with high probability, when SNR is $\tilde{\Omega}(k)$, and EM is initialized in a $1/\tilde{O}(k)$-neighborhood of a global optimum. We first establish this result in the infinite sample limit, i.e., population EM. Our result generalizes the results in \cite{balakrishnan2017statistical} which established local convergence for a symmetrized balanced mixture of two components. We establish local convergence in the setting with arbitrary number of components and possibly unbalanced mixing weights. At a high level, our analysis proceeds by carefully constructing the event where the samples are almost correctly assigned their weights, bringing the next estimator closer to the true parameter. Given good initialization and high enough SNR, we expect most samples fall into this category. At the same time, we bound the portion of bad samples which do not fall into this event. The effect of this ``leakage'' is thus canceled out when the average is taken over all samples. By this construction, our convergence rate is no longer dependent on the maximum distance between regression parameters which has often appeared in the EM literature as an artifact of the analysis.

We then show the convergence of a simple variant of finite-sample EM\footnote{The variant is often called {\it sample-splitting}  since it divides entire samples into $T$ batches and uses a new batch in every iteration.} via concentration arguments. Toward this goal, we propose ``event-wise'' concentration of random variables as a proof strategy. Intuitively speaking, the samples that fall into the good event in population EM only induce exponentially small errors. Consequently, statistical errors from these good samples should also be exponentially small. Furthermore, they are the majority among all samples under our assumption on SNR and initialization. On the other hand, samples conditioned on bad events could incur an error as large as the norm of the parameters. However, they are in the minority, and large norms will be canceled out when divided by the total number of samples. See Section \ref{sec:finite_sample} and Proposition \ref{lemma:indic_prob_decompose} for a detailed discussion and formal statement. Remarkably, we show that the statistical error only scales with the variance of the noise.

\section{Preliminaries}

We consider the mixture of multiple linear regressions, where a pair of random variables $(X, y) \in \mathbb{R}^d \times \mathbb{R}$ are generated from one of $k$ linear models:
$$
    \mathcal{D}_j: y = \vdot{X}{\beta_j^*} + e, \quad  \text{for } \ j = 1, ..., k
$$
where $e$ represents additive noise in the measurement with variance $\sigma^2$. Our goal is recovering regression parameters $\{\beta_j^*\}_{j=1}^k$ when the labels that indicate from which domain each pair is generated are missing. Thus, we are considering the estimation of parameters for the mixture of distributions $\{\mathcal{D}_j\}_{j=1}^k$ with mixing weights $\{\pi_j^*\}_{j=1}^k$. In the finite sample regime, we estimate $\{\beta_j^*\}_{j=1}^k$ when we have $n$ samples $(X_i, y_i)_{i=1}^n \sim \mathcal{D}$, where $\mathcal{D} = \sum_j \pi_j^* \mathcal{D}_j$ is a mixture distribution. 

In this paper, we assume that the design vector $X$ for all linear components comes from a shared standard multivariate Gaussian distribution $\mathcal{N}(0,I_d)$. We assume $e$ is a zero-mean and unit-variance Gaussian random variable and independent of $X$. Thus, the problem is rescaled with known variance parameter $\sigma^2$. 

\subsection{The EM Algorithm}
In general, the EM algorithm performs two steps at each iteration, known as the E-step and M-step; these can be written as follows:
\begin{align*}
    \text{E-step}: &\qquad Q(\bm{\theta} | \bm{\theta}_t) = \mathbb{E}_X [\sum_z p(z|X; \bm{\theta}_t) \log f(X,z; \bm{\theta})], \nonumber \\
    \text{M-step}: &\qquad \bm{\theta}_{t+1} = \arg\max_{\bm{\theta}} Q(\bm{\theta} | \bm{\theta}_t),
\end{align*}
where $f$ is the probability distribution function parametrized by $\bm{\theta}$ that generates the data. Specifically, the E-step forms the likelihood function by assigning conditional probability to hidden labels given samples, based on the current estimator of true parameters. Subsequently, the M-step maximizes the expectation built at the E-step to find a new estimator. The EM algorithm alternates over these two steps iteratively until it converges. Due to the intuitive appeal of updating weights, and then updating the estimates of the $\bm{\theta}_t$, and also due to the computational tractability of each iteration, the EM algorithm has been widely used in many different applications.

When applied to our problem setting, one iteration of the population EM algorithm can be written as:
\begin{align}
    \label{eq:emupdate_population}
    \mbox{E-step}: \qquad w_j &= \frac{\pi_j \exp(-(Y - \vdot{X}{\beta_i})^2/2)}{\sum_{l \in [k]} \pi_l \exp(-(Y - \vdot{X}{\beta_l})^2/2)} \nonumber \\
    \mbox{M-step}: \qquad \beta_j^+ &= (\E_\D [w_j XX^{\top}])^{-1} (\E_\D[w_j XY]), \nonumber \\
    \pi_j^+ &= \E_\D [w_j].
\end{align}
When working with the finite number of samples, each expectation in M-step is replaced with the empirical mean of the corresponding quantity (see equation \eqref{eq:emupdate_finite}). 

\subsection{Notations}
In this paper, $d$ is the dimension of the problem and $k$ the number of components. $(X, Y)$ are a pair of random variables from mixture distribution $\D$, $n$ is the number of samples, and $(X_i, y_i)$ are generated samples. We define pairwise distance $R_{ij}^*$, and $R_{min}, R_{max}$ as the smallest and largest distance between regression vectors of any pair of linear models:
\begin{align*}
    R_{ij}^* = \|\beta_i^* - \beta_j^* \|, R_{min} = \min_{i\neq j} R_{ij}^*, \ R_{max} = \max_{i\neq j} R_{ij}^*.
\end{align*}
We define SNR of this problem as $R_{min}$, which is equivalent to the ratio of minimum pairwise distance versus variance of noise. Define $\rho_\pi = \max_j (\pi_j^*) / \min_j(\pi_j^*)$ as the ratio of maximum mixing weight and minimum mixing weight, and $\pi_{min} = \min_j \pi_j^*$.

We denote the max of two scalar quantities $a,b$ as $(a \vee b)$. When $v$ is a vector, $\|v\|$ is $l_2$ norm of $v$. Inner product of two vectors $u, v$ is denoted as $\vdot{u}{v}$. When $A$ is positive semi-definite (PSD) matrix, $\|A\|_{op} = \sup_{s \in \mathbb{S}^{d-1}} (s^T A s)$ is an operator norm of $A$, where $\mathbb{S}^{d-1}$ represents the unit sphere in $\mathbb{R}^d$ space and $s$ is any unit vector in $\mathbb{R}^d$. We use standard complexity analysis notation $O(\cdot), \tilde{O}(\cdot), \Omega(\cdot)$. We use $\E_P[X]$ to denote the expectation of random variable $X \sim P$. Thus $\E_{\D}[\cdot]$ is the expectation taken over the mixture distribution $\D$, and $\E_{\D_j}[\cdot]$ is the expectation taken over distribution corresponds to $j^{th}$ linear model. We denote $\indic_{X \in \Eps}$ an indicator function for event $\Eps$, and often use a shorthand for it $\indic_{\Eps}$ when the context is clear. We use $\E[X | \Eps]$ to denote conditional expectation under event $X \in \Eps$.

For one step analysis of population EM iteration, we use $\beta_j$ to denote the current estimator of $j^{th}$ parameter, and $\beta_j^+$ to denote the next estimator resulted from EM operator. We denote $\Delta_j := \beta_j - \beta_j^*$. We denote $\tilde{\beta}_j$ and $\tilde{\beta}_j^+$ be corresponding estimators for the finite-sample EM. In the result for entire EM algorithm, $\beta_j^{(t)}$ and $\tilde{\beta}_j^{(t)}$ denote the estimator in the $t^{th}$ step of population EM and finite-sample EM respectively. Notations for mixing weights $\pi_i$ are defined in the similar way. Finally, we use the term with high probability to represent the success probability of the algorithm that is at least $1 - 1/ poly(n)$. 

\section{Main Results}
\label{section:main_results}

We state the main results for both population EM and finite-sample EM. We provide a proof sketch in the following two sections, and defer details to the Appendix. 

We first state our main convergence result for population EM after $T$ iterations.
\begin{theorem}
    \label{theorem:population_EM}
    There exists universal constant $C, c > 0$ such that if $R_{min} \ge C k \rho_\pi \log^2 (k \rho_\pi)$, $|\pi_j^{(0)} - \pi_j^*| \le \pi_j^* / 2$, and $\| \beta_j^* - \beta_j^{(0)} \| \le c R_{min} / (k \rho_\pi \log (k))$ for all $j$, then EM converges to true parameters after $T = O(\log (\max_j \| \beta_j^* - \beta_j^{(0)}\| / \epsilon))$ steps, {\it i.e.},
    $\max_j \|\beta_j^* - \beta_j^{(T)}\| \le O(\epsilon)$ for all $j$.
\end{theorem}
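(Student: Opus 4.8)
The plan is to reduce the $T$-step statement to a \emph{single-step contraction} of the population EM operator and then iterate. First I would prove a one-step lemma: if the current iterate lies in the basin $\mathcal{B}$ described by the hypotheses (each $\|\beta_j-\beta_j^*\|\le cR_{min}/(k\rho_\pi\log k)$ and each $|\pi_j-\pi_j^*|\le\pi_j^*/2$), then the updated iterate again lies in $\mathcal{B}$ and the combined error $\Phi=\max_j\|\beta_j-\beta_j^*\|+\max_j|\pi_j-\pi_j^*|$ contracts, $\Phi^+\le\tfrac12\Phi$. Granting this, invariance of $\mathcal{B}$ lets the lemma be applied at every step, so $\Phi^{(t)}\le 2^{-t}\Phi^{(0)}$; the hypotheses force $\Phi^{(0)}=O(\max_j\|\beta_j^{(0)}-\beta_j^*\|)$ (the basin radius is $\Omega(\log(k\rho_\pi))$, hence at least a constant, which dominates the $\tfrac12$-bound on the weight error), so $T=O(\log(\max_j\|\beta_j^{(0)}-\beta_j^*\|/\epsilon))$ iterations give $\max_j\|\beta_j^{(T)}-\beta_j^*\|\le\Phi^{(T)}\le\epsilon$.

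For the one-step lemma I would start from the M-step formula and, writing $\Delta_j=\beta_j-\beta_j^*$ and $M_j=\E_\D[w_jXX^\top]$, rearrange to
\[
\beta_j^+-\beta_j^*\;=\;M_j^{-1}\,\E_\D\!\big[w_j\,X\,(Y-\vdot{X}{\beta_j^*})\big].
\]
The structural fact that drives everything is that the \emph{Bayes-optimal} weights $w_j^*=\PP(z=j\mid X,Y)$ --- the weights EM produces at the true parameters --- annihilate the bracket: $\E_\D[w_j^*\,X(Y-\vdot{X}{\beta_j^*})]=\pi_j^*\E_{\D_j}[Xe]=0$ by independence of $X$ and $e$, and similarly $\E_\D[w_j^*]=\pi_j^*$. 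Subtracting, $\beta_j^+-\beta_j^*=M_j^{-1}\E_\D[(w_j-w_j^*)X(Y-\vdot{X}{\beta_j^*})]$ and $\pi_j^+-\pi_j^*=\E_\D[w_j-w_j^*]$, so the whole analysis is about how far the current soft assignment is from the posterior.

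Two inputs then remain. First, a lower bound $M_j\succeq c\,\pi_{min}I$, obtained by restricting to samples of $\D_j$ on a high-probability event where $w_j\approx1$ and using $\E[\vdot{X}{s}^4]=3$ (so this step is dimension-free); hence $\|M_j^{-1}\|_{op}\le C/\pi_{min}$. Second, an upper bound on $\E_\D[(w_j-w_j^*)X(Y-\vdot{X}{\beta_j^*})]$, obtained by testing against a unit vector $s$, conditioning on the latent label $z=l$, and splitting over a ``good event'' $\Eps$ and its complement. On $\Eps$ --- where every relevant pair has $|\vdot{X}{\beta_l^*-\beta_m^*}|\ge\tau_0$, and $|e|\le\tau_1$, $|\vdot{X}{\Delta_m}|\le\tau_2\|\Delta_m\|$ with $\tau_0,\tau_1,\tau_2$ of order $\sqrt{\log(k\rho_\pi)}$ --- both $w_j$ and $w_j^*$ are exponentially close to the hard label, and expressing $w_j-w_j^*$ as a path integral of the soft-assignment Jacobian along the segment to the truth gives $|w_j-w_j^*|\lesssim e^{-\Omega(\vdot{X}{\beta_l^*-\beta_m^*}^2)}\cdot\mathrm{poly}(\cdot)\cdot(\max_m|\vdot{X}{\Delta_m}|+\max_m|\pi_m-\pi_m^*|)$, whose Gaussian integral is $O(\mathrm{poly}(k,\rho_\pi,\log)/R_{min})\cdot(\max_m\|\Delta_m\|+\max_m|\pi_m-\pi_m^*|)$: a bound \emph{homogeneous} in the errors with prefactor $\le\tfrac18$ once $R_{min}\gtrsim k\rho_\pi\log^2(k\rho_\pi)$. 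The weight update is handled identically and contracts by $\tfrac14$.

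The main obstacle is the complement $\Eps^c$, on which the soft assignment can be badly wrong --- $|w_j-w_j^*|$ of order $1$ --- and the component it leaks onto lies a distance $R_{max}$ away, so a naive bound would destroy homogeneity and make the rate scale with $R_{max}$ (and, through $\|X\|$, with $\sqrt d$). Three observations control this. First, $\PP(\Eps^c)\lesssim k\tau_0/R_{min}+k\,e^{-\Omega(\min(\tau_1,\tau_2)^2)}$, so the SNR hypothesis and the choice of thresholds make $\Eps^c$ rare. Second, on the dominant bad piece --- some separation small, $|\vdot{X}{\beta_l^*-\beta_m^*}|<\tau_0$ --- the ``true residual'' along the confused direction is itself small, $|Y-\vdot{X}{\beta_j^*}|\le|e|+\tau_0$, so the integrand there is driven by a single Gaussian coordinate rather than by $R_{max}$, while the sub-pieces where instead $|e|$ or some $|\vdot{X}{\Delta_m}|$ is large have doubly-exponentially small probability and can absorb the leftover polynomial factors. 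Third --- the point where the ``event-wise'' idea pays off --- the leakage of bad samples into $\beta_j^+$ is exactly the quantity the noise term cancels at the true parameters (the same cancellation that makes $\E_\D[w_j^*X(Y-\vdot{X}{\beta_j^*})]=0$), so after the $w_j-w_j^*$ subtraction the residual bad contribution is again forced to vanish with the errors. Making this quantitative on $\Eps^c$ without reintroducing $\sqrt d$ or $R_{max}$ --- via a careful tiering of $\Eps^c$ and by applying Cauchy--Schwarz to the right factors (the finite-sample version of this decomposition is what Proposition~\ref{lemma:indic_prob_decompose} isolates) --- is the technical heart. Assembling the $\Eps$ and $\Eps^c$ bounds into the identity for $\beta_j^+-\beta_j^*$ together with $\|M_j^{-1}\|_{op}\le C/\pi_{min}$, and the analogue for $\pi_j^+$, gives $\Phi^+\le\tfrac12\Phi$ and invariance of $\mathcal{B}$; iterating yields the theorem.
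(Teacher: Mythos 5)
Your overall route is the paper's route: the same fixed-point identity $\beta_j^+-\beta_j^*=M_j^{-1}\,\E_\D[(w_j-w_j^*)X(Y-\vdot{X}{\beta_j^*})]$, the same lower bound $M_j\succeq c\,\pi_{min}I$, and the same decomposition of the numerator by latent label and by good/bad events with thresholds of order $\sqrt{\log(k\rho_\pi R_{j1}^*)}$, using exponential smallness of the weights on the good event and smallness of the event probabilities (plus conditional-moment bounds and Cauchy--Schwarz) on the bad tiers. That part is sound and matches Lemmas \ref{lemma:population_bound_b} and \ref{lemma:population_bound_a}.

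The gap is in your one-step lemma and the reduction built on it. You claim $\Phi^+\le\tfrac12\Phi$ for $\Phi=\max_j\|\Delta_j\|+\max_j|\pi_j-\pi_j^*|$ throughout the basin, with the weight update ``contracting by $\tfrac14$.'' This is not what the argument delivers, and the paper does not prove it: when $D_m=\max_j\|\Delta_j\|$ is large (the basin allows $D_m$ up to order $\log(k\rho_\pi)$), the mixing-weight error does \emph{not} contract --- one can only show $|\pi_j^+-\pi_j^*|\le\pi_j^*/2$, i.e.\ the weights stay in the required neighborhood, and the new weight error can be of this size even if the old one was tiny. A genuine weight contraction, and one only relative to $\max(\max_j|\pi_j-\pi_j^*|/\pi_j^*,\,D_m)$ rather than to the weight error alone, is available only after the mean-value-theorem analysis kicks in for $D_m\le1$. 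Consequently $\Phi$ need not decrease geometrically. A second, related error: your claim $\Phi^{(0)}=O(\max_j\|\beta_j^{(0)}-\beta_j^*\|)$ confuses the basin \emph{radius} (which is indeed $\Omega(1)$) with the actual initial error, which may be arbitrarily small while $\max_j|\pi_j^{(0)}-\pi_j^*|$ is of constant size; then $T=O(\log(\Phi^{(0)}/\epsilon))$ does not reduce to the claimed $T=O(\log(\max_j\|\beta_j^{(0)}-\beta_j^*\|/\epsilon))$. The fix is to decouple the two errors, as the paper does: prove $D_m^+\le\tfrac12 D_m$ in both regimes (directly for $D_m>1$, via the mean-value theorem with the extra $\Delta_{w,2}$ term for $D_m\le1$), while for the weights prove only invariance of $|\pi_j-\pi_j^*|\le\pi_j^*/2$ when $D_m>1$ and contraction relative to $\max(\cdot,D_m)$ when $D_m\le1$; iterating $D_m$ alone then gives the stated $T$. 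Relatedly, your single ``path-integral'' treatment of the good event should be split the same way, since for large $D_m$ the Jacobian factors $(\vdot{X}{\beta_l^u}-Y)$ along the segment require the separate case analysis (the paper's Case I) to keep the bound homogeneous with a prefactor below $\tfrac12$.
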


\begin{remark}[Initialization with Tensor Methods] 
\label{remark:tensor_init}
Tensor-based methods are able to recover all parameters under the condition that the regression parameters are linearly independent\footnote{Without this structural assumption, there is no known polynomial-time algorithm for MLR, hence initialization becomes another challenging open problem.}. However, tensor methods either have a poor dependence on $d$ \cite{chaganty2013spectral}, or a sub-optimal sample-complexity (polynomial) dependence on $R_{max}$, in order to get the precision error independent of $R_{max}$. This is the case for a natural extension of the tensor-based method of \cite{yi2016solving}. Thus it is common procedure to use spectral methods to get a crude but good enough initialization, and then continue with EM when the noise is small. 
\end{remark}

Next, we state our main results for finite-sample EM. In finite-sample EM with sample-splitting strategy, we divide $n$ samples into $T$ batches, and uses a fresh batch of $n/T$ samples per every iteration. To simplify notation, simply use $n$ rather than $n/T$ when it is clear from context that we are focusing on the single stage analysis. For future reference, we write down the update rule for the finite-sample EM.
\begin{align}
    \label{eq:emupdate_finite}
    \mbox{E-step}: \ w_{i,j} &= \frac{\tilde{\pi}_j \exp(-(Y - \vdot{X}{\tilde{\beta}_j})^2/2)}{\sum_{l \in [k]} \tilde{\pi}_l \exp(-(Y - \vdot{X}{\tilde{\beta}_l})^2/2)} \nonumber \\
    \mbox{M-step}: \  \tilde{\beta}_j^+ &= \Big(\frac{1}{n} \sum_{i \in [n]} w_{i,j} X_i X_i^{\top} \Big)^{-1} \Big(\frac{1}{n} \sum_{i \in [n]} w_{i,j} X_i y_i \Big), \nonumber \\ \tilde{\pi}_j^+ &= \frac{1}{n} \sum_{i \in [n]} w_{i,j}.
\end{align}
We show similarly the convergence result for finite-sample EM after $T$ iterations: 
\begin{theorem}
    \label{theorem:finite_EM}
    There exists universal constants $C, c > 0$ such that if $R_{min} \ge C k \rho_\pi \log^2 (k \rho_\pi)$, $|\tilde{\pi}_j^{(0)} - \pi_j^*| \le \pi_j^*/2$, and $\| \beta_j^* - \tilde{\beta}_j^{(0)} \| \le c R_{min} / (k \rho_\pi \log (k))$ for all $j$, then given $n$ i.i.d. samples $(X_i, y_i)$ from mixture distribution $\D$, where the sample complexity is $n/T = \tilde{O}((k/\pi_{min}) (d / \epsilon^2))$, then with high probability, sample-splitting finite-sample EM converges to true parameters, {\it i.e.}, $\|\beta_j^* - \tilde{\beta}_j^{(T)}\| \le O(\epsilon)$ for all $j$ after $T = O(\log (\max_j \| \beta_j^* - \tilde{\beta}_j^{(0)} \|/\epsilon))$ iterations.
\end{theorem}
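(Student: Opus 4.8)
The plan is to deduce Theorem~\ref{theorem:finite_EM} from the population analysis underlying Theorem~\ref{theorem:population_EM} by a one-step perturbation argument; the sample-splitting structure keeps the randomness used at iteration $t$ independent of the current iterate, so it suffices to bound how much a single finite-sample EM step can deviate from the corresponding population step. Concretely, since batch $t{+}1$ is drawn fresh it is independent of $(\tilde\beta^{(t)},\tilde\pi^{(t)})$, so after conditioning on the first $t$ batches I may treat the current iterate as a fixed point of the basin, and it is enough to prove: if $\max_j\|\beta_j^*-\tilde\beta_j\|\le r$ with $r\le cR_{min}/(k\rho_\pi\log k)$ and $|\pi_j^*-\tilde\pi_j|\le\pi_j^*/2$, then with probability $\ge 1-1/\mathrm{poly}(n)$ over a fresh batch of $n$ samples the finite-sample and population updates from the \emph{same} $(\tilde\beta,\tilde\pi)$ obey $\max_j\|\tilde\beta_j^+-\beta_j^+\|\le\varepsilon_{\mathrm{stat}}$ and $\max_j|\tilde\pi_j^+-\pi_j^+|\le\varepsilon_{\mathrm{stat}}$ with $\varepsilon_{\mathrm{stat}}=\tilde O\big(\sqrt{kd/(\pi_{min}n)}\big)$. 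Granting this, I combine it with the one-step population contraction $\max_j\|\beta_j^+-\beta_j^*\|\le\kappa\max_j\|\tilde\beta_j-\beta_j^*\|$ for some $\kappa\le 1/2$, and with the fact that population EM keeps $\pi_j^+$ within $\pi_j^*/2$ of $\pi_j^*$ — both established in the proof of Theorem~\ref{theorem:population_EM} — to obtain, after a crude bound guaranteeing no iterate leaves the basin and a union bound over the $T$ iterations, the recursion $\max_j\|\tilde\beta_j^{(t+1)}-\beta_j^*\|\le\kappa\max_j\|\tilde\beta_j^{(t)}-\beta_j^*\|+\varepsilon_{\mathrm{stat}}$. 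Unrolling yields $\max_j\|\tilde\beta_j^{(T)}-\beta_j^*\|\le\kappa^T\max_j\|\tilde\beta_j^{(0)}-\beta_j^*\|+2\varepsilon_{\mathrm{stat}}$, and taking $T=O(\log(\max_j\|\tilde\beta_j^{(0)}-\beta_j^*\|/\epsilon))$ with per-batch size $n/T=\tilde O((k/\pi_{min})(d/\epsilon^2))$ (so that $\varepsilon_{\mathrm{stat}}\le\epsilon$) makes both terms $O(\epsilon)$.

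\textbf{The one-step bound.} Write $\tilde\beta_j^+=A_j^{-1}b_j$ with $A_j=\tfrac1n\sum_i w_{i,j}X_iX_i^\top$ and $b_j=\tfrac1n\sum_i w_{i,j}X_iy_i$, and $\beta_j^+=\bar A_j^{-1}\bar b_j$ with $\bar A_j=\E_\D[w_jXX^\top]$, $\bar b_j=\E_\D[w_jXY]$. Let $z_i\in[k]$ denote the hidden label of sample $i$. Because $A_j\succeq\tfrac1n\sum_{i:\,z_i=j}w_{i,j}X_iX_i^\top$ and the weights $w_{i,j}$ for such $i$ exceed $1-\exp(-\Omega(R_{min}^2))$ on the good event, covariance concentration gives $A_j\succeq(\pi_j^*/4)I$ and $\bar A_j\succeq(\pi_j^*/2)I$ once $n\pi_{min}\gtrsim d\log n$. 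The key algebraic move is to subtract $\beta_j^*$ up front rather than $\beta_j^+$: using $y_i=\vdot{X_i}{\beta_{z_i}^*}+e_i$,
\[
b_j-A_j\beta_j^*=\tfrac1n\sum_i w_{i,j}\vdot{X_i}{\beta_{z_i}^*-\beta_j^*}X_i+\tfrac1n\sum_i w_{i,j}X_ie_i=:u_j+v_j,
\]
with the analogous population identity $\bar b_j-\bar A_j\beta_j^*=\bar u_j+\bar v_j$. Together with $\bar A_j^{-1}-A_j^{-1}=\bar A_j^{-1}(A_j-\bar A_j)A_j^{-1}$ and $\|u_j+v_j\|=\|A_j(\tilde\beta_j^+-\beta_j^*)\|\le\|A_j\|_{op}\cdot O(r)=\tilde O(r)$ (which uses only that the iterate stays in the basin), this yields
\[
\|\tilde\beta_j^+-\beta_j^+\|\ \lesssim\ \frac1{\pi_j^*}\big(\|u_j-\bar u_j\|+\|v_j-\bar v_j\|\big)+\frac{r}{(\pi_j^*)^2}\,\|A_j-\bar A_j\|_{op},
\]
so the dangerous factor $\|\beta_j^+\|=O(R_{max})$ has disappeared and it remains only to concentrate $A_j$, $u_j$, and $v_j$.

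\textbf{Event-wise concentration (the crux).} For $A_j$: since $\|w_{i,j}X_iX_i^\top\|_{op}\le\|X_i\|^2=\tilde O(d)$, a Bernstein/covariance bound gives $\|A_j-\bar A_j\|_{op}=\tilde O(\sqrt{d/n})$ with no $R_{max}$. For $v_j=\tfrac1n\sum_i w_{i,j}X_ie_i$: after placing the tail event $\{|e_i|>T'\}$, with $T'=\tilde O(1)$ chosen so $\PP(|e_i|>T')\le 1/\mathrm{poly}(n,d,R_{max})$, into the ``bad'' event, every surviving summand has norm $\le T'\|X_i\|=\tilde O(\sqrt d)$, so $\|v_j-\bar v_j\|=\tilde O(\sqrt{d/n})$ — and this is the only term carrying the noise scale, which is why the final statistical error is proportional to $\sigma$ and vanishes in the noiseless limit. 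The delicate term is $u_j=\tfrac1n\sum_i w_{i,j}\vdot{X_i}{\beta_{z_i}^*-\beta_j^*}X_i$: here one exploits the factorization $X_iX_i^\top(\beta_{z_i}^*-\beta_j^*)=s_{i,j}X_i$ with the scalar $s_{i,j}:=w_{i,j}\vdot{X_i}{\beta_{z_i}^*-\beta_j^*}$, which is $0$ when $z_i=j$ and otherwise controlled \emph{in both regimes}: when $|\vdot{X_i}{\beta_{z_i}^*-\beta_j^*}|$ is large, $w_{i,j}$ is exponentially small in its square (using that the separation is $\ge R_{min}$ and $|e_i|\le T'$), while when it is small $s_{i,j}$ is small outright — so $|s_{i,j}|=\tilde O(1)$ irrespective of $R_{max}$ once the $\{|e_i|>T'\}$ tail has been excised. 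Hence $u_j$ is an average of vectors of norm $\tilde O(\sqrt d)$ and concentrates to $\bar u_j$ within $\tilde O(\sqrt{d/n})$. This bookkeeping — splitting each summand by a ``near a decision boundary, or large noise'' bad event, bounding its small probability, and noting that the bad-event part enters the fluctuation only through the square root of that probability rather than through $1$ — is exactly Proposition~\ref{lemma:indic_prob_decompose}. Combining the three estimates, taking a union bound over $j\in[k]$, and substituting into the displayed inequality gives $\max_j\|\tilde\beta_j^+-\beta_j^+\|=\varepsilon_{\mathrm{stat}}$; the mixing-weight statement is the easier special case in which $X_iX_i^\top$ is replaced by $1$. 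The main obstacle is precisely this event-wise argument for $u_j$ (and for showing $w_{i,j}\approx\indic_{z_i=j}$ on the good event): one has to make the exponential suppression coming from the SNR hypothesis defeat the polynomial-in-$R_{max}$ growth on the rare bad event, uniformly over all $k$ components and all $T$ iterations.
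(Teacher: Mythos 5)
Your proposal follows essentially the same route as the paper: you subtract $\beta_j^*$ up front so the concentrated quantity is $w_{i,j}X_i(y_i-\vdot{X_i}{\beta_j^*})$ (killing the $R_{max}$ scaling), split it into the class-mismatch term $u_j$ and the noise term $v_j$, and control each by the same event-wise decomposition (exponentially small weights on good events, rare bad events entering only through their probability) that the paper formalizes as Proposition~\ref{lemma:indic_prob_decompose}, before combining with the population contraction and union-bounding over components and iterations. The only (cosmetic) difference is that you compare $\tilde\beta_j^+$ to the population one-step image $\beta_j^+$ via the resolvent identity, which requires concentrating $A_j$ around $\bar A_j$, whereas the paper writes $\tilde\beta_j^+-\beta_j^*=A_n^{-1}(e_B+B)$ directly and only needs the lower bound $\lambda_{min}(A_n)\ge\pi_1^*/2$.
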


\begin{remark}
    The statistical error in our result is independent of $R_{min}$ or $R_{max}$. This implies in the original problem where the variance of noise is $\sigma$, we have statistical precision $O(\sigma \epsilon)$. It guarantees exact recovery as $\sigma \to 0$. This is the first result showing that the statistical error rate of the EM algorithm does not depend on the distance between any two regression vectors in noisy environment, as opposed to all previous analysis on EM \cite{balakrishnan2017statistical,klusowski2019estimating,zhao2018statistical, yan2017convergence, yi2015regularized}. We provide the detailed discussion on this issue in Section \ref{sec:finite_sample}.
\end{remark}

\paragraph{Discussion of Main Results.} Several points are in order before we move on to the technical proofs. First, note that in the balanced setting (where all mixing weights are nearly equal to $1/k$), $\rho_\pi = 1$ and $\pi_{min} = 1/k$, thus the SNR condition is $\tilde{\Omega(k)}$, and sample complexity per each iteration is $\tilde{O}(k^2d / \epsilon^2)$. The dependency on $d$ and $\epsilon$ is thus optimal. We note that the $O(k^2)$ dependency appeared even in the noiseless setting \cite{yi2016solving}. The total number of iterations is $T = O(\log (\max_j \|\beta_j^* - \beta_j^{(0)}\| / \epsilon))$, as a result of linear convergence with constant rate in our analysis. In the original scale with noise variance $\sigma^2$, it is equivalent to $T = O(\log (\max_j \|\beta_j^* - \beta_j^{(0)}\|/ \epsilon'))$, to achieve an error of $O(\epsilon')$ where $\epsilon' = \sigma \epsilon$. Note that in the extreme setting where $\sigma \to 0$, exact recovery can still be guaranteed in a finite number of steps, but it requires separate case study on the last iteration which we omit in this paper (see Lemma 3 and Corollary 1 in \cite{yi2016solving}).

A natural question is whether the $\Omega(k)$ requirement for SNR $\Omega(k)$ is sharp. 
In a very closely related problem, the parameter estimation of GMM, \cite{regev2017learning} established the lower bound for minimum separation between the centers of each Gaussian component to recover all centers using a polynomial number of samples. Indeed, the bound $\Omega(\sqrt{\log k})$ established in \cite{regev2017learning} is a threshold above which the labels of most samples can be correctly identified (thus the majority are good samples) if the ground truth parameters are given. However, for mixed linear regression, no such lower bound result has been established. We conjecture that such lower bound might be much larger in mixtures of linear regressions, and it might be closely related to the convergence of the EM algorithm. We leave it as a main future challenge to find such lower bounds for mixed linear regression. In this paper, we focus on the local analysis of the EM algorithm under the condition where the labels of most samples can be correctly identified, if we have good estimate of ground truth parameters. We note that it might be possible to improve the logarithmic factors on the SNR condition with more refined analysis.

Doing away with sample splitting in our algorithm is also a natural and important extension, since we use it in the analysis, though it is well appreciated that in practice EM does not appear to need it.
One way to avoid the sample-splitting technique is to get an uniform concentration bound over local region of interest. Indeed, some previous works on the EM algorithm does precisely this, obtaining uniform concentration of EM operators \cite{yan2017convergence, zhao2018statistical, cai2019chime}. However, their statistical errors have polynomial dependence on $R_{max}$. It is not clear how to remove this in their analysis, even if we do allow sample-splitting. We take an alternate analysis path; while we cannot seem to avoid sample splitting, we do succeed in removing this $R_{max}$ dependence (see Section \ref{sec:finite_sample}). We thus obtain an error rate that is free of distance between any two regression vectors. We leave it as a future work to derive the uniform concentration type result with the same statistical error rate.

\section{Analysis of Population EM}

We first give the sketch of the proof for population EM and provide detailed proof in Appendix \ref{Appendix:population_EM}. For the ease of the presentation, we will assume we know the true weights and use them in the main text, while the full proof in Appendix will not assume it. We express $\beta_1^+ - \beta_1^*$ as
\begin{align*}
    \beta_1^+ - \beta_1^* &= (\E_\D [w_1 XX^{\top}])^{-1} (\E_\D[w_1 X(Y - \vdot{X}{\beta_1^*})]).
\end{align*}
Then, we exploit the fact that true parameters are a fixed point of the EM iteration. That is,
\begin{gather*}
    w_1^* = \frac{\pi_1 \exp(-(Y - \vdot{X}{\beta_1^*})^2/2)}{\sum_j \pi_j \exp(-(Y - \vdot{X}{\beta_j^*})^2/2)}, \\
    \E_\D[w_1^* X(Y - \vdot{X}{\beta_1^*})] = \pi_1 \E_{\D_1} [X(Y - \vdot{X}{\beta_1^*})] = 0.
\end{gather*}
Then $\beta_1^+ - \beta_1^*$ can be re-written as
\begin{align*}
    \beta_1^+ - \beta_1^* &= (\E_\D [w_1 XX^{\top}])^{-1} (\E_\D[(w_1 - w_1^*) X(Y - \vdot{X}{\beta_1^*})]) \\
    &= (\E_\D [w_1 XX^{\top}])^{-1} (\E_\D[\Delta_w X(Y - \vdot{X}{\beta_1^*})]),
\end{align*}
where we defined $\Delta_w = w_1 - w_1^*$. We then bound two terms operator norm of $A = \E_\D [w_1 XX^{\top}]$ and $B = \E_\D[\Delta_w X(Y - \vdot{X}{\beta_1^*})]$.

\begin{remark}
    We do not exactly verify the so-called gradient smoothness (GS)-condition for population EM operator as proposed in \cite{balakrishnan2017statistical}. The reason for that becomes clear in the finite-sample analysis: the inverse of $\E_\D[w_1 XX^{\top}]$ does not match that of finite sample EM, which has inverse of $1/n \sum_i w_{1,i} X_i X_i^{\top}$. If we try to control the deviation of the finite-sample EM operator from the population EM operator, this mismatch inevitably results in a statistical error that scales with $R_{max}$. See Section \ref{sec:finite_sample} for more comprehensive discussion.
\end{remark}

\subsection{Bounding $B$}
We will first bound $B$. The high-level idea of bounding $B$ is closely related to \cite{balakrishnan2017statistical}. Our result formalizes the proof idea in \cite{balakrishnan2017statistical} to make it applicable to $k$-mixture of regressions. Define $D_m:= \max_j \|\beta_j - \beta_j^*\|$. We first treat the case when $D_m > 1$, and apply mean-value theorem for $D_m \le 1$. In either case, we construct good events with carefully chosen parameters to bound the portion of bad samples and errors induced by good samples. We start with stating our lemma on the bound of $B$.
\begin{lemma}
    \label{lemma:population_bound_b}
    Under the condition in Theorem \ref{theorem:population_EM}, when $D_m > 1$, there exists universal constants $c_1, c_1' \in (0, 1/8)$ and $c_2, c_3, c_4 > 0$ such that:
    \begin{align*}
        \|B\| &\le \pi_1^* (c_1 + c_2 k \log (k \rho_\pi) / R_{min}) + \sum_{j\neq1} \pi_j^* \left(c_1' / (k \rho_\pi) + c_3 \log (R_{j1}^* k \rho_\pi) / R_{j1}^* + \left(c_4 D_m / R_{j1}^* \right) D_m \right).
    \end{align*}
    When $D_m \le 1$, there exists another universal constants $c_1, c_1' \in (0, 1/8)$ and $c_2, c_3 > 0$ such that:
    \begin{align*}
        \|B\| &\le \pi_1^* (c_1 + c_2 k \log (k \rho_\pi) / R_{min} ) D_m + \sum_{j\neq 1} \pi_j^* \left( c_1' / (k \rho_\pi) + c_3 \log^2 (R_{j1}^* k \rho_\pi) / R_{j1}^* \right) D_m.
    \end{align*} 
\end{lemma}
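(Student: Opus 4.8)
The starting point is the mixture decomposition $B=\sum_{j}\pi_j^*\,\E_{\D_j}[\Delta_w\, X(Y-\vdot{X}{\beta_1^*})]$. Under $\D_j$ one has $Y-\vdot{X}{\beta_1^*}=\vdot{X}{\beta_j^*-\beta_1^*}+e=:s_j+e$, where $s_j$ is Gaussian with standard deviation $R_{j1}^*$ and independent of $e$; for $j=1$ the residual is just $e$. Each summand is thus the expectation of $\Delta_w=w_1-w_1^*$ against a Gaussian factor of typical size $R_{j1}^*$, and the entire game is: on the bulk, where these factors have typical size, show that $\Delta_w$ is so small that the product is negligible; on the atypical set, where we can only say $|\Delta_w|\le1$, show that the factor $|Y-\vdot{X}{\beta_1^*}|$ is itself small. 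Since $B$ is a vector and $\|B\|=\sup_{\|u\|=1}\langle u,B\rangle$, I replace $X$ throughout by the scalar $\vdot{u}{X}\sim\mathcal N(0,1)$ for a fixed direction $u$. For each $j$ I split $\E_{\D_j}$ over a ``good event'' $\Eps_j$ on which the current soft label essentially matches the oracle soft label, and its complement. This is the $k\ge2$ formalization of the argument sketched in \cite{balakrishnan2017statistical}.

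For $j\neq1$ take $\Eps_j$ to be the intersection of a noise cap $\{|e|\le\tau_j\}$, a ``$\D_1$-vs-$\D_j$ no-confusion'' event $\{|s_j+e|\ge\gamma_j\}$, and a margin $\{|\vdot{X}{\Delta_1}|\le\theta_j\}$ controlling the displacement of the estimate, with $\gamma_j=\Theta(\sqrt{\log(R_{j1}^*k\rho_\pi)})$, $\tau_j$ a fixed fraction of $\gamma_j$, and $\theta_j=\Theta(\|\Delta_1\|\sqrt{\log(R_{j1}^*k\rho_\pi)})=\tilde O(D_m)$. On $\Eps_j$ the oracle weight is polynomially small: from the closed form $w_1^*\le(\pi_1^*/\pi_j^*)\exp(-s_j(s_j+2e)/2)$ and $s_j(s_j+2e)=(s_j+e)^2-e^2\ge\gamma_j^2-\tau_j^2\gtrsim\gamma_j^2$, so $w_1^*\le(R_{j1}^*k\rho_\pi)^{-\Omega(1)}$; and $w_1$ is equally small since its $\D_1$-coordinate still has residual $|e-\vdot{X}{\Delta_1}|\le\tau_j+\theta_j\ll\gamma_j$ against every competitor. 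Integrating this against $|\vdot{u}{X}|\,|s_j+e|$, whose second moment is $O(R_{j1}^*\vee1)$, and using the SNR hypothesis $R_{j1}^*\ge R_{min}\ge Ck\rho_\pi\log^2(k\rho_\pi)$ to absorb the leftover factor, gives a contribution well below the stated $c_1'\pi_j^*/(k\rho_\pi)$. When $D_m\le1$ I do not bound $|\Delta_w|$ crudely on $\Eps_j$ but invoke the mean value theorem, $\Delta_w=\sum_l\vdot{\nabla_{\beta_l}w_1(\bar\beta)}{\Delta_l}$ with $\nabla_{\beta_l}w_1=w_1(\indic_{l=1}-w_l)(Y-\vdot{X}{\beta_l})X$; the prefactors $w_1(1-w_1)$ and $w_1w_l$ are small on $\Eps_j$, and this pulls out an overall factor $D_m$, converting each term of the $D_m>1$ bound into its $D_m$-multiplied counterpart (the extra logarithmic factor in the $D_m\le1$ display coming from integrating the $(Y-\vdot{X}{\beta_l})$ tail in the gradient).

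On $\Eps_j^c$ I split further. On the ``oracle confusion band'' $\{|s_j+e|<\gamma_j\}$ I use only $|\Delta_w|\le1$, but the Gaussian factor is now small, $|Y-\vdot{X}{\beta_1^*}|=|s_j+e|<\gamma_j$, and $\E[\,|\vdot{u}{X}|\mid\text{band}\,]=O(1)$, so the contribution is $\lesssim\gamma_j\cdot\PP_{\D_j}(|s_j+e|<\gamma_j)=\Theta(\gamma_j^2/R_{j1}^*)=\Theta(\log(R_{j1}^*k\rho_\pi)/R_{j1}^*)$---this is precisely why the final bound carries no $R_{max}$. On the ``estimate confusion band'' $\{|\vdot{X}{\Delta_1}|>\theta_j\}$ (relevant only when $D_m>1$), the same reasoning with $|Y-\vdot{X}{\beta_1^*}|\approx|\vdot{X}{\Delta_1}|$ of size $O(D_m)$ on the part where $w_1$ is not already tiny, and band-probability $O(\|\Delta_1\|/R_{j1}^*)=O(D_m/R_{j1}^*)$, produces the extra $c_4(D_m/R_{j1}^*)D_m$ term. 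On the large-noise event $\{|e|>\tau_j\}$ the Gaussian factor can be large, so I use Cauchy--Schwarz, $\|\cdot\|\le\PP(|e|>\tau_j)^{1/2}\,\E[\,|\vdot{u}{X}|^2(s_j+e)^2\,]^{1/2}\lesssim e^{-\Omega(\tau_j^2)}(R_{j1}^*\vee1)$, and $\tau_j=\Theta(\sqrt{\log(R_{j1}^*k\rho_\pi)})$ makes this again beat the $R_{j1}^*$ growth. For the $j=1$ term the residual is $e$, $w_1^*$ is near $1$, and $1-w_1^*$ is small unless some competitor $\D_l$ confuses with $\D_1$, so the same analysis is run with each of the $k-1$ competing components in the role of ``$j$'' and the results summed, yielding the $c_2k\log(k\rho_\pi)/R_{min}$ correction; the constant $c_1$ in the $D_m>1$ display absorbs the residual worst case where, because the estimate can be far, $w_1$ is only bounded away from $0$ and $1$ by a constant on part of the good event.

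The main obstacle is the joint calibration of the thresholds $\tau_j,\gamma_j,\theta_j$ (and their $D_m$-widened versions) so that, simultaneously, (i) $|\Delta_w|$ on $\Eps_j$ is genuinely $(R_{j1}^*k\rho_\pi)^{-\Omega(1)}$ small uniformly over the whole initialization ball and over all $k-1$ competitors, and (ii) the bad-event probabilities, once multiplied by the Gaussian-factor and $|\vdot{u}{X}|$ moments, collapse to the stated $1/(k\rho_\pi)$, $\log(R_{j1}^*k\rho_\pi)/R_{j1}^*$, and $D_m^2/R_{j1}^*$ pieces rather than anything growing with $R_{j1}^*$. A naive Cauchy--Schwarz on the confusion band, or failing to notice that $Y-\vdot{X}{\beta_1^*}$ is small exactly where the weights are ambiguous, reintroduces an $R_{max}$ factor; avoiding that is the whole point of the lemma, and it is also where the $k\rho_\pi\log^2(k\rho_\pi)$ shape of the SNR hypothesis gets consumed. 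A secondary nuisance is bounding the softmax sensitivity uniformly---controlling $\nabla_\beta w_1$ and the mean-value point $\bar\beta$ so that displacing $\beta$ within $cR_{min}/(k\rho_\pi\log k)$ shifts every relevant exponent by at most a constant, which is what lets the $w_1$ analysis inherit the bounds proved for $w_1^*$.
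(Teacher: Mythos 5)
Your overall strategy is the same as the paper's: decompose $B$ over the mixture components, split each $\E_{\D_j}$ into a good event (where $|\Delta_w|$ is driven exponentially/polynomially small by the SNR) and bad events (whose probabilities, multiplied by conditional moments, produce the $\log(R_{j1}^*k\rho_\pi)/R_{j1}^*$ and $D_m^2/R_{j1}^*$ terms), and pass to the mean-value form of $\Delta_w$ when $D_m\le 1$ to extract the extra factor of $D_m$. You also correctly isolate the reason the bound escapes $R_{max}$: on the confusion band the residual $Y-\vdot{X}{\beta_1^*}$ is itself small. However, two of your specific event choices do not support the computations you attach to them.

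First, your good event for $j\neq 1$ caps only $|\vdot{X}{\Delta_1}|$. To conclude that $w_1$ (not just $w_1^*$) is small under $\D_j$, you must compare component $1$'s residual $s_j+e-\vdot{X}{\Delta_1}$ against component $j$'s residual $e-\vdot{X}{\Delta_j}$; the latter must be small, which requires a cap on $|\vdot{X}{\Delta_j}|$ as well. (Your sentence about the residual ``$|e-\vdot{X}{\Delta_1}|$'' appears to conflate the two.) The paper's event is $\{4(|\vdot{X}{\Delta_1}|\vee|\vdot{X}{\Delta_j}|)\le|s_j|\}$ for exactly this reason. Second, and more substantively, your derivation of the $(c_4D_m/R_{j1}^*)D_m$ term does not go through with the event as you define it: for an absolute threshold $\theta_j=\Theta(\|\Delta_1\|\sqrt{\log(\cdot)})$, the probability of $\{|\vdot{X}{\Delta_1}|>\theta_j\}$ is a plain Gaussian tail, roughly $(R_{j1}^*k\rho_\pi)^{-\Omega(1)}$, not $O(\|\Delta_1\|/R_{j1}^*)$; and on that event $Y-\vdot{X}{\beta_1^*}=s_j+e$ is \emph{not} of size $|\vdot{X}{\Delta_1}|$, so the claimed $O(D_m)$ bound on the residual has no justification. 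The term $D_m^2/R_{j1}^*$ arises only if the bad event is the \emph{relative} one $\{|s_j|\le 4(|\vdot{X}{\Delta_1}|\vee|\vdot{X}{\Delta_j}|)\}$: then anti-concentration of $s_j\sim\mathcal N(0,(R_{j1}^*)^2)$ gives probability $O(D_m/R_{j1}^*)$, and on that event $|s_j|$ itself is bounded by $4(|\vdot{X}{\Delta_1}|+|\vdot{X}{\Delta_j}|)$, so its conditional second moment is $O(D_m^2)$ (via the conditional-moment lemma for Gaussians restricted to such events). Replacing your absolute-threshold margin event by this relative comparison repairs both issues simultaneously and brings your argument into line with the paper's.
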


We will only present how to bound errors from other components $j \neq 1$ in the main text, but the same idea is applied to all other cases. In defining what are the good samples, we need to consider two things: (i) the noise is not abnormally large, (ii) error from class mismatch is large enough to overcome the noise. It can be formalized into the following on three events in $j^{th}$ component for $j \neq 1$:
\begin{align}
    \label{eq:event_good}
    & \Eps_{j,1} = \{|e| \le \tau_j\}, \ \Eps_{j,2} = \{|\vdot{X}{\Delta_1}| \vee |\vdot{X}{\Delta_j}| \le |\vdot{X}{(\beta_j^* - \beta_1^*)}|/4 \}, \nonumber \\
    & \Eps_{j,3} = \{|\vdot{X}{\beta_j^* - \beta_1^*}| \ge 4\sqrt{2} \tau_j \}, \Eps_{j,good} = \Eps_{j,1} \cap \Eps_{j,2} \cap \Eps_{j,3}.
\end{align}
Here, $\tau_j$ is a threshold parameter that we specify carefully in the proof. When these three events occur at the same time, it is a good sample: weights given to first component for this sample is almost 0. In fact, we can show that $|\Delta_w| \le (\pi_1^* / \pi_j^*) \exp(-\tau_j^2)$. The errors from this event can be thus bounded by 
$$\|\E_{\D_j} [\Delta_w X(Y - \langle X, \beta_1^* \rangle) \indic_{good}]\| \le (\pi_1^* / \pi_j^*) \exp(-\tau_j^2) \sup_{s \in \mathbb{S}^{d-1}} \E_{\D_j} [|\langle X, s \rangle (Y - \langle X, \beta_1^* \rangle) |].$$
As the supremum is shown to be in order $O(R_{j1}^*)$, the choice of $\tau_j = \Theta \left( \sqrt{\log (R_{j1}^* k)} \right)$ here comes clear if we want the error less than $O\left((\pi_1^* / \pi_j^*) / k \right)$.

When one of the above events are violated, we have no control on the weights from wrong components. However, we can instead control the portion of these bad samples. For instance, consider $\Eps_1$ is violated, {\it i.e.}, measurement noise happens to be large. A Gaussian tail bound gives $P(\Eps_1^c) \le 2\exp(-\tau_j^2/2)$. This small probability is then used to bound the error from bad events,
\begin{align*}
    \|\E_{\D_j}[\Delta_w X(Y - \langle X, \beta_1^* \rangle) \indic_{\Eps_1^c}]\| &\le P(\Eps_1^c) \sup_{s \in \mathbb{S}^{d-1}} \E_{D_j}[|\vdot{X}{s} (Y - \vdot{X}{\beta_1^*})| | \Eps_1^c].
\end{align*}
We are left with bounding the expectation conditioned on $\Eps_1^c$, which turns out to be $O(R_{j1}^*)$. With the choice of $\tau_j = \Theta \left(\sqrt{\log(R_{j1}^* k \rho_\pi)} \right)$, this term is bounded by small value. The rest of the proof follows similarly. The complete proof for $D_m \ge 1$ can be found in Appendix \ref{Appendix:population_EM_B}. 

When $D_m \le 1$, we first apply the mean-value theorem to get a tighter error bound that is proportional to $D_m$. Then we can construct similar events that define good samples, and apply the same approach. Since $D_m \le 1$ case involves heavy algebraic manipulation, we defer the proof for this case until Appendix \ref{appendix:population_EM_B_II}.

\begin{remark}[Unknown mixing weights] 
\label{remark:unknown_mixing_weights}
Our analysis also shows that EM succeeds when it is simultaneously estimating mixing weights and parameters. Tensor-based methods can also provide a good estimation of these mixing weights along with the estimate of regression vectors when they are not known in advance. In order to handle unknown mixing weights, the only additional requirement is the initial guess of mixing weights to be close in relative scale, {\it i.e.}, $|\pi_j - \pi_j^*| \le c \pi_j^*$ for some small $c > 0$, which we set $1/2$ as a requirement for the initial estimator.

When $D_m \ge 1$, the only change in the proof is replacing $\pi_j^*$ with $\pi_j$, {\it i.e.,} using the estimator of weights instead of true weights. In this regime, it is enough to show that $\pi_j^+$ stays in the neighborhood of true mixing weights, i.e., $|\pi_j^+ - \pi_j^*| \le \pi_j^*/2$. When $D_m \le 1$, when we apply mean-value theorem, there is an additional term differentiated by mixing weights, which can also be bounded using the same approach. In this regime, there is also an improvement over mixing weights, i.e., $\max_j |\pi_j^+ - \pi_j^*|/\pi_j^* \le \gamma \max(\max_j |\pi_j - \pi_j^*|/\pi_j^*, D_m)$ for some $\gamma < 1/2$ under the SNR and initialization condition we assume. Proofs for unknown mixing weights are given in Appendix \ref{Appendix:population_EM_B} for case $D_m \ge 1$, and Appendix \ref{appendix:population_EM_B_II} for case $D_m \le 1$.
\end{remark}

\subsection{Bounding $A$}
We will give a lower bound on the minimum eigenvalue of $\E_\D[w_1 X X^{\top}]$. We first observe that 
\begin{align*}
    \E_\D[w_1 X X^{\top}] = \sum_j \pi_j^* \E_{\D_j} [w_j XX^{\top}] \succeq \pi_1^* \E_{\D_1} [w_1 XX^{\top}].
\end{align*}
Then the right hand side is lower bounded as follows:
\begin{lemma}
    \label{lemma:population_bound_a}
    There exists universal constants $c_1 \in (0, 1/2)$ and $c_2, c_3 > 0$, such that:
    \begin{equation*}
        \label{eq:bound_for_A}
        \lambda_{min}(\E_{\D_1}[w_1 XX^T]) \ge 1 - \left(c_1  + c_2 (k \log k) D_m / R_{min} + c_3 (k \log^{3/2} (k\rho_\pi)) / R_{min} \right).
    \end{equation*}
\end{lemma}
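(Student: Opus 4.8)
The plan is to turn this eigenvalue bound into a one-dimensional statement and then separate a ``good'' region of the sample space on which $1-w_1$ is uniformly tiny from a low-probability ``bad'' region, in the spirit of the events in \eqref{eq:event_good} used to bound $B$. Since $X\sim\mathcal{N}(0,I_d)$ gives $\E_{\D_1}[XX^\top]=I_d$ and $0\le w_1\le1$ pointwise, we have $\E_{\D_1}[w_1 XX^\top]=I_d-\E_{\D_1}[(1-w_1)XX^\top]$ with $\E_{\D_1}[(1-w_1)XX^\top]\succeq 0$, so
\[
  \lambda_{min}\big(\E_{\D_1}[w_1 XX^\top]\big)=1-\sup_{s\in\mathbb{S}^{d-1}}\E_{\D_1}\big[(1-w_1)\vdot{X}{s}^2\big].
\]
It therefore suffices to show that for every fixed unit vector $s$, $\E_{\D_1}[(1-w_1)\vdot{X}{s}^2]$ is at most the error term in the statement.

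Under $\D_1$ we have $Y-\vdot{X}{\beta_1}=e-\vdot{X}{\Delta_1}$ and, for $l\neq1$, $Y-\vdot{X}{\beta_l}=e-\vdot{X}{\beta_l^*-\beta_1^*}-\vdot{X}{\Delta_l}$; writing out the E-step ratio and lower-bounding its denominator by the $l=1$ term yields the pointwise bound
\[
  1-w_1\le\sum_{l\neq1}\frac{\pi_l^*}{\pi_1^*}\,\exp\!\Big(\tfrac12\big[(Y-\vdot{X}{\beta_1})^2-(Y-\vdot{X}{\beta_l})^2\big]\Big).
\]
For a threshold $\tau$ of order $\sqrt{\log(k\rho_\pi)}$ (up to further logarithmic factors) set $\Eps_1=\{|e|\le\tau\}$, $\Eps_{2,l}=\{|\vdot{X}{\Delta_1}|\vee|\vdot{X}{\Delta_l}|\le|\vdot{X}{\beta_l^*-\beta_1^*}|/4\}$, $\Eps_{3,l}=\{|\vdot{X}{\beta_l^*-\beta_1^*}|\ge16\tau\}$, and $\Eps_{good}=\Eps_1\cap\bigcap_{l\neq1}(\Eps_{2,l}\cap\Eps_{3,l})$. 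A short computation shows that on $\Eps_{good}$ every exponent above is at most $-c\tau^2$, hence $1-w_1\le k\rho_\pi e^{-c\tau^2}=:c_1$, a small universal constant once $\tau$ is chosen large enough; since $\E_{\D_1}[\vdot{X}{s}^2]=1$ this gives $\E_{\D_1}[(1-w_1)\vdot{X}{s}^2\indic_{\Eps_{good}}]\le c_1$.

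The crux is the bad region. By subadditivity of indicators,
\[
  \E_{\D_1}[(1-w_1)\vdot{X}{s}^2\indic_{\Eps_{good}^c}]\le\E_{\D_1}[\vdot{X}{s}^2\indic_{\Eps_1^c}]+\sum_{l\neq1}\E_{\D_1}[\vdot{X}{s}^2\indic_{\Eps_{2,l}^c}]+\sum_{l\neq1}\E_{\D_1}[\vdot{X}{s}^2\indic_{\Eps_{3,l}^c}].
\]
Since $X\perp e$, the first term equals $P(|e|>\tau)\le2e^{-\tau^2/2}$, negligible. For the other two, the key device -- needed to get a bound \emph{linear} in the small quantities $D_m/R_{min}$ and $1/R_{min}$ rather than the $\sqrt{P(\cdot)}$ a crude Cauchy--Schwarz would give -- is to decompose $s,\Delta_1,\Delta_l$ along $\hat u_l:=(\beta_l^*-\beta_1^*)/R_{l1}^*$ and its orthogonal complement, so that $\vdot{X}{\hat u_l}$, the perpendicular components of $\vdot{X}{s}$, and those of $\vdot{X}{\Delta_1},\vdot{X}{\Delta_l}$ become mutually independent one-dimensional Gaussians. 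For $\Eps_{3,l}^c=\{|\vdot{X}{\hat u_l}|<16\tau/R_{l1}^*\}$, Gaussian anticoncentration gives $\E_{\D_1}[\vdot{X}{s}^2\indic_{\Eps_{3,l}^c}]=O(\tau/R_{l1}^*)$, summing to $O(k\tau/R_{min})$, absorbed into $c_3 k\log^{3/2}(k\rho_\pi)/R_{min}$. For $\Eps_{2,l}^c$, writing $\vdot{X}{\Delta_1}=\tfrac{\vdot{\Delta_1}{\hat u_l}}{R_{l1}^*}\vdot{X}{\hat u_l}+\vdot{X}{\Delta_1^\perp}$ (and likewise for $\Delta_l$) and using $D_m/R_{l1}^*\le1/8$ from the initialization hypothesis of Theorem \ref{theorem:population_EM}, the event forces $|\vdot{X}{\hat u_l}|<8D_m\big(|\vdot{X}{\widehat{\Delta_1^\perp}}|\vee|\vdot{X}{\widehat{\Delta_l^\perp}}|\big)/R_{l1}^*$ with the right-hand factor independent of $\vdot{X}{\hat u_l}$; integrating the anticoncentration bound against this sub-Gaussian factor (and checking that the cross terms vanish by independence and symmetry) gives $\tilde{O}(D_m/R_{l1}^*)$ per component, hence $\tilde{O}(kD_m/R_{min})\le c_2(k\log k)D_m/R_{min}$ in total. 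Collecting $c_1$ and these three contributions yields the claimed bound.

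The main obstacle is precisely this last step: forcing the bad-event terms to be linear, not square-root, in $D_m/R_{min}$ and $1/R_{min}$ requires the direction decomposition above together with careful bookkeeping of the joint Gaussianity of $(\vdot{X}{s},\vdot{X}{\hat u_l},\vdot{X}{\Delta_1},\vdot{X}{\Delta_l})$, ensuring the cross-correlation terms contribute nothing harmful. A secondary difficulty is calibrating $\tau$: it must be large enough ($\tau^2\gtrsim\log(k\rho_\pi)$) that $1-w_1$ is a small constant on $\Eps_{good}$ and $P(\Eps_1^c)$ is negligible, while the resulting $O(k\tau/R_{min})$ term stays within the stated polylogarithmic budget, which may call for slightly different thresholds in the $\Eps_{2,l}$ and $\Eps_{3,l}$ families.
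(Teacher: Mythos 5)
Your proposal is correct and follows essentially the same route as the paper: write $\E_{\D_1}[w_1XX^\top]=I-\E_{\D_1}[(1-w_1)XX^\top]$, bound the good-event contribution by the exponentially small weight $1-w_1\le O(k\rho_\pi e^{-c\tau^2})$, and bound each bad event by $P(\Eps_i^c)\cdot\E[\vdot{X}{s}^2\mid\Eps_i^c]$ to stay linear in $D_m/R_{min}$ and $\tau/R_{min}$. The direction-decomposition/anticoncentration argument you sketch for the bad events is exactly the content of the paper's Lemmas \ref{lemma:angle_comparison} and \ref{lemma:threshold comparison} and Corollary \ref{corollary:key_bound}, which the paper simply invokes rather than reproving inline.
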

Thus, given good initialization $D_m/R_{min} = 1/\tilde{O}(k)$ and SNR $R_{min} = \tilde{\Omega} (k)$, we have $A \succeq (\pi_1^*/2) I$. The detailed proof including the lemma can be found in Appendix \ref{Appendix:population_EM_A}.

\paragraph{\it Proof of Theorem \ref{theorem:population_EM}.} From Lemma \ref{lemma:population_bound_b}, given $R_{min} = C k\rho_{\pi} \log^2 (k \rho_{\pi}) = \tilde{\Omega}(k \rho_{\pi})$ and $D_m = c R_{min}/(k\rho_{\pi} \log k)$, we have $\|B\| \le (\pi_1^* / 4) D_m$ with proper universal constant $C, c > 0$. Similarly, from Lemma \ref{lemma:population_bound_a}. we get $\|A^{-1}\|_{op} \le 2 / \pi_1^*$. Then
$$
    D_m^+ := \max_j \|\beta_j^+ - \beta_j^*\| \le \|A^{-1}\|_{op} \|B\|_2 \le D_m / 2.
$$
We can conclude that after $T = O(\log (\max_j \|\beta_j^{(0)} - \beta_j^*\| / \epsilon)$ iterations, we have $\max_j \|\beta_j^{(t)} - \beta_j^*\| = O(\epsilon)$, thus we get Theorem \ref{theorem:population_EM}. \hfill $\square$

\section{Finite Sample EM Analysis}
\label{sec:finite_sample}

In the finite sample version of EM, we can derive the estimation error at the next iteration from \eqref{eq:emupdate_finite}:
\begin{align*}
    \beta_1^+ - \beta_1^* = (\sum_i w_{1,i} X_i X_i^{\top})^{-1} (\sum_i w_{1,i} X_i(y_i - \vdot{X_i}{\beta_1^*})).
\end{align*}
To couple it with population EM, we rearrange and write as
\begin{align*}
    \beta_1^+ - \beta_1^* = \Big(\underbrace{ \frac{1}{n} \sum_i w_{1,i} X_i X_i^{\top}}_{A_n} \Big)^{-1} &\Bigg( \underbrace{\Big(\frac{1}{n} \sum_i w_{1,i} X_i(y_i - \vdot{X_i}{\beta_1^*}) - \E_\D[w_1 X(Y - \vdot{X}{\beta_1^*})] \Big)}_{e_B} \\
    &+ \underbrace{\left(\E_\D[w_1 X(Y - \vdot{X}{\beta_1^*})] - \E_\D[w_1^* X(Y - \vdot{X}{\beta_1^*})]\right)}_{B} \Bigg).
\end{align*}
In the analysis of population EM, we show that $B \le c_B D_m \pi_1^*$ for some universal constant $c_B$. Thus, we only have to bound $e_B$, which is the deviation of finite sample mean from true mean $B$. Then we analyze the norm of $A_n$ similarly by relating it to $A$. We focus on the concentration of sums in one-step iteration of EM. We assume that we use sample-splitting finite sample EM as we defined in Section \ref{section:main_results}, and we run EM for $T$ iterations. 

Before getting into our finite-sample analysis, we discuss briefly why we do not use a simpler standard concentration argument. Note that our target for giving a concentration bound is the random variable $w_1 X (Y - \vdot{X}{\beta_1^*})$. On its own, it is a sub-exponential random variable, since $|w_1| \le 1$, $X$ is sub-Gaussian (vector) with parameter $O(1)$, and $Y - \vdot{X}{\beta_1^*}$ is also sub-Gaussian with parameter at most $1 + R_{max}$. Thus, we can apply well-known sub-exponential tail bounds with parameter $O(R_{max})$, and a standard $1/2$ covering-net argument over the unit sphere to get a high probability guarantee. However, in this manner, we can only get a $O(R_{max} \sqrt{d/n})$ deviation of sample mean from true mean. 

Most previous results established on finite-sample EM analysis have this dependency on $R_{max}$ for statistical error \cite{balakrishnan2017statistical, yi2015regularized, klusowski2019estimating, yan2017convergence, zhao2018statistical}. In truth, however, this is an artifact of analysis and not a real phenomenon: the true statistical precision is $O(\sqrt{d/n})$ when noise is comparably less than $R_{max}$. For instance, in the extreme scenario, \cite{yi2016solving} established exact recovery guarantee of EM in a noiseless setting, though it has not been obvious how to generalize their analysis to involve some level of noise. 

Now we turn our attention to give a bound for $e_B$, which is given by the following lemma:
\begin{lemma}
    \label{lemma:finite_bound_b}
    Suppose SNR condition $R_{min} \ge C k \rho_\pi \log (k \rho_\pi)$ with sufficiently large $C>0$ and initialization condition $D_m \le c R_{min} / (k \rho_\pi \log(k \rho_\pi))$ for sufficiently small $c > 0$. With sample complexity 
    $$n \ge C' \left((k/\pi_{min}) (d/\epsilon^2) \log^2 (d k^2T/\delta) \right) \vee (k^2T/\delta)^{1/3} = \tilde{\Omega}((k/\pi_{min}) (d/\epsilon^2)),$$
    for some large absolute constant $C' > 0$, we get $\|e_B\| \le D_m \epsilon \pi_1^* + \epsilon \pi_1^*$, with probability at least $1 - \delta/kT$.
\end{lemma}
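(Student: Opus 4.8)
The plan is to control $e_B=\frac{1}{n}\sum_i Z_i-\E_\D[Z]$, where $Z_i:=w_{1,i}X_i(y_i-\vdot{X_i}{\beta_1^*})$, by a Bernstein-type concentration bound for $\frac1n\sum_i\vdot{Z_i}{s}$ around its mean $\vdot{B}{s}$, made uniform over unit vectors $s$ through a covering-net argument; sample-splitting is precisely what lets us treat the current deviations $\Delta_l=\beta_l-\beta_l^*$ as fixed and independent of the fresh batch. Note $\E_\D[Z]=B$ because $\E_\D[w_1^*X(Y-\vdot{X}{\beta_1^*})]=0$, the fixed-point identity already used in the population analysis.

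The first and main step is to show that, for each fixed unit $s$, the summand $\vdot{Z_i}{s}=w_{1,i}\vdot{X_i}{s}(y_i-\vdot{X_i}{\beta_1^*})$ is sub-exponential with parameter $O(\rho_\pi(1+D_m\sqrt{\log k}))$, crucially \emph{free of} $R_{max}$ — whereas the naive estimate only gives parameter $O(R_{max})$, since $y_i-\vdot{X_i}{\beta_1^*}$ is sub-Gaussian with parameter $1+R_{max}$. The point is that the EM weight $w_{1,i}$ adaptively suppresses exactly those samples whose residual is large because they come from a wrong component. Writing $t_l:=y_i-\vdot{X_i}{\beta_l}$ and letting $j=z_i$ be the (unknown) component of sample $i$, one shows
\begin{align*}
 w_{1,i}\,|t_1| \;\le\; \rho_\pi\,(|t_j|+1)
\end{align*}
by a dichotomy: if $|t_1|\le|t_j|$ this is immediate since $w_{1,i}\le 1$; if $|t_1|>|t_j|$ then component $1$ is not the best fit, so $w_{1,i}\le\rho_\pi\exp(-(t_1^2-t_j^2)/2)$, and since $x\mapsto xe^{-x^2/2}$ is decreasing for $x\ge 1$ we get $w_{1,i}|t_1|\le\rho_\pi e^{t_j^2/2}|t_1|e^{-t_1^2/2}\le\rho_\pi|t_j|$ (up to an $O(1)$ correction when $|t_j|<1$). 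Since $|t_j|=|e_i-\vdot{X_i}{\Delta_j}|\le|e_i|+D_m|\vdot{X_i}{\Delta_j/\|\Delta_j\|}|$ involves only the noise and the small current deviation, and $y_i-\vdot{X_i}{\beta_1^*}=t_1+\vdot{X_i}{\Delta_1}$, we obtain $w_{1,i}|y_i-\vdot{X_i}{\beta_1^*}|\le\rho_\pi\big(|e_i|+1+2D_m\max_l|\vdot{X_i}{\Delta_l/\|\Delta_l\|}|\big)$, a quantity that — over the random draw of the fresh batch — is sub-Gaussian with parameter $O(\rho_\pi(1+D_m\sqrt{\log k}))$ (the $\sqrt{\log k}$ coming from the maximum over $k$ components) and carries no $R_{max}$. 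Multiplying by the $O(1)$-sub-Gaussian $\vdot{X_i}{s}$ yields the claimed sub-exponential parameter.

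The second step is the concentration proper. With the parameter $b$ above and variance proxy $v:=\E_\D[\vdot{Z}{s}^2]$, Bernstein gives, for fixed $s$, $\Pr\big[|\tfrac1n\sum_i\vdot{Z_i}{s}-\vdot{B}{s}|>t\big]\le 2\exp(-cn\min(t^2/v,\,t/b))$; a union bound over a $1/2$-net of $\mathbb{S}^{d-1}$ (size $\le 5^d$) makes this uniform in $s$ — costing a factor $2$ on $\|e_B\|$ — and turns $\log(1/\delta)$ into $O(d+\log(kT/\delta))$ to reach the stated failure probability $\delta/(kT)$. To get the sharp mixing-weight dependence, i.e. $v\lesssim\pi_1^*+(\text{lower-order }D_m\text{ terms})$ rather than a crude $\rho_\pi^2$ bound, I would reuse the good/bad event decomposition of the population analysis: on $\Eps_{j,good}$ the weight $w_{1,i}$ is exponentially small (as shown before Lemma \ref{lemma:population_bound_b}), so its contribution to $v$ is negligible, while its complement has probability only $O(D_m/R_{j1}^*)$ for each $j\neq 1$, so that after weighting by $\pi_j^*$ and summing the off-diagonal terms are $O(D_m)$-order. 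Choosing $n\ge\tilde\Omega((k/\pi_{min})(d/\epsilon^2))$ — with the logarithmic factors and the $(k^2T/\delta)^{1/3}$ term handling the sub-exponential tail branch of Bernstein together with a residual low-probability event (e.g. crude control of $\max_i\|X_i\|$) — then forces both $\sqrt{v(d+\log(kT/\delta))/n}$ and $b(d+\log(kT/\delta))/n$ below $\epsilon\pi_1^*D_m+\epsilon\pi_1^*$.

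The main obstacle is clearly the first step: decoupling the statistical parameter of $w_{1,i}X_i(y_i-\vdot{X_i}{\beta_1^*})$ from $R_{max}$, which is exactly where earlier analyses incur the spurious $R_{max}$ factor. The resolution hinges on the easily-overlooked fact that $w_{1,i}$ decays like $\exp(-\tfrac12(t_1^2-t_{\min}^2))$ in the squared-residual gap, which always beats the at most linear growth of $|t_1|$; carrying this through uniformly across all regimes (sample from component $1$ or not, weight substantial or negligible, noise typical or atypical) and pairing it with a matching bound on $v$ is the technical heart of the lemma.
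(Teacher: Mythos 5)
Your first step is correct and is genuinely different from (and in some ways cleaner than) what the paper does: the pointwise dichotomy giving $w_{1,i}|t_1|\le C\rho_\pi(|t_j|+1)$, hence $w_{1,i}|y_i-\vdot{X_i}{\beta_1^*}|\le C\rho_\pi(|e_i|+1)+O(D_m)\,|\vdot{X_i}{\hat\Delta}|$, does hold on every sample (I checked the three cases $|t_1|\le|t_j|$, $|t_1|>|t_j|\ge 1$, and the $O(1)$ corrections), and it removes $R_{max}$ from the sub-exponential norm in one stroke. The paper never states such a global bound; it instead splits each sample into roughly $7k$ pieces via indicators of the events $\Eps_j\cap\Eps_{j,1}\cap\Eps_{j,2}\cap\Eps_{j,3}$ and their complements, bounds $w_1$ separately on each piece, and applies its bespoke Proposition \ref{lemma:indic_prob_decompose}, which concentrates the conditional sums and the Bernoulli counts of each event separately.

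The genuine gap is in your second step. For an \emph{unbounded} sub-exponential variable, the off-the-shelf Bernstein inequality is $\exp(-cn\min(t^2/b^2,\,t/b))$ with $b$ the sub-exponential \emph{norm}, not $\exp(-cn\min(t^2/v,\,t/b))$ with $v$ the true variance. With your $b=O(\rho_\pi(1+D_m))$ the Gaussian branch then yields a deviation of order $\rho_\pi\sqrt{d\log(\cdot)/n}$, which at the stated sample size $n\asymp(k/\pi_{min})(d/\epsilon^2)$ is $\epsilon\rho_\pi\sqrt{\pi_{min}/k}$ — this exceeds $\epsilon\pi_1^*$ whenever $\rho_\pi$ is large relative to $\sqrt{k}\,\pi_1^*/\sqrt{\pi_{min}}$ (e.g.\ $\pi_1^*=\pi_{min}$ small, one dominant component), so the crude bound does not close the lemma. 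To run Bernstein with the actual variance $v$ you must verify Bernstein's moment condition $\E|\vdot{Z}{s}|^q\le\tfrac{q!}{2}v\,b^{q-2}$ for all $q$, and for a variable that is a mixture of (rare event)$\times$(large conditional norm) pieces this condition holds only with the \emph{event-weighted} variance proxy $v'=\sum_{\text{events}}p_{\text{event}}K_{\text{event}}^2$, where $K_{\text{event}}$ is the conditional sub-exponential norm on that event — verifying which forces you to carry out essentially the same per-event bookkeeping (conditional norms via Lemmas \ref{lemma:angle_comparison}--\ref{lemma:threshold comparison}, event probabilities, the $\tau_j$ tuning, and the degenerate case where an event is so rare that $np<1$, which is where the $(k^2T/\delta)^{1/3}$ term comes from) that the paper packages into Proposition \ref{lemma:indic_prob_decompose} and Appendix \ref{appendix:concentrate_B}. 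You gesture at reusing the event decomposition ``for $v$,'' but the issue is not merely computing $v$; it is justifying that $v$ rather than $b^2$ governs the Gaussian branch of the tail, and as written that justification is missing. Repairing it is possible, but it converts your two-line concentration step back into the paper's event-wise analysis.
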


Our proof strategy to get a sharp concentration result is to partition random variables using indicator functions for disjoint events. Let $\Eps_{j}$ be the event that the the sample comes from the $j^{th}$ component and $j \neq 1$. Then, consider events as in \eqref{eq:event_good} in the population EM. We then decompose each sample using the indicator functions of these events. For simplicity of notation, let $W_{i} = w_{1,i} X_i (Y - \vdot{X_i}{\beta_1^*})$. We can decompose $W_i$ as, for instance,
\begin{align*}
    w_{1,i}X_i(y_i - \vdot{X_i}{\beta_1^*}) = &\sum_{j=1}^k \Big(W_{i} \indic_{\Eps_j \cap \Eps_{j,good}} + W_{i} \indic_{\Eps_j \cap \Eps_{j,1}^c} +
    W_{i} \indic_{\Eps_j \cap \Eps_{j,1} \cap \Eps_{j,2}^c} + W_{i} \indic_{\Eps_j \cap \Eps_{j,1} \cap \Eps_{j,2} \cap \Eps_{j,3}^c} \Big),
\end{align*}
using the definition of events defined in \eqref{eq:event_good}. Then we can provide a finite-sample analysis with the following proposition.
\begin{proposition}
    \label{proposition:indic_prob_union}
    Let $X$ be some random variable and consider a set of disjoint events $A_1, ..., A_m$, such that $P(\cup_{i=1}^m A_i) = 1$. Then,
    \begin{align*}
        P(|X - \E[X]| \ge t) \le \sum_{i=1}^m P(|X \indic_{A_i} - \E[X \indic_{A_i}]| \ge t_i),
    \end{align*}
    for $\sum_{i=1}^m t_i = t$. 
\end{proposition}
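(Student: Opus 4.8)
The plan is to decompose $X$ along the partition, compare $X - \E[X]$ with the sum of the "local" deviations $X\indic_{A_i} - \E[X\indic_{A_i}]$, and then apply a union bound. Since $\{A_i\}_{i=1}^m$ are disjoint with $P(\cup_i A_i) = 1$, we have $\indic_{A_1} + \dots + \indic_{A_m} = 1$ almost surely, hence $X = \sum_{i=1}^m X\indic_{A_i}$ a.s., and taking expectations (assuming integrability, which holds in our application since $W_i$ is sub-exponential) $\E[X] = \sum_{i=1}^m \E[X\indic_{A_i}]$. Subtracting, I get the pointwise identity
\begin{align*}
    X - \E[X] = \sum_{i=1}^m \big(X\indic_{A_i} - \E[X\indic_{A_i}]\big).
\end{align*}

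The second step is a standard "splitting of the target $t$" argument. Write $Z_i := X\indic_{A_i} - \E[X\indic_{A_i}]$. If $|\sum_i Z_i| \ge t$ with $t = \sum_i t_i$, then it cannot be that $|Z_i| < t_i$ for every $i$ simultaneously, because that would give $|\sum_i Z_i| \le \sum_i |Z_i| < \sum_i t_i = t$ by the triangle inequality, a contradiction. Hence the event $\{|X - \E[X]| \ge t\}$ is contained in $\cup_{i=1}^m \{|Z_i| \ge t_i\}$. Applying the union bound over these $m$ events yields exactly
\begin{align*}
    P(|X - \E[X]| \ge t) \le \sum_{i=1}^m P\big(|X\indic_{A_i} - \E[X\indic_{A_i}]| \ge t_i\big),
\end{align*}
which is the claim.

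There is essentially no obstacle here — the statement is a soft, purely measure-theoretic fact, and the only thing to be mildly careful about is integrability of $X$ (so that $\E[X\indic_{A_i}]$ and $\E[X]$ are well defined and the decomposition of the mean is valid); in the intended application $X = w_{1,i}X_i(y_i - \vdot{X_i}{\beta_1^*})$ is sub-exponential, so this is immediate. The real work of the paper is not in this proposition but in choosing the thresholds $t_i$ so that each $P(|X\indic_{A_i} - \E[X\indic_{A_i}]| \ge t_i)$ is small: the "good-event" summand is controlled by an exponentially small bound on $|\Delta_w|$ on $\Eps_{j,good}$ together with a concentration inequality for a bounded-variance sum, while the "bad-event" summands use the smallness of $P(\Eps_{j,\ell}^c)$ to absorb the $O(R_{\max})$-scale fluctuations. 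That allocation, carried out in the proof of Lemma \ref{lemma:finite_bound_b}, is where the delicate estimates live; Proposition \ref{proposition:indic_prob_union} is just the bookkeeping device that licenses treating each event separately.
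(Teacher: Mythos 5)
Your proof is correct and matches the paper's treatment: the paper dismisses this proposition as "a simple restatement of the elementary union bound," and your argument — pointwise decomposition $X=\sum_i X\indic_{A_i}$, the observation that $|X-\E[X]|\ge t$ forces $|X\indic_{A_i}-\E[X\indic_{A_i}]|\ge t_i$ for some $i$, then a union bound — is exactly that argument made explicit. The integrability caveat you note is a fine (and harmless) addition.
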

This is a simple restatement of the elementary union bound. It tells us that we can bound tail probabilities of decomposed random variables separately, and then collect them. If for all $i$, $P(|X\indic_{A_i} - \E[X\indic_{A_i}]| \ge t_i) \le \delta / m$, then $P(|X - \E[X] \ge t) \le \delta$. Note that this decomposition is only for the analysis purpose and does not affect the practical implementation of the EM algorithm.

The next proposition is the key ingredient for giving a sharp concentration on each decomposed random variable.
\begin{restatable}{lem}{probdecomp}
\begin{proposition}
    \label{lemma:indic_prob_decompose}
    Let $X$ be some random $d$-dimensional vector, and $A$ be the event with $p = P(A) > 0$. Let random variable $Y = X|A$, {\em i.e.}, $X$ conditioned on event $A$, and $Z = \indic_{X \in A}$. Let $X_i, Y_i, Z_i$ be the i.i.d. samples from corresponding distributions. Then,
    
    \begin{align}
        \label{eq:conditional_decomposed_prob}
        P\Bigg( \Big\|\frac{1}{n} \sum_{i=1}^{n} X_i \indic_{X_i \in A} - &\E[X \indic_{X \in A}] \Big\| \ge t \Bigg) \le \max_{m \le n_e} P \left(\frac{1}{n} \left\|\sum_{i=1}^{m} (Y_i - \E[Y]) \right\| \ge t_1 \right) \nonumber \\
        & + P\left(\|\E[Y]\| \left|\frac{1}{n} \sum_{i=1}^n Z_i - p\right| \ge t_2 \right)
        + P\left(\left|\sum_{i=1}^n Z_i\right| \ge n_e+1\right),
    \end{align}
    for any $0 \le n_e < n$ and $t_1 + t_2 = t$.
\end{proposition}
\end{restatable}
\begin{remark}
    The decomposition of \eqref{eq:conditional_decomposed_prob} is key for obtaining right error bounds for the statistical error. The overall idea is natural. Intuitively, statistical error in empirical mean of $X_i \indic_{X_i \in A}$ consists of two terms: deviation of conditional sums of $X_i | A$, and deviation of the number of bad samples, from their respective expectations. As $X_i\indic_{X_i \in A}$ will be 0 for most samples, we expect the empirical mean will be also very small (and thus, small statistical error). The proposed decomposition obtains a tight error bound by taking the rare probability into account by counting the bad samples first. Proofs for Proposition \ref{lemma:indic_prob_decompose} is in Appendix \ref{appendix:technical_lemmas}. 
\end{remark}

Proposition \ref{lemma:indic_prob_decompose} helps us to accurately control the concentration of random vectors under different events: in a major event where samples are good, we know that $w_{i,1}$ is almost always exponentially small $O(\exp(-\tau_j^2))$ when the sample did not come from the first model. Therefore, norm of $W_i$ conditioned on good event can be controlled with tiny $w_{i,1}$ (see Appendix \ref{appendix:concentrate_B} for precise construction). 

On bad events, such as when measure noise happens to be very large, the (sub-exponential) norm of $W_{i}$ may be as large as $R_{max}$, since the weights of the wrong components could be away from zero. Fortunately, we can survive from these errors due to low chance of bad events given large SNR and good initialization. It enables us to choose $n_e$ small enough while suppressing $P(|\sum_i^n Z_i| \ge n_e + 1)$, and $n_e / n$ cancels out large norm of $W_{i}$ conditioned on bad but rare events. This technique is critical not only for removing the dependency on $R_{max}$, but also obtaining as small dependency on $k$ and $\pi_{min}$ as possible.

We see in the proof that $W_{i}$ conditioned on each event is another sub-exponential random vectors with different sub-exponential norm. Therefore, we can give a sharp concentration bound on every decomposed random variable separately. The full proof of concentration results including Lemma \ref{lemma:finite_bound_b} is in Appendix \ref{appendix:concentration_finite_sample}.

\paragraph{\it Proof of Theorem \ref{theorem:finite_EM}.} Given $\|e_B\| \le D_m \epsilon \pi_1 + \epsilon \pi_1$ and results from population EM, we are left with bounding $A_n := (1/n \sum_{i\in[n]} w_{i,j} X_i X_i^{\top})$. This task can be achieved via a direct application of standard concentration arguments for random matrices \cite{vershynin2010introduction}. When $(1/n \sum_{i\in[n]} w_{i,1} X_i X_i^{\top})$ concentrates well around $\E_{\D}[w_1 XX^{\top}]$ in operator norm, we can conclude that the lower bound of minimum eigenvalue of sample covariance is also lower-bounded by $\pi_1^*/2$, which implies $\|A_n^{-1}\|_{op} \le 2/\pi_1^*$ (see Appendix \ref{appendix:concentrate_A} for details).

Then combining two results, we can conclude that
$$
    \|A_n^{-1}\|_{op} \|B_n\| \le 2{\pi_1^*}^{-1} (c_B D_m \pi_1^* + c_1 D_m \epsilon \pi_1^* + c_2 \epsilon \pi_1^*) \le \gamma_n D_m  + O(\epsilon),
$$
for some $\gamma_n < 1/2$ and universal constant $c_B, c_1, c_2$. This result holds for $1^{st}$ component with probability $1 - \delta/kT$. We can get a same result for other components, and thus we can take a union bound over $k$ components. Thus, we have shown that with probability at least $1 - \delta/T$
$$
    \max_j \| \beta_j^+ - \beta_j^* \| \le \gamma_n \max_j \| \beta_j - \beta_j^* \|+ O(\epsilon).
$$
Iterating over $T$ iterations yields Theorem \ref{theorem:finite_EM}. \hfill $\square$ 

\begin{remark}[Extension to unknown mixing weights with finite-sample EM]
    Proof for mixing weights are also based on the same idea using Proposition \ref{lemma:indic_prob_decompose}. Mixing weights will also be well concentrated in relative scale, {\it i.e.,} $|\tilde{\pi}_1 - \pi_1| \le O(\epsilon) \pi_1^*$. In the finite sample regime, mixing weights might not be exactly recovered even if noise power goes to 0. This does not conflict with the exact recovery guarantee for $\beta$ whose statistical error is proportional to $\sigma$ that goes to 0, since when $\max_j \|\tilde{\beta}_j - \beta_j^*\| \ge \sigma$ or $D_m \ge 1$, we do not require mixing weights to be very close (we only require $|\pi_1 - \pi_1^*| \le \pi_1^*/2$) to get an improved estimator after one EM iteration. In other words, we do not require exact value of mixing weights $\pi$ in order to get exact regression parameters $\beta$. Note that in the noiseless setting, we are always in $D_m \ge 1$ regime. Proof for concentration of mixing weights are given in Appendix \ref{appendix:concentrate_weights}.
\end{remark}

\section{Conclusion and Future Works}
In this paper, we provided local convergence guarantees of both population and (sample-splitting) finite-sample EM algorithm for MLR with general $k$ components. For our finite-sample based EM analysis, we decomposed a single random variable into multiple random variables using indicator functions, each of which corresponds to different event. With this strategy, we were able to give a near-optimal statistical error that does not depend on the distances between regression parameters, $R_{min}$ or $R_{max}$. We believe our technique is applicable to other problem settings such as GMM, and other local heuristic algorithms to get an improved statistical error. 

While we studied the local convergence of the EM algorithm in high SNR regime, the question whether EM converges under lower SNR condition, \textit{i.e.} $R_{min} = o(k)$ regime, is still widely open, even when we assume we start from a very good initialization. We do not even know that under this low SNR regime, whether we can recover all parameters with polynomial number of samples (in $k$). Finding a polynomial-time algorithm to find a good initialization is also another challenging problem, if the linear independence assumption between regression vectors does not hold (\textit{e.g.} $d < k$). Studying the EM algorithm in more general settings, \textit{e.g.} with unknown and different covariance for $X$ in each linear model, will be also an interesting future direction.

\bibliographystyle{unsrt}
\bibliography{main}

\begin{appendices}

\section{Proofs for Population EM}
\label{Appendix:population_EM}
Throughout the proof, we will use $C, c, c', c_{any}$ without explicit mention whenever we need universal constants to bound any terms. 

Before getting into detailed proofs, we state two essential lemmas from \cite{balakrishnan2017statistical, yi2016solving}.

\begin{restatable}{lem}{anglecomp}
\begin{lemma} [Lemma 6, 7 in \cite{yi2016solving}]
    \label{lemma:angle_comparison}
    Let $X \sim \mathcal{N}(0,I_d)$. For any fixed vector $v \in \mathbb{R}^d$, and a set of vectors $u_1, ... ,u_{k-1} \in \mathbb{R}^d$ such that $\|u_j\| \ge \|v\|$ for all $j$, we define 
    $$
        \Eps := \{|\vdot{X}{u_j}| \ge |\vdot{X}{v}|, \ \forall j=1,...,k-1 \}.
    $$
    Then, 
    \begin{equation}
        \label{eq:aux_angle_1}
        P(\Eps^c) \le \sum_{j=1}^{k-1} \frac{\|v\|}{\|u_j\|}.
    \end{equation}
    
    Furthermore, for any unit vector $s \in \mathbb{S}^{d-1}$ and for any $p \ge 1$, we have
    \begin{align}
        \label{eq:aux_angle_2}
        \E[|\vdot{X}{s}|^p | \Eps^c] &\le k 2^p \Gamma(1 + p/2),
    \end{align}
    where $\Gamma$ is a gamma function.
\end{lemma}
\end{restatable}

\begin{restatable}{lem}{thresholdcomp}
    \begin{lemma} [Lemma 9(v) and 10 in \cite{balakrishnan2017statistical}]
        \label{lemma:threshold comparison}
        Let $X \sim \mathcal{N}(0,I_d)$. For any set of fixed vectors $u_1, ..., u_k \in \mathbb{R}^d$, and fixed constants $\alpha_1, ..., \alpha_k > 0$, define
        $$
            \Eps := \{|\vdot{X}{u_j}| \ge \alpha_j, \ \forall j=1,...,k \}.
        $$
        Then, 
        \begin{equation}
            \label{eq:aux_threshold_1}
            P(\Eps^c) \le \sum_{j=1}^k \frac{\alpha_j}{\|u_j\|}.
        \end{equation}
        
        Furthermore, for any unit vector $s \in \mathbb{S}^{d-1}$ and for $p \ge 1$, we have
        \begin{align}
            \label{eq:aux_threshold_2}
            \E[|\vdot{X}{s}|^p | \Eps^c] &\le k 2^p \Gamma((1 + p)/2) / \sqrt{\pi}.
        \end{align}
    \end{lemma}
\end{restatable}

Proofs of these lemmas can be found in Appendix \ref{appendix:technical_lemmas}. As a consequence of Lemma \ref{lemma:angle_comparison}, \ref{lemma:threshold comparison}, we can show the following corollary.
\begin{corollary}
    \label{corollary:key_bound}
    Under the setting in either Lemma \ref{lemma:angle_comparison} or \ref{lemma:threshold comparison}, we have
    \begin{align*}
        \sqrt{\E[|\vdot{X}{s}|^2 | \Eps^c]} \le C \sqrt{\log k},
    \end{align*}
    for some universal constant $C > 0$.
\end{corollary}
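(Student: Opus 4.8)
The plan is to turn the moment bounds of Lemma~\ref{lemma:angle_comparison} and Lemma~\ref{lemma:threshold comparison} into a sub-Gaussian-type tail bound for the conditional random variable $V := |\vdot{X}{s}|$ given $\Eps^c$, and then integrate that tail to control its second moment. Throughout we may assume $k \ge 2$, since for $k = 1$ the conditioning event $\Eps^c$ is empty in both lemmas and there is nothing to prove.

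First I would record that, in either setting, the relevant lemma gives for every real $p \ge 1$ a bound of the form $\E[V^p \mid \Eps^c] \le k \cdot 2^p \Gamma(1 + p/2)$ (in the threshold case one has $\Gamma((1+p)/2)/\sqrt{\pi}$ instead, which differs only by a polynomial-in-$p$ factor, so the argument goes through unchanged). Stirling's formula supplies a universal constant $C_1$ with $\Gamma(1+p/2) \le C_1 \sqrt{p}\,(p/(2e))^{p/2}$ for all $p \ge 1$, and hence $\E[V^p \mid \Eps^c] \le C_1\, k\, \sqrt{p}\,(2p/e)^{p/2}$. Applying Markov's inequality to $V^p$ then yields $P(V \ge t \mid \Eps^c) \le C_1\, k\, \sqrt{p}\,(2p/e)^{p/2}\, t^{-p}$ for every $t > 0$ and every $p \ge 1$.

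Next I would optimize over $p$: for $t$ above an absolute threshold $t_{\min}$, choose $p = t^2/a^2$ with $a$ a sufficiently large absolute constant. Then $p \ge 1$, the factor $(2p/e)^{p/2} t^{-p} = (2/(ea^2))^{p/2}$ is at most $2^{-p}$, and the leftover $\sqrt{p}\,2^{-p/2}$ is bounded, so one gets $P(V \ge t \mid \Eps^c) \le C_2\, k\, \exp(-c_0 t^2)$ for all $t \ge t_{\min}$, with $C_2, c_0 > 0$ universal. Finally, writing $\E[V^2 \mid \Eps^c] = \int_0^\infty 2t\, P(V \ge t \mid \Eps^c)\, dt$ and splitting the integral at $t_1 := \max\{t_{\min}, \sqrt{(\log k)/c_0}\}$: on $[0,t_1]$ bound the probability by $1$ to get $t_1^2 = O(\log k)$, while on $[t_1, \infty)$ use the tail bound, where $t_1^2 \ge (\log k)/c_0$ makes the Gaussian integral at most $(C_2 k/c_0)\exp(-c_0 t_1^2) \le C_2/c_0 = O(1)$ (the factor $k$ is cancelled by $\exp(-\log k) = 1/k$). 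Thus $\E[V^2 \mid \Eps^c] \le C\log k$ and taking square roots gives the corollary.

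The argument is conceptually routine---it is the standard passage from uniform moment control to a sub-Gaussian tail---but with the twist that the moment bound carries an extra factor $k$ at small $p$, which is exactly what forces the scale $\sqrt{\log k}$ rather than $O(1)$. The only points needing care are bookkeeping ones: checking that every constant is genuinely independent of $k$, $d$, and $p$; allowing real (non-integer) exponents $p$, which is legitimate because both lemmas are stated for all $p \ge 1$; and ensuring $p \ge 1$ on the range where the tail bound is invoked, which holds once $t_{\min}$ is taken large enough. I expect this last constant-chasing to be the most error-prone part, though not conceptually hard.
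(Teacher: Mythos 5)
Your proof is correct, but it follows a different route from the paper's. The paper dispatches the corollary in three lines via H\"older (Lyapunov) interpolation: it bounds $\E[|\vdot{X}{s}|^2 \mid \Eps^c] \le \E[|\vdot{X}{s}|^{2p} \mid \Eps^c]^{1/p}$, plugs in the lemmas' moment bound $k\,4^p\Gamma(1+p)$, and chooses $p = \log k$ so that $k^{1/p} = e$ while $\Gamma(1+p)^{1/p} \le p+1 = O(\log k)$; the $\sqrt{\log k}$ thus emerges directly from evaluating one high moment at order $\sim \log k$. You instead convert the uniform moment control into a sub-Gaussian tail $P(|\vdot{X}{s}| \ge t \mid \Eps^c) \le C_2\, k\, e^{-c_0 t^2}$ by Markov's inequality optimized over $p \asymp t^2$, and then integrate the tail, splitting at $t_1 \asymp \sqrt{\log k}$ so that the prefactor $k$ is absorbed by $e^{-c_0 t_1^2}$. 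Both arguments exploit the same underlying fact (moments at order $\sim \log k$ neutralize the factor $k$), and your constant-chasing checks out: the Stirling bound, the choice $p = t^2/a^2$ with $a$ large enough to keep $p \ge 1$ and make the base smaller than $1$, and the split integral all work. The paper's route is shorter and stays entirely at the level of a single moment; yours is longer but yields a reusable intermediate product, namely an explicit conditional sub-Gaussian tail bound, and is arguably more robust if one later needs exponents other than $2$. One small inaccuracy: for $k=1$ the event $\Eps^c$ is empty only in the setting of Lemma \ref{lemma:angle_comparison} (no comparison vectors), not in that of Lemma \ref{lemma:threshold comparison}, where $\Eps^c = \{|\vdot{X}{u_1}| < \alpha_1\}$ can have positive probability; but this degenerate case lies outside the paper's regime (and the paper's own choice $p=\log k$ likewise implicitly assumes $k$ bounded away from $1$), so it does not affect the substance of your argument.
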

\begin{proof}
    We show for Lemma \ref{lemma:angle_comparison} first. By Holder's inequality,
    \begin{align*}
        \E[|\vdot{X}{s}|^2 | \Eps^c] &\le \E[|\vdot{X}{s}|^{2p} | \Eps^c]^{1/p} \E[1 | \Eps^c]^{1/q},
    \end{align*}
    for any $p, q \ge 1$ such that $1/p + 1/q = 1$. We can take $p$ as arbitrary as we want, say $p = \log k$, in order to get rid of $k$ factor in equation \eqref{eq:aux_angle_2}. Then,
    \begin{align*}
        \E[|\vdot{X}{s}|^2 | \Eps^c] &\le \E[|\vdot{X}{s}|^{2p} | \Eps^c]^{1/p} \E[1 | \Eps^c]^{1/q} \le k^{1/p} (4^p \Gamma(1+p))^{1/p} \\
        &\le 4e (\Gamma(1+p)^{1/2p})^{2} \le C \log k,
    \end{align*}
    for some universal constant $C > 0$. We used the fact that $\Gamma(1+p) \le (p+1)^{p}$. The proof of Lemma \ref{lemma:threshold comparison} can be written similarly. 
\end{proof}

\begin{remark}
\label{remark:technical_lemmas}
These lemmas are modified from \cite{balakrishnan2017statistical, yi2016solving} to involve multiple components and higher order moments. They are also used in proofs of finite-sample EM, to find sub-exponential norm \cite{vershynin2010introduction} of random variables conditioned on specific events, as boundedness of any $p^{th}$ moment by Gamma function implies sub-Gaussianity. We conjecture that the $k$ factor in \eqref{eq:aux_angle_2} and \eqref{eq:aux_threshold_2} are sub-optimal, and it will improve the SNR condition by $O(\log k)$ if resolved.
\end{remark}

\subsection{Bounding $B$}
\label{Appendix:population_EM_B}
Since $\|B\| = \sup_{s \in \mathbb{S}^{d-1}} \E_D[w_1 \vdot{X}{s} (Y - \vdot{X}{\beta_1^*})]$, for any fixed unit vector $s$, we bound 
\begin{align*}
    B_s &:= |\E_D[w_1 \vdot{X}{s} (Y - \vdot{X}{\beta_1^*})]| \\
    &= |\E_D[w_1 \vdot{X}{s} (Y - \vdot{X}{\beta_1^*})] - \E_D[w_1^* \vdot{X}{s} (Y - \vdot{X}{\beta_1^*})]| \\
    &= |\E_D[\Delta_{w} \vdot{X}{s} (Y - \vdot{X}{\beta_1^*})]| \\
    &\le \pi_1^* \underbrace{|\E_{\D_1}[\Delta_{w} \vdot{X}{s} (Y - \vdot{X}{\beta_1^*})]|}_{B_1} + \sum_{j\neq 1} \pi_j^* \underbrace{|\E_{\D_j}[\Delta_{w} \vdot{X}{s} (Y - \vdot{X}{\beta_1^*})]|}_{B_j}.
\end{align*}
We will then bound $B_1$ and $B_j$ separately, as $B_1$ is the error term from its own component and $B_j$ is the error from other components.

Term in $B_j$ can be decoupled as
\begin{align*}
    B_j &= |\E_{\D_j}[\Delta_{w} \vdot{X}{s} \vdot{X}{\beta_j^* - \beta_1^*}] + \E_{\D_j}[\Delta{w} \vdot{X}{s} e]| \\
    &\le \underbrace{|\E_{\D_j}[\Delta_{w} \vdot{X}{s} \vdot{X}{\beta_j^* - \beta_1^*}]|}_{b_1} + \underbrace{|\E_{\D_j}[\Delta{w} \vdot{X}{s} e]|}_{b_2}.
\end{align*}
Then for each $j = 1, ..., k$, we give a bound for $B_j$. We divide the cases between $\max_j \|\Delta_j\| > 1$ and $\max_j \|\Delta_j\| \le 1$. The proof for $\|\Delta_j\| \le 1$ will be given in Appendix \ref{appendix:population_EM_B_II}. We use $D_m$ to denote $\max_j \|\Delta_j\|$ to simplify the notations. We also define $\rho_{jl} := \pi_l^* / \pi_j^*$ for $j \neq l$. 

\paragraph{Case I. $\max_j \| \Delta_j \| > 1$:} 
\paragraph{\bm $j \neq 1:$} To bound first term, define four events as follows:
\begin{align*}
    & \Eps_1 = \{|\vdot{X}{\beta_j^* - \beta_1^*}| \ge 4\sqrt{2} \tau_j \} \\
    & \Eps_2 = \{4(|\vdot{X}{\Delta_j}| \vee |\vdot{X}{\Delta_1}|) \le |\vdot{X}{\beta_j^* - \beta_1^*}| \} \\
    & \Eps_3 = \{|e| \le \tau_j\} \\
    & \Eps = \Eps_1 \cap \Eps_2 \cap \Eps_3.
\end{align*}
When all four events happen at the same time, it is a good sample: weights given to this sample is almost 0, as it comes from component $j$. For other events, we bound the probability of each event with respect to $\Delta_{j}$ and $\tau_j$. We decide threshold parameter $\tau_j$ at the end of the stage.
\begin{align*}
    b_1 &\le |\E_{\D_j}[\Delta_{w} \vdot{X}{s} \vdot{X}{\beta_j^* - \beta_1^*} \indic_{\Eps}]| + |\E_{\D_j}[\Delta_{w} \vdot{X}{s} \vdot{X}{\beta_j^* - \beta_1^*} \indic_{\Eps_1^c \cap \Eps_2}]| \\
    &+ |\E_{\D_j}[\Delta_{w} \vdot{X}{s} \vdot{X}{\beta_j^* - \beta_1^*} \indic_{\Eps_2^c}]| + |\E_{\D_j}[\Delta_{w} \vdot{X}{s} \vdot{X}{\beta_j^* - \beta_1^*} \indic_{\Eps_3^c}]|.
\end{align*}

\begin{enumerate}
    \item Event $\Eps$: Observe the value of the weight $w_1$. First note that 
    \begin{align*}
        (\vdot{X}{\beta_j^* - \beta_j} + e)^2 &\le 2|\vdot{X}{\Delta_j}|^2  + 2e^2 \le |\vdot{X}{\beta_j^* - \beta_1^*}|^2/8 + 2e^2 \\
        (\vdot{X}{\beta_j^* - \beta_1} + e)^2 &\ge |\vdot{X}{\beta_j^* - \beta_1^*} - \vdot{X}{\Delta_1}|^2/2 - e^2 \ge (9/32)  |\vdot{X}{\beta_j^* - \beta_1^*}|^2 - e^2. 
    \end{align*}
    Then,
    \begin{align}
        \label{eq:w_1_bound}
        w_1 &\le \frac{\pi_1 \exp(-(Y - \vdot{X}{\beta_1})^2/2)}{\pi_1 \exp(-(Y - \vdot{X}{\beta_1})^2/2) + \pi_j \exp(-(Y - \vdot{X}{\beta_j})^2/2)}  \nonumber \\
        &= \frac{\pi_1 \exp(-(\vdot{X}{\beta_j^* - \beta_1} + e)^2/2)}{\pi_1 \exp(-(\vdot{X}{\beta_j^* - \beta_1} + e)^2/2) + \pi_j \exp(-(\vdot{X}{\beta_j^* - \beta_j} + e)^2/2)} \nonumber  \\
        &\le (\pi_1 / \pi_j) \exp \left(((\vdot{X}{\beta_j^* - \beta_j} + e)^2-(\vdot{X}{\beta_j^* - \beta_1} + e)^2) /2 \right) \nonumber  \\
        &\le (\pi_1 / \pi_j) \exp \left((-5|\vdot{X}{\beta_j^* - \beta_1^*}|^2/32 + 3e^2)/2 \right) \nonumber  \\
        &\le (\pi_1 / \pi_j) \exp(-\tau_j^2).
    \end{align}
    Similarly, we get
    \begin{align*}
        w_1^* &\le (\pi_1^* / \pi_j^*) \exp \left((e^2-(\vdot{X}{\beta_j^* - \beta_1^*} + e)^2) /2 \right) \\
        &\le (\pi_1^* / \pi_j^*) \exp \left(\left(e^2 - (|\vdot{X}{\beta_j^* - \beta_1^*}| - |e|)^2 \right) / 2 \right) \\
        &\le (\pi_1^* / \pi_j^*) \exp \left((\tau_j^2 - 16\tau_j^2)/2 \right) \\
        &\le (\pi_1^* / \pi_j^*) \exp \left(- \tau_j^2 \right).
    \end{align*}
    
    Note that due to our initialization condition for $\pi_j$ for all $j$, $\rho_{j1} = \pi_1^* / \pi_j^* \le 3 \pi_1 / \pi_j$.
    
    Thus, $|\Delta_w| \le 3 \rho_{j1} \exp\left( -\tau_j^2 \right)$. From this inequality, we can get
    \begin{align*}
        |\E_{\D_j}[\Delta_{w} \vdot{X}{s} \vdot{X}{\beta_j^* - \beta_1^*} \indic_{\Eps}]| &\le 3 \rho_{j1} \exp \left(-\tau_j^2\right) \E_{\D_j}[|\vdot{X}{s} \vdot{X}{\beta_j^* - \beta_1^*}|] \\
        &\le 3 \rho_{j1} \exp \left(-\tau_j^2 \right) R_{j1}^*,
    \end{align*}
    where the last inequality comes from Cauchy-Schwartz inequality.

    \item Event $\Eps_1^c \cap \Eps_2$: In this case, from Lemma \ref{lemma:threshold comparison}, 
    $$
        P(\Eps_1^c \cap \Eps_2) \le P(\Eps_1^c) \le \frac{4\sqrt{2} \tau_j}{\| \beta_j^* - \beta_1^* \|}.
    $$
    Then, we proceed as
    \begin{align*}
        |\E_{\D_j}[\Delta_{w} \vdot{X}{s} \vdot{X}{\beta_j^* - \beta_1^*} \indic_{\Eps_1^c \cap \Eps_2}]| &\le 4\sqrt{2} \tau_j \E_{\D_j}[|\Delta_{w} \vdot{X}{s} \indic_{\Eps_1^c \cap \Eps_2}|] \\
        &\le 4\sqrt{2} \tau_j \E_{\D_j}[|\Delta_{w} \vdot{X}{s} \indic_{\Eps_1^c}|] \\
        &\le 4\sqrt{2} \tau_j \sqrt{\E[\Delta_w^2|\Eps_1^c]} \sqrt{\E[\vdot{X}{s}^2|\Eps_1^c]} P(\Eps_1^c) \\
        &\le 4\sqrt{2} \tau_j P(\Eps_1^c) \le \frac{32 \tau_j^2}{R_{j1}^*}.
    \end{align*}
    
    \item Event $\Eps_2^c$: Bound it as follows:
    \begin{align*}
        |\E_{\D_j}[\Delta_{w} \vdot{X}{s} \vdot{X}{\beta_j^* - \beta_1^*} &\indic_{\Eps_2^c}]| \le \sqrt{\E[\Delta_w^2 \vdot{X}{s}^2 | \Eps_2^c]} \sqrt{\E[\vdot{X}{\beta_j^* - \beta_1^*}^2| \Eps_2^c]} P(\Eps_2^c).
    \end{align*}
    Under this event, we note that $$\vdot{X}{\beta_j^* - \beta_1^*} \le 4(|\vdot{X}{\Delta_j}| \vee |\vdot{X}{\Delta_1}|) \le 4(|\vdot{X}{\Delta_j}| + |\vdot{X}{\Delta_1}|).$$
    \begin{align*}
        \E[\vdot{X}{\beta_j^* - \beta_1}^2| \Eps_2^c] &\le \E[32|\vdot{X}{\Delta_j}|^2 + 32|\vdot{X}{\Delta_1}|^2 | \Eps_2^c] \\ 
        &\le 32 (\E[|\vdot{X}{\Delta_j}|^2| \Eps_2^c] + \E[|\vdot{X}{\Delta_1}|^2| \Eps_2^c]) \\
        &\le 512 D_m^2,
    \end{align*}
    where we used Lemma \ref{lemma:angle_comparison} for bounding $\E[\vdot{X}{\Delta_j}^2|\Eps_2^c]$. 
    
    Now plugging this into the above, 
    \begin{align*}
        \sqrt{\E[\vdot{X}{s}^2 | \Eps_2^c]} &\sqrt{\E[\vdot{X}{\beta_j^* - \beta_1^*}^2 | \Eps_2^c]} P(\Eps_2^c) \\
        &\le 64 D_m P(\Eps_2^c) \le 512 D_m \frac{D_m}{R_{j1}^*}.
    \end{align*}

    \item Event $\Eps_3^c$: Similarly, 
    \begin{align*}
        |\E_{\D_j}[\Delta_{w} \vdot{X}{s} \vdot{X}{\beta_j^* - \beta_1^*} \indic_{\Eps_3^c}]| &\le \sqrt{\E[\Delta_w^2 \vdot{X}{s}^2 | \Eps_3^c]} \sqrt{\E[\vdot{X}{\beta_j^* - \beta_1^*}^2| \Eps_3^c]} P(\Eps_3^c) \\
        &\le \| \beta_j^* - \beta_1^* \| P(\Eps_3^c) \\
        &\le 2 R_{j1}^* \exp(-\tau_j^2/2) \\
        &\le 2 R_{j1}^* \exp(-\tau_j^2/2) D_m.
    \end{align*}
\end{enumerate}
We used independence of $e$ and $X$. Combining all, 
\begin{align}
    \label{eq:D_m_ge_1_b_1}
    b_1 \le O(\exp(-\tau_j^2/2) (1 \vee \rho_{j1}) R_{j1}^* + \tau_j^2 / R_{j1}^* + D_m /R_{j1}^*) D_m.
\end{align} 

Now we turn our attention to $b_2$. Recall $b_2 = |\E_{\D_j} [\Delta_w \vdot{X}{s} e]|$. For this setup, 
\begin{align*}
    b_2 &\le |\E_{\D_j} [\Delta_w \vdot{X}{s} e \indic_{\Eps}]| + |\E_{\D_j} [\Delta_w \vdot{X}{s} e \indic_{\Eps_1^c}]| \\
    &+ |\E_{\D_j} [\Delta_w \vdot{X}{s} e \indic_{\Eps_{2^c}}]| + |\E_{\D_j} [\Delta_w \vdot{X}{s} e \indic_{\Eps_3^c}]|.
\end{align*}
Under good event $\Eps$, as previously we have $|\Delta_w| \le 3 \rho_{j1} \exp(-\tau_j^2)$, thus
$$
    |\E_{\D_j} [\Delta_w \vdot{X}{s} e \indic_{\Eps}]| \le 3 \rho_{j1} \exp(-\tau_j^2) \E_{\D_j} [|\vdot{X}{s}e|] \le 3 \rho_{j1} \exp(-\tau_j^2).
$$

Similarly, we go through on the bad events. First,
\begin{align*}
    |\E_{\D_j} [\Delta_w \vdot{X}{s} e \indic_{\Eps_1^c}]| &\le \sqrt{\E_{\D_j} [\vdot{X}{s}^2|\Eps_1^c]} \sqrt{\E_{\D_j} [e^2|\Eps_1^c]} P(\Eps_1^c) \le c_1 \tau_j / R_{j1}^*,
\end{align*}
where we used Lemma \ref{lemma:threshold comparison} for bounding $\E_{\D_j} [\vdot{X}{s}^2|\Eps_1^c]$.

Second, 
\begin{align*}
    |\E_{\D_j} [\Delta_w \vdot{X}{s} e \indic_{\Eps_2^c}]| &\le \sqrt{\E_{\D_j} [\vdot{X}{s}^2|\Eps_2^c]} \sqrt{\E_{\D_j} [e^2|\Eps_2^c]} P(\Eps_2^c) \le c_2 D_m / R_{j1}^*.
\end{align*}
where we used Lemma \ref{lemma:angle_comparison} for bounding $\E_{\D_j} [\vdot{X}{s}^2|\Eps_2^c]$.

Finally,
\begin{align*}
    |\E_{\D_j} [\Delta_w \vdot{X}{s} e \indic_{\Eps_3^c}]| &\le \sqrt{\E_{\D_j} [\vdot{X}{s}^2 e^2]} \sqrt{ P(\Eps_3^c)} \le c_3 \exp(-\tau_j^2/4).
\end{align*}

Combining three items, we have
\begin{align}
    \label{eq:D_m_ge_1_b_2}
    b_2 \le O ((1 \vee \rho_{j1}) \exp(-\tau_j^2/4) + \tau_j/R_{j1}^* + D_m/R_{j1}^*).
\end{align}

Now we set
$$
\tau_j = c_\tau \sqrt{\log (R_{j1}^*k / (1 \wedge \rho_{j1}))}, \ R_{j1}^* > c_r k \rho_{j1}^{-1} \log(R_{j1}^*).
$$ 
With given good initialization $D_m / R_{j1}^* \le c_D \rho_{j1} / k$, we get $b_1 < c_b D_m \rho_{j1} / k$ and $b_2 \le c_{b'} D_m \rho_{j1} / k$ since $D_m \ge 1$. Combining \eqref{eq:D_m_ge_1_b_1} and \eqref{eq:D_m_ge_1_b_2}, we get $B_j \le c_B D_m \rho_{j1} / k$ for some small universal constant $c_B < 1/4$ with large enough $c_\tau, c_r$ and small enough $c_D$. 
\\

\paragraph{$j = 1$}: We only need to consider bounding $b_2 = |\E_{\D_j} [\Delta_w \vdot{X}{s} e]|$. We define some events similarly, but each involves multiple factors in this case.
\begin{align*}
    & \Eps_1 = \{|\vdot{X}{\beta_1^* - \beta_j}| \ge 4 \tau, \ \forall j \neq 1 \} \\
    & \Eps_2 = \{4|\vdot{X}{\Delta_1}| \le |\vdot{X}{\beta_1^* - \beta_j}|, \ \forall j \neq 1 \}\\
    & \Eps_3 =  \{|e| \le \tau \}, \\
    & \Eps = \Eps_1 \cap \Eps_2 \cap \Eps_3.
\end{align*}
Then follow the same path as in cases $j \neq 1$,
\begin{align*}
    b_2 &\le |\E_{\D_j} [\Delta_w \vdot{X}{s} e \indic_{\Eps}]| + |\E_{\D_j} [\Delta_w \vdot{X}{s} e \indic_{\Eps_1^c}]| \\
    &+ |\E_{\D_j} [\Delta_w \vdot{X}{s} e \indic_{\Eps_{2^c}}]| + |\E_{\D_j} [\Delta_w \vdot{X}{s} e \indic_{\Eps_3^c}]|.
\end{align*}

Then, on event $\Eps_1 \cap \Eps_2 \cap \Eps_3$, for all $j \neq 1$, we have 
\begin{align*}
w_j &\le (\pi_1 / \pi_j) \exp\left( (-(\vdot{X}{\beta_1^* - \beta_j} + e)^2 + (\vdot{X}{\Delta_1} + e)^2))/2 \right) \le 3 \rho_{j1} \exp (-3\tau^2/2),
\end{align*}
as before. Thus, $w_1 \ge 1 - 3 k \rho_\pi \exp (-3\tau^2/2)$. Similarly, $w_1^* \ge 1 - 3 k \rho_\pi \exp(-3\tau^2/2)$. Thus, $\Delta_w$ can be at most $k 3 \rho_\pi \exp(-3\tau^2/2)$. Then, 
$$
    |\E_{\D_1} [\Delta_w \vdot{X}{s} e \indic_{\Eps}]| \le 3 k \rho_\pi \exp(-3\tau^2/2) \E_{\D_1} [|\vdot{X}{s}e|] \le 3 k \rho_\pi \exp(-3 \tau^2 / 2).
$$

We can go over other events similarly.
\begin{align*}
    |\E_{\D_1} [\Delta_w \vdot{X}{s} e \indic_{\Eps_1^c}]| &\le \sqrt{\E_{\D_1} [\vdot{X}{s}^2|\Eps_1^c]} \sqrt{\E_{\D_1}[e^2|\Eps_1^c]} P(\Eps_1^c) \le c_1 \sqrt{\log k} \frac{k\tau}{R_{min}}.
\end{align*}
\begin{align*}
    |\E_{\D_1} [\Delta_w \vdot{X}{s} e \indic_{\Eps_2^c}]| &\le \sqrt{\E_{\D_1} [\vdot{X}{s}^2|\Eps_2^c]} \sqrt{\E_{\D_1}[e^2 | \Eps_2^c]} P(\Eps_2^c) \le c_2 \sqrt{\log k} \frac{kD_m}{R_{min}}.
\end{align*}
\begin{align*}
    |\E_{\D_1} [\Delta_w \vdot{X}{s} e \indic_{\Eps_3^c}]| &\le \sqrt{\E_{\D_1} [\vdot{X}{s}^2 e^2]} \sqrt{ P(\Eps_3^c)} \le c_3 \exp(-\tau^2/4).
\end{align*}
For first two inequalites, we used Lemma \ref{lemma:angle_comparison} and \ref{lemma:threshold comparison}. They all gives a bound for $b_2$ as,
\begin{equation}
    \label{eq:b2_Dm_greater_1_j_eq_1}
    b_2 \le O(k \rho_\pi \exp(-\tau^2/4) + (k \sqrt{\log k}) \tau / R_{min} + (k \sqrt{\log k}) D_m/R_{min}).
\end{equation}
Now we set $\tau = \Theta(\sqrt{\log (k\rho_\pi)})$, $R_{min} = \Omega(k \log(k\rho_\pi))$ and $D_m = O(R_{min}/(k\sqrt{\log k}))$, and we get $b_2 \le c_B$ and $B_1 = b_2 \le c_B D_m$.

Combining \eqref{eq:D_m_ge_1_b_1}, \eqref{eq:D_m_ge_1_b_2}, and \eqref{eq:b2_Dm_greater_1_j_eq_1}, we get the first part of Lemma \ref{lemma:population_bound_b}. We conclude
$$
    B = \pi_1^* B_1 + \sum_j \pi_j^* B_j \le \pi_1^* c_B D_m + \sum_{j \neq 1} \pi_j^* c_B D_m \rho_{j1} / k = \pi_1^* \left(c_B + \sum_{j \neq 1} c_B / k \right) D_m \le 2 \pi_1^* c_B D_m,
$$
where we used $\pi_j^* \rho_{j1} = \pi_1^*$. Thus $B \le c_B' \pi_1^* D_m$ for some universal constant $c_B' \in (0, 1/4)$ with properly set constants in the proof.

\paragraph{Update for mixing weights.} In this case $D_m \ge 1$, we will not focusing on improvement over the quality of $\pi_j$. Instead, we will only show that $\pi_j$ stays in a neighborhood of the true parameter, {\it i.e.}, $|\pi_j - \pi_j^*| \le \pi_j^* / 2$. It can be actually very easily shown with reusing the results we derived for $\beta$. Observe that
\begin{align}
    \label{eq:weight_update}
    \pi_1^+ - \pi_1^* = \E_\D [w_1 - w_1^*] = \E_\D [\Delta_w].
\end{align}
Now we can proceed as before:
\begin{align*}
    \E_\D [\Delta_w] &= \pi_1^* \E_{\D_1} [\Delta_w] + \sum_{j\neq1} \pi_j^* \E_{\D_j} [\Delta_w]
    \le \pi_1^* \underbrace{|\E_{\D_1} [\Delta_w]|}_{P_1} + \sum_{j\neq1} \pi_j^* \underbrace{|\E_{\D_j} [\Delta_w]|}_{P_j}.
\end{align*}
Moving along the same trajectory as in \eqref{eq:D_m_ge_1_b_2} for $j \neq 1$ case, 
\begin{align*}
    P_j \le O \left((1 + \pi_1^*/\pi_j^*) \exp(-\tau_j) + \tau_j / R_{j1}^* + D_m / R_{j1}^* \right).
\end{align*}
With properly setting parameters similarly as in $\beta$ case, we get $P_j \le \rho_{j1}/ 4k$. For $j=1$ case, in fact, we can reuse the result for \eqref{eq:b2_Dm_greater_1_j_eq_1} as it is. To see this, for instance, 
\begin{align*}
    |\E_{\D_1} [\Delta_w \indic_{\Eps_1^c}]| \le \E_{\D_1} [|\indic_{\Eps_1^c}|] = P(\Eps_1^c) \le c_1 k\tau /R_{min}.
\end{align*}
We can do for all cases similarly to get
\begin{align*}
    P_1 \le O \left(k \rho_\pi \exp(-\tau^2) + k \tau / R_{min} + k D_m / R_{min} \right).
\end{align*}
By setting the parameters similarly as before, {\it i.e.}, $\tau = \Theta(\sqrt{\log (k \rho_\pi)}), R_{min} = \tilde{\Omega}(k), D_m = O (R_{min} / k)$, we can get $P_1 \le 1/4$ with properly set constants. Therefore, $|\pi_1^+ - \pi_1^*| \le \pi_1^* / 2$ as desired.

\subsection{Bounding $A$}
\label{Appendix:population_EM_A}

We start it with a following observation.
\begin{align*}
    \E_\D[w_1 X X^{\top}] &\succeq \pi_1 \E_{\D_1} [w_1 XX^{\top}].
\end{align*}
Thus, we will only focus on giving a constant lower bound for $\E_{\D_1}[w_1 XX^{\top}]$. We define good events as
\begin{align*}
    \Eps_1 &= \{|e| \le \tau \} \\
    \Eps_2 &= \{|\vdot{X}{\beta_j - \beta_1^*}| \ge 4|\vdot{X}{\Delta_1}|, \forall j \neq 1 \} \\
    \Eps_3 &= \{|\vdot{X}{\beta_j - \beta_1^*}| \ge 4\tau, \forall j \neq 1 \}.
\end{align*}
We will set $\tau = c_\tau \sqrt{\log (k \rho_\pi)}$ with some large constant $c_\tau > 0$ in this case. Let $\Eps = \Eps_1 \cap \Eps_2 \cap \Eps_3$.

Using 
$$\E_{\D_1}[w_1 XX^{\top}] = E_{\D_1}[XX^{\top}] - \underbrace{E_{\D_1}[(1-w_1) XX^{\top} \indic_{\Eps}]}_{(i)} - \underbrace{E_{\D_1}[(1 - w_1) XX^{\top} \indic_{\Eps^c}]}_{(ii)},$$ we will give an upper bound to last two terms.

Under $\Eps$, it can be similarly shown as before that $(1 - w_1) \le 3 k \rho_\pi \exp(-\tau^2)$. Thus, (i) is easily bounded:
$$
    \E_{\D_1}[(1-w_1) XX^{\top} \indic_{\Eps}] \preceq 3 k \rho_\pi \exp(-\tau^2) \E_{\D_1}[XX^{\top} \indic_{\Eps}] \preceq 3 k \rho_\pi \exp(-\tau^2) I.
$$

We should split the cases for (ii). Observe that
\begin{align*}
    \E_{\D_1}[(1 - w_1) XX^{\top} \indic_{\Eps^c}] &\preceq \E_{\D_1}[XX^{\top} \indic_{\Eps^c}] \\
    &\preceq \E_{\D_1}[XX^{\top} | \Eps_1^c] P(\Eps_1^c) + \E_{\D_1}[XX^{\top} | \Eps_2^c] P(\Eps_2^c) + \E_{\D_1}[XX^{\top} | \Eps_3^c] P(\Eps_3^c)
\end{align*}
We bound each one by one. First, 
\begin{align*}
    \E_{\D_1}[XX^{\top} | \Eps_1^c] P(\Eps_1^c) &= \E_{\D_1}[XX^{\top} | |e| \ge \tau] P(e \ge \tau) \\
    &= \E_{\D_1} [XX^{\top}] P(e \ge \tau) \preceq \exp(-\tau^2/2) I.
\end{align*}
where in the first inequality we used independence of $e$ and $X$.

For the second term, $$\E_{\D_1} [XX^{\top}|\Eps_2^c] \preceq c_1 (\log k) I,$$ from Corollaray \ref{corollary:key_bound}. Meanwhile, we have $P(\Eps_2^c) \le k \frac{4 \|\Delta_1\|}{R_{min}}$. Thus, 
$$
    \E_{\D_1}[XX^{\top} | \Eps_2^c] P(\Eps_2^c) \preceq c_2 (k \log k) \frac{D_m}{R_{min}} I.
$$

Finally, we bound the operator norm for 
$$
    \E_{\D_1}[XX^{\top}|\Eps_3^c] = \E_{\D_1}[XX^{\top}|\exists j \neq 1, \vdot{X}{\beta_j - \beta_1^*} \le 4\tau] \preceq c_3 (\log k) I,
$$
from Corollary \ref{corollary:key_bound}. On one hand, $P(\Eps_3^c) \le k \frac{4\tau}{R_{min}}$. Now combining three pieces, we have
$$
    \|(ii)\|_{op} \le \exp(-\tau^2/2) + c_4 (k \log k) \frac{D_m}{R_{min}} + c_5 (k \log k) \frac{\tau}{R_{min}}.
$$

Return to bounding $\E_{\D_1}[w_1 XX^{\top}] = I - (i) - (ii)$, we have
\begin{align*}
    \E_{\D_1}[w_1 XX^{\top}] \succeq 1 - O\left(k \rho_\pi \exp(-\tau^2/2) + (k\log k) \frac{D_m}{R_{min}} + (k\log k) \frac{\tau}{R_{min}} \right).
\end{align*}
Giving appropriate $\tau = c_\tau \sqrt{\log k \rho_\pi}$, $D_m/R_{min} \le 1/\tilde{O}(k)$, $R_{min} = \tilde{\Omega}(k)$, we have $\| \E_{\D_1}[w_1 XX^{\top}] \|_{op} \ge 1/2$. Thus, $\|A^{-1}\|_{op} \le 2/\pi_1^*$.

\section{Proofs for Finite-Sample EM}
\label{appendix:concentration_finite_sample}

\subsection{Proofs for concentration of B}
\label{appendix:concentrate_B}
We will consider the following events for concentration result
\begin{align*}
    & \Eps_{j} = \{ \text{sample comes from $j^{th}$ linear model} \} \\
    & \Eps_{j,1} = \{ |e| \le \tau_j \}, \\
    & \Eps_{j,2} = \{ 4(|\vdot{X}{\Delta_1}| \vee |\vdot{X}{\Delta_j}|) \le |\vdot{X}{\beta_j^* - \beta_1^*}|\}, \\
    & \Eps_{j,3} = \{ |\vdot{X}{\beta_j^* - \beta_1^*}| \ge 4\sqrt{2} \tau_j\} \\
    & \Eps_{j,good} = \Eps_{j,1} \cap \Eps_{j,2} \cap \Eps_{j,3}
\end{align*}
then decompose each sample using the indicator functions of these events. 
\begin{align*}
    w_{1,i} X_i (y_i - \vdot{X}{\beta_1^*}) = \Bigg (&\sum_{j\neq 1}^k  w_{1,i} X_i \vdot{X_i}{\beta_j^* - \beta_1^*} \indic_{\Eps_{j} \cap \Eps_{j,good}} + w_{1,i} X_i \vdot{X_i}{\beta_j^* - \beta_1^*} \indic_{\Eps_{j} \cap \Eps_{j,1}^c} \\
    &+ w_{1,i} X_i \vdot{X_i}{\beta_j^* - \beta_1^*} \indic_{\Eps_{j} \cap \Eps_{j,1} \cap \Eps_{j,2}^c} + w_{1,i}X_i\vdot{X_i}{\beta_j^* - \beta_1^*} \indic_{\Eps_{j} \cap \Eps_{j,1} \cap \Eps_{j,2} \cap \Eps_{j,3}^c} \\
    &+ w_{1,i} X_i e_i \indic_{\Eps_{j} \cap \Eps_{j,good}}
    + w_{1,i} X_i e_i \indic_{\Eps_{j} \cap \Eps_{j,1}^c} + w_{1,i} X_i e_i \indic_{\Eps_{j} \cap \Eps_{j,1} \cap (\Eps_{j,2} \cup \Eps_{j,3})^c} \Bigg) \\
    &+ w_{1,i} X_i e_i \indic_{\Eps_1}.
\end{align*}
We will bound the deviation under each event separately. Before getting into detailed analysis, we remind some basics on sub-exponential random variables.

From standard tail bound for sub-exponential random variable $W$ with sub-exponential norm $K$, we have \cite{vershynin2010introduction}
$$
    P\left( \left|\frac{1}{n} \sum_i W_i - \E[W] \right| \ge t \right) \le 2 \exp\left(-Cn \min(t/K, (t/K)^2) \right).
$$
If $W$ is a random vector in $\mathbb{R}^d$ with all elements being sub-exponential with same norm $K$, then
\begin{align}
    \label{eq:sub_exp_bound}
    P\left( \left\|\frac{1}{n} \sum_i W_i - \E[W] \right\| \ge t \right) &\le \sum_{j=1}^d 2 P\left( \left |\frac{1}{n} \sum_i (W_i)_j - \E[(W)_j] \right| \ge t/\sqrt{d} \right) \nonumber \\
    &\le 2 d \exp\left(-Cn \min \left( \frac{t}{K\sqrt{d}} , \left(\frac{t}{K\sqrt{d}} \right)^2 \right) \right) \nonumber \\
    &= \exp\left(-Cn \min \left( \frac{t}{K\sqrt{d}} , \left(\frac{t}{K\sqrt{d}} \right)^2 \right) + C' \log d \right).
\end{align}
Therefore, in order to achieve $\delta$ probability error bound, we should have
\begin{equation}
    t = O\left( K\sqrt{d} \left(\frac{\log(d/\delta)}{n} \vee \sqrt{\frac{\log(d/\delta)}{n}} \right) \right).
\end{equation}

Now we get into concentration of random variables multiplied with indicator functions. For each decomposed random variable, we will find the bound for deviations of empirical mean from true mean that holds with probability at least $1 - \delta/k^2 T$. 

\begin{enumerate}
    \item $w_{i,1} X_i \vdot{X_i}{\beta_j^* - \beta_1^*} \indic_{\Eps_{j} \cap \Eps_{j,good}}$: We first check if the target random variable is sub-exponential random vector. For any fixed direction $s \in \mathbb{S}^{d-1}$, we will show $W_i = w_{1,i} \vdot{X_i}{s} \vdot{X_i}{\beta_j^* - \beta_1^*} | \Eps_j \cap \Eps_{j,good}$ is sub-exponential by bounding sub-exponential norm.
    \begin{align*}
        \| W \|_{\psi_1} &= \sup_{p \ge 1} p^{-1} \E_\D[|w_1 \vdot{X}{s} \vdot{X}{\beta_j^* - \beta_1^*}|^p | \Eps_{j} \cap \Eps_{j,good} ]^{1/p} \\
        &= \sup_{p \ge 1} p^{-1} \E_{\D_j} [|w_1 \vdot{X}{s} \vdot{X}{\beta_j^* - \beta_1^*}|^p | \Eps_{j,good}]^{1/p}.
    \end{align*}
    Now recall \eqref{eq:w_1_bound} that how we bounded $w_1$. Under good event, we know that $w_1$ is less than 1 or $3\rho_{j1} \exp \left( (-5/32 \vdot{X}{\beta_j^* - \beta_1^*}^2 + 3e^2)/2 \right) \le 3\rho_{j1} \exp(-\tau_j^2)$. Thus, 
    \begin{align*}
        \| W \|_{\psi_1} &= 3\rho_{j1} \exp(-\tau_j^2) \sup_{p \ge 1} p^{-1} \E_{\D_j} [|\vdot{X}{s} \vdot{X}{\beta_j^* - \beta_1^*}|^p | \Eps_{j,good}]^{1/p} \\
        &\le \frac{3}{P(\Eps_{j,good} | \Eps_j)} \rho_{j1} \exp(-\tau_j^2) \sup_{p \ge 1} p^{-1} \E_{\D_j} [|\vdot{X}{s}|^{2p}]^{1/2p} \E_{\D_j} [|\vdot{X}{\beta_j^* - \beta_1^*}|^{2p}]^{1/2p} \\
        &\le C \rho_{j1} \exp(-\tau_j^2) R_{j1}^*,
    \end{align*}
    with sufficiently large constant $C>0$. We used the fact that $P(\Eps_{j,good} | \Eps_{j}) \ge 1/2$ given good enough initialization and SNR, and $l_p$-norm of Gaussian is bounded by $O(\sqrt{p})$. 
    
    Now we got a sub-exponential norm of $W$, we are almost ready to apply our Proposition \ref{lemma:indic_prob_decompose}. In order to invoke Proposition \ref{lemma:indic_prob_decompose}, we need to choose proper $n_e$. First let us bound the probability of large deviation of Bernoulli random variables $Z_i = \indic_{(X_i, y_i) \in \Eps_j \cap \Eps_{j,good}}$. Note that Bernstein's inequality for Bernoulli random variable is 
    \begin{align}
        \label{eq:Bernstein}
        P \left(|\frac{1}{n} \sum_{i=1}^{n} Z_i - E[Z]| \ge t \right) \le \exp\left(-\frac{n t^2}{2t + 2p/3} \right),
    \end{align}
    
    Observe that $p := P(\Eps_j \cap \Eps_{j,good}) \le P(\Eps_j) = \pi_j^*$. We can choose $n_e$ by checking if the following holds:
    \begin{align*}
        P(\sum_i Z_i \ge n_e + 1) \le P \left(\frac{1}{n} \sum_i Z_i - p \ge t \right) \le \delta / (k^2T).
    \end{align*}
    By solving the equation $\eqref{eq:Bernstein} = \delta/(k^2T)$, we get 
    $$t = O\left(\frac{\log(k^2T/\delta)}{n} + \sqrt{\frac{p \log(k^2T/\delta)}{n}} \right).$$
    Therefore, right choice of $n_e = np + O \left(\log(k^2T/\delta) \vee \sqrt{np \log(k^2T/\delta)} \right)$.
    
    We can also use Bernstein's inequality to get
    \begin{align}
        \label{ineq:prob_t2}
        P \left(\|\E[W]\| |\frac{1}{n} \sum_{i=1}^{n} Z_i - E[Z]| \ge t_2 \right) \le \exp\left(-\frac{n t_2^2}{(2t_2 + 3p) \|W\|_{\psi_1}^2} \right),
    \end{align}
    where we used basic fact that $\| E[W] \| \le \| W\|_{\psi_1}$ from \cite{vershynin2010introduction}. 
    For $t_2$, we set 
    \begin{align}
        \label{eq:set_t2}
        t_2 &= \|W\|_{\psi_1} O\left( \frac{\log(k^2T/\delta)}{n} \vee \sqrt{\frac{p}{n} \log(k^2T/\delta)} \right) \nonumber \\
        &= \|W\|_{\psi_1} O \left( \sqrt{p \vee \frac{1}{n}} \sqrt{\frac{\log^2 (k^2T/\delta)}{n}} \right).
    \end{align}
    
    Then recall \eqref{eq:sub_exp_bound}, we have
    \begin{align}
        \label{ineq:prob_t1}
        P \left(\|\frac{1}{n_e} \sum_{i=1}^{n_e} W_i - E[W]\| \ge \frac{n}{n_e}t_1\right) &\le \exp\left(-C n_e \min \left(\frac{n^2 t_1^2}{n_e^2 d \|W\|_{\psi_1}^2}, \frac{n t_1}{n_e \sqrt{d} \|W\|_{\psi_1}^2} \right) + C' \log d \right) \nonumber \\
        &= \exp\left(-C \min \left(\frac{n^2 t_1^2}{n_e d \|W\|_{\psi_1}^2}, \frac{n t_1}{\sqrt{d} \|W\|_{\psi_1}^2} \right) + C' \log d \right),
    \end{align}
    Therefore, proper scale of $t_1$ is
    \begin{align}
        \label{eq:set_t1}
        t_1 &= O \left(\|W\|_{\psi_1} \sqrt{\frac{n_e}{n}} \sqrt{\frac{d}{n} \log(d k^2T/\delta)} \vee \|W\|_{\psi_1} \frac{\sqrt{d} \log (dk^2T/\delta)}{n} \right) \nonumber \\
        &\le \|W\|_{\psi_1} O\left( \sqrt{p \vee \frac{1}{n}} \sqrt{\frac{d}{n} \log^2 (d k^2T/\delta)} \right).
    \end{align}
    Since $n = \tilde{\Omega}(1/\pi_{min})$ as we will use, with probability at least $1 - 3\delta / (k^2T)$, 
    \begin{align*}
        \| \frac{1}{n} \sum_i w_{1,i} X_i \vdot{X_i}{\beta_j^* - \beta_1^*} \indic_{\Eps_{j} \cap \Eps_{j,good}} - E[w_{1,i} X_i \vdot{X_i}{\beta_j^* - \beta_1^*} \indic_{\Eps_{j} \cap \Eps_{j,good}}] \| \\ 
        = O \left( \rho_{j1} R_{j1}^* \exp(-\tau_j^2) \sqrt{\pi_j^*} \sqrt{\frac{d}{n} \log^2 (dk^2T/\delta)}\right)
    \end{align*}
    
    \item $w_{1,i} X_i \vdot{X_i}{\beta_j^* - \beta_1^*} \indic_{\Eps_{j} \cap \Eps_{j,1}^c}$: It corresponds to the case where the noise power is larger than $\tau_j$. The probability of this event is $p:= P(\Eps_{j} \cap \Eps_{j,1}^c) \le 2\pi_j^* \exp(-\tau_j^2/2)$. In this case, $W_i = w_{1,i} \vdot{X_i}{s} \vdot{X_i}{\beta_j^* - \beta_1^*} | \Eps_j \cap \Eps_{j,1}^c$ is bounded as
    \begin{align*}
        \| W \|_{\psi_1} &= \sup_{p \ge 1} p^{-1} \E_\D[|w_1 \vdot{X}{s} \vdot{X}{\beta_j^* - \beta_1^*}|^p | \Eps_{j} \cap \Eps_{j,1}^c ]^{1/p} \\
        &= \sup_{p \ge 1} p^{-1} \E_{\D_j} [|w_1 \vdot{X}{s} \vdot{X}{\beta_j^* - \beta_1^*}|^p | \Eps_{j,1}^c]^{1/p} \\
        &\le \sup_{p \ge 1} p^{-1} \E_{\D_j} [|\vdot{X}{s} \vdot{X}{\beta_j^* - \beta_1^*}|^p | \Eps_{j,1}^c]^{1/p} \\
        &= \sup_{p \ge 1} p^{-1} \E_{\D_j} [|\vdot{X}{s} \vdot{X}{\beta_j^* - \beta_1^*}|^p]^{1/p} \\
        &\le C R_{j1}^*,
    \end{align*}
    for some constant C, where the last equality comes from the fact that $X$ and $e$ are independent. While we want to reuse \eqref{eq:set_t2} and \eqref{eq:set_t1} to decide deviation of means under this event, we also need to cancel out the norm of $W = O(R_{j1}^*)$. We consider two cases: $1/n < p^{1/c}$ and $1/n > p^{1/c}$ for some number $c \ge 2$. 
    
    If $1/n < p^{1/c}$, then $\sqrt{p \vee 1/n} \le p^{1/2c} = 2\exp(-\tau_j^2/(4c))$. We can just plug in this bound into \eqref{eq:set_t2} and \eqref{eq:set_t1} to get the deviation
    \begin{align*}
        \| \frac{1}{n} \sum_i w_{1,i} X_i \vdot{X_i}{\beta_j^* - \beta_1^*} \indic_{\Eps_{j} \cap \Eps_{j,1}^c} - E[w_{1,i} X_i \vdot{X_i}{\beta_j^* - \beta_1^*} \indic_{\Eps_{j} \cap \Eps_{j,1}^c}] \| \\ 
        = O\left(R_{j1}^* \exp(-\tau_j^2/(4c)) \sqrt{\frac{d}{n} \log^2 (dk^2T/\delta)} \right),
    \end{align*}
    with probability at least $1 - \delta/(k^2T)$. 
    
    On the other side, if $1/n > p^{1/c}$, then we will set $n_e = 0$, {\it i.e.}, no sample fell into this event. This is true with probability $1-np = 1 - 1/n^{c - 1}$. The statistical error is thus just 
    $$\E[W] p = O(R_{j1}^* \exp(-\tau_j^2/2)) \le O(R_{j1}^* \exp(-\tau_j^2/4) / n).$$ 
    By setting $c = 4$ and $n \ge (k^2T/\delta)^{1/3}$, this will hold with probability at least $1 - \delta/(k^2 T)$. 
    \begin{remark}
        \label{remark:high_probability_on_n}
        The high probability result is usually given as $1 - \exp(-cn)$, but it is also enough to show that it holds with probability $1 - 1/n^c$ for some constant $c > 0$. The choice of 3 is arbitrary and we could have picked any other larger constant.
    \end{remark}

    \item $w_{1,i} X_i \vdot{X_i}{\beta_j^* - \beta_1^*} \indic_{\Eps_{j} \cap \Eps_{j,1} \cap \Eps_{j,2}^c}$: Under this event, we first note that $|w_{1,i} \vdot{X_i}{s}$ $\vdot{X_i}{\beta_j^* - \beta_1^*}| \le 4 |\vdot{X_i}{s}| (|\vdot{X_i}{\Delta_j}| + |\vdot{X_i}{\Delta_j}|)$. In turn, 
    \begin{align*}
        \|W\|_{\psi_1} &= \sup_{p \ge 1} p^{-1} \E_\D[|w_1 \vdot{X}{s} \vdot{X}{\beta_j^* - \beta_1^*}|^p | \Eps_{j} \cap \Eps_{j,1} \cap \Eps_{j,2}^c]^{1/p} \\
        &= \sup_{p \ge 1} p^{-1} \E_{\D_j} [|w_1 \vdot{X}{s} \vdot{X}{\beta_j^* - \beta_1^*}|^p | \Eps_{j,1} \cap \Eps_{j,2}^c]^{1/p} \\
        &\le \sup_{p \ge 1} p^{-1} \E_{\D_j} [|\vdot{X}{s} \vdot{X}{\beta_j^* - \beta_1^*}|^p| \Eps_{j,2}^c]^{1/p} \\
        &\le 4 \sup_{p \ge 1} p^{-1} \E_{\D_j} \left[|\vdot{X}{s} (\vdot{X}{\Delta_1} + \vdot{X}{\Delta_j})|^p | \ \Eps_{j,2}^c \right]^{1/p} \\
        &\underset{(i)}{\le} 4 \sup_{p \ge 1} p^{-1} \left (\sqrt{\E_{\D_j} [|\vdot{X}{s}|^{2p} | \Eps_{j,2}^c]} \sqrt{|\vdot{X}{\Delta_j}|^{2p}|\Eps_{j,2}^c]} \right)^{1/p} \\
        &+ 4 \sup_{p \ge 1} p^{-1} \left (\sqrt{\E_{\D_j} [|\vdot{X}{s}|^{2p} | \Eps_{j,2}^c]} \sqrt{|\vdot{X}{\Delta_1}|^{2p}|\Eps_{j,2}^c]} \right)^{1/p} \\
        &\underset{(ii)}{\le} c_2 D_m,
    \end{align*}
    where (i) we used Minkowski's inequality and Cauchy-Schwartz inequality, then (ii) we invoked Lemma \ref{lemma:angle_comparison}. Recall that $D_m = \max_j \|\Delta_j\|$. Thus $W = w_{1} X \vdot{X}{\beta_j^* - \beta_1^*} | (\Eps_{j} \cap \Eps_{j,2}^c)$ is sub-exponential with parameter at most $c_2 D_m$. We can also check that $p:= P(\Eps_j \cap \Eps_{j,2}^c) \le O(\pi_j^* D_m / R_{j1}^*)$. 
    
    We choose $n_e = np + O(\log(k^2T/\delta) \vee \sqrt{np\log(k^2T/\delta)})$ as before. Using \eqref{eq:set_t2} and \eqref{eq:set_t1}, 
    \begin{align}
        \label{eq:finite_t1_t2_D_m_setting}
        t_1 &= O\left( D_m \sqrt{p \vee \frac{\log(d k^2T/\delta)}{n}} \sqrt{\frac{d}{n} \log(dk^2T/\delta)} \right), \nonumber \\
        t_2 &= O\left(D_m \sqrt{\frac{p \log(k^2T /\delta)}{n}} \vee D_m \frac{\log(k^2T/\delta)}{n} \right).
    \end{align}
    
    We can see that $n  = \Omega(d k \log(dk^2T/\delta) / \pi_1^*)$ suffices to ensure $t_1, t_2 < O(D_m \pi_1^* / k)$ since $p \le O(\pi_j^*/(k\rho_\pi)) = O(\pi_1^* / k)$ by the initialization condition. Overall, $t_1 + t_2$ is bounded by
    $$
        O\left(D_m \sqrt{\frac{\pi_j^* D_m}{R_{j1}^*} \vee \frac{\log(dk^2T/\delta)}{n}} \sqrt{\frac{d}{n} \log(dk^2T / \delta)} \right).
    $$

    \item $w_{1,i} X_i \vdot{X_i}{\beta_j^* - \beta_1^*} \indic_{\Eps_{j} \cap \Eps_{j,1} \cap \Eps_{j,2} \cap \Eps_{j,3}^c}$: In this case, we define $W$ as
    $$W_i = w_{1,i} X_i \vdot{X_i}{\beta_j^* - \beta_1^*} \indic_{\Eps_{j,1} \cap \Eps_{j,2}} | (\Eps_{j} \cap \Eps_{j,3}^c).$$
    In other words, we are leaving some events in the indicator. Then, we can restart from bounding the sub-exponential norm of $W$.
    \begin{align*}
        \|W\|_{\psi_1} &= \sup_{p \ge 1} p^{-1} \E_{\D_j} [|w_1 \vdot{X}{s} \vdot{X}{\beta_j^* - \beta_1^*}|^p \indic_{\Eps_{j,1} \cap \Eps_{j,2}} | \cap \Eps_{j,3}^c]^{1/p} \\
        &\le \sup_{p \ge 1} p^{-1} \E_{X\sim \mathcal{N}(0,I)} [ \left( \E_{Y \sim \mathcal{N}(\vdot{X}{\beta_j^*},1)} [w_1] \right) |\vdot{X}{s} \vdot{X}{\beta_j^* - \beta_1^*}|^p \indic_{\Eps_{j,2}} | \Eps_{j,3}^c]^{1/p}.
    \end{align*}
    Then, use the following bound for expectation of $w_1$ when $\Eps_{j,2}$ is true,
    \begin{align}
        \label{eq:w_1_integrated_bound}
        \E_{Y \sim \mathcal{N}(\vdot{X}{\beta_j^*}, 1)}[w_1] &\le \E_{e \sim \mathcal{N}(0, 1)} \left[\min \left(3\rho_{j1} \exp \left( (-5/32 \vdot{X}{\beta_j^* - \beta_1^*}^2 + 3e^2 )/2 \right), 1 \right) \right] \nonumber \\
        &\le \E_{e \sim \mathcal{N}(0, 1)} \left[ 3\rho_{j1} \exp \left(-5/32 \vdot{X}{\beta_j^* - \beta_1^*}^2 + 3e^2 \right) \indic_{3e^2 \le |\vdot{X}{\beta_j^* - \beta_1^*}|^2/32}| \right] \nonumber \\
        &+ \E_{e \sim \mathcal{N}(0, 1)} \left[\indic_{3e^2 \ge \vdot{X}{\beta_j^* - \beta_1^*}^2/32} \right] \nonumber \\
        &\le \E_{e \sim \mathcal{N}(0, 1)} \Bigg[ 3\rho_{j1} \exp \left(-\vdot{X}{\beta_j^* - \beta_1^*}^2/16 \right) \Bigg] + P(3e^2 \ge |\vdot{X}{\beta_j^* - \beta_1^*}|^2/32) \nonumber \\
        &\le 5 (1 \vee \rho_{j1}) \exp(-\vdot{X}{\beta_j^* - \beta_1^*}^2 / 192).
    \end{align}
    
    Then we compute the upper bound for $\exp(-\vdot{X}{\beta_j^* - \beta_1^*}^2 / 192) |\vdot{X}{\beta_j^* - \beta_1^*}|^p$. Letting $|\vdot{X}{\beta_j^* - \beta_1^*}| = a$, and find a maximum for $-a^2 / 192 + p \log a$ by finding a zero point in its derivative. We get a maximum at $a = \sqrt{96p}$ with value $(96p)^{p/2} \exp(-p/2)$. Now plug this upper bound to continue bounding norm of $W$. 
    \begin{align*}
        \| W \|_{\psi_1} &\le \sup_{p \ge 1} p^{-1} \E_{X \sim \mathcal{N}(0,I)} \left[ \left(\E_{Y \sim \mathcal{N}(\vdot{X}{\beta_j^*}, 1)}[w_1]\right) \indic_{\Eps_{j,2}} |\vdot{X}{s} \vdot{X}{\beta_j^* - \beta_1^*}|^p | \Eps_{j,1}^c \right]^{1/p} \\
        &\le 5 (1 \vee \rho_{j1}) \sup_{p \ge 1} p^{-1} (96p)^{1/2} \exp(-1/2) \E_{X \sim \mathcal{N}(0,I)} \left[ |\vdot{X}{s}|^p | \Eps_{j,1}^c \right]^{1/p} \\
        &\le C (1 \vee \rho_{j1}),
    \end{align*}
    with sufficiently large constant $C>0$ and Lemma \ref{lemma:threshold comparison} for the final inequality. 
    
    The probability of this event $p := P(\Eps_{j} \cap \Eps_{j,2}^c) \le 4\sqrt{2} \pi_j^* \tau_j / R_{j1}^*$. Again we use Proposition \ref{lemma:indic_prob_decompose} to get a deviation of this random variable. We can set $t_1$ and $t_2$ as
    \begin{align*}
        t_1 &= O\left((1 \vee \rho_{j1}) \sqrt{p \vee \frac{\log(dk^2T /\delta)}{n}} \sqrt{\frac{d\log(dk^2T/\delta)}{n}} \right), \\
        t_2 &= O\left((1 \vee \rho_{j1}) \sqrt{\frac{p \log(k^2T /\delta)}{n}} \vee (1 \vee \rho_{j1}) \frac{\log(k^2T/\delta)}{n} \right).
    \end{align*}
    With probability at least $1 - \delta/k^2T$, we conclude that the deviation of sum under this event is
    $$
        O\left( (1 \vee \rho_{j1}) \left(\sqrt{\frac{\log(dk^2T/\delta)}{n}} \vee \sqrt{\frac{\pi_j^* \tau_j}{R_{j1}^*}}\right) \sqrt{\frac{d}{n} \log (dk^2T/\delta)} \right).
    $$
\end{enumerate}

Now we will bound terms for $w_{1,i} X_i e_i$, it is almost exact repetition of previous procedures when it comes from $j^{th} \neq1$ component. 
\begin{enumerate}
     \item $w_{i,1} X_i e_i \indic_{\Eps_{j} \cap \Eps_{j,good}}$, $j \neq 1$: One can show that following the exact same procedure for the first case we handled for $w_{i,1} X_i \vdot{X}{\beta_j^* - \beta_1^*}$. In this case, $W_i = w_{i,1} X_i e_i | \Eps_{j} \cap \Eps_{j,good}$, we can get $\|W\|_{\psi_1} \le C \rho_{j1} \exp(-\tau_j^2)$. To see this, 
     \begin{align*}
        \| W \|_{\psi_1} &= \sup_{p \ge 1} p^{-1} \E_\D[|w_1 \vdot{X}{s} e|^p | \Eps_{j} \cap \Eps_{j,good} ]^{1/p} \\
        &= \sup_{p \ge 1} p^{-1} \E_{\D_j} [|w_1 \vdot{X}{s} \vdot{X}{\beta_j^* - \beta_1^*}|^p \Eps_{j,good}]^{1/p} \\
        &\le 3/P(\Eps_{j,good} | \Eps_j) \rho_{j1} \exp(-\tau_j^2) \sup_{p \ge 1} p^{-1} \E_{\D_j} [|\vdot{X}{s} e|^p]^{1/p} \\
        &\le C \rho_{j1} \exp(-\tau_j^2). 
    \end{align*}
    
     Following the same trick we used with Proposition \ref{lemma:indic_prob_decompose}, (see \eqref{eq:set_t2} and \eqref{eq:set_t1}), we get
     \begin{align*}
        \| \frac{1}{n} \sum_i w_{1,i} X_i e_i \indic_{\Eps_{j} \cap \Eps_{j,good}} - \E[w_{1} X e \indic_{\Eps_{j} \cap \Eps_{j,good}}] \| = O \left( \rho_{j1} \exp(-\tau_j^2) \sqrt{\pi_j^*} \sqrt{\frac{d}{n} \log(dk^2T/\delta)}\right).
    \end{align*}

    \item $w_{i,1} X_i e_i \indic_{\Eps_{j} \cap \Eps_{j,1}^c}$, $j \neq 1$: The challenge here is how to bound $\E_{e \sim \mathcal{N} (0,1) } [|e|^p | |e| \ge \tau_j]$. We will use the standard lower bound for Gaussian tail bound: 
    $$P(e \ge \tau_j) \ge \frac{\tau_j}{\tau_j^2 + 1} \frac{1}{\sqrt{2\pi}} \exp(-\tau_j^2/2) \ge \frac{\exp(-\tau_j^2/2)}{3 \tau_j}.$$ 
    Now for the sub-exponential norm of $W = w_1 X e | \Eps_j \cap \Eps_{j,1}^c$ is given as
    \begin{align*}
        \|W\|_{\psi_1} &= \sup_{p \ge 1} p^{-1} \E_{\D_j}[|w_1 \vdot{X}{s} e|^p | \Eps_{j,1}^c]^{1/p} \\
        &\le \sup_{p \ge 1} p^{-1} \E_{\D_j} [|\vdot{X}{s}|^{p}]^{1/p} \E_{\D_j} [|e|^{p} | \Eps_{j,1}^c]^{1/p} \\
        &= \sup_{p \ge 1} p^{-1} \E_{\D_j} [|\vdot{X}{s}|^{p}]^{1/p} \left( \E_{\D_j} [|e|^{p} \indic_{\Eps_{j,1}^c} ] / P(\Eps_{j,1}^c) \right)^{1/p}.
    \end{align*}
    where in the inequality we used the independence of $X$ and $e$. $\E_{e \sim \mathcal{N}(0,1)} [|e|^{p} \indic_{e \ge \tau_j}]$ can be upper-bounded as follows:
    \begin{align*}
        \E_{e \sim \mathcal{N}(0,1)} [|e|^{p} \indic_{e \ge \tau_j}] &= \E_{e \sim \mathcal{N}(0,1)} [|e|^{p} \indic_{2\tau_j \ge |e| \ge \tau_j} ] + \E_{e \sim \mathcal{N}(0,1)} [|e|^{p} \indic_{|e| \ge 2\tau_j} ] \\
        &\le (2\tau_j)^{p} P(|e| \ge \tau_j) + \sqrt{\E[|e|^{2p}]} \sqrt{P(|e| \ge 2\tau_j)}.
    \end{align*}
    If we compare $\sqrt{P(|e| \ge 2\tau_j)}$ and $P(|e| \ge \tau_j)$, we can see that 
    \begin{align*}
        \sqrt{P(|e| \ge 2\tau_j)} / P(|e| \ge \tau_j) \le 3\tau_j \exp(-\tau_j^2/2).
    \end{align*}
    Now we can plug those values we found to proceed
    \begin{align*}
        \|W\|_{\psi_1} &\le \sup_{p \ge 1} p^{-1} \E_{\D_j} [|\vdot{X}{s}|^{p}]^{1/p} \left( (2\tau_j)^p + \sqrt{\E[|e|^{2p}]} 3\tau_j \exp(-\tau_j^2/2) \right)^{1/p} \\
        &\le c_0 \sup_{p \ge 1} p^{-1/2} \left( 2\tau_j + \sqrt{\E[|e|^{2p}]}^{1/p} (3\tau_j \exp(-\tau_j^2/2))^{1/p} \right) \\
        &\le C \tau_j,
    \end{align*}
    for some universal constant $c_0, C$. Now we have the sub-exponential norm of $W$, we can follow the procedure for $w_{1,i} X_i \vdot{X_i}{\beta_j^* - \beta_1^*} \indic_{\Eps_j \cap \Eps_{j,1}^c}$. Similarly to previously guaranteed in Remark \ref{remark:high_probability_on_n}, the deviation will be given as
    $$
        O\left( \tau_j \exp(-\tau_j^2 / (4c)) \sqrt{\frac{d}{n} \log^2(dk^2T/\delta)} \right).
    $$
    Again, we may require $c=4$ and $n > (k^2T/\delta)^{1/3}$ to get $\delta/(k^2T)$ probability bound.

    \item $w_{i,1} X_i e_i \indic_{\Eps_{j} \cap \Eps_{j,1} \cap (\Eps_{j,2} \cap \Eps_{j,3})^c)}$, $j \neq 1$: For this case, we set $W_i = w_{i,1} X_i e_i \indic_{\Eps_{j,1}} | \Eps_{j} \cap (\Eps_{j,2} \cap \Eps_{j,3})^c$ and find that 
    \begin{align*}
        \|W\|_{\psi_1} &= \sup_{p \ge 1} p^{-1} \E_\D[|w_1 \vdot{X}{s} e|^p \indic_{\Eps_{j,1}} | \Eps_{j} \cap (\Eps_{j,2} \cap \Eps_{j,3})^c]^{1/p} \\
        &= \sup_{p \ge 1} p^{-1} \E_{\D_j} [|w_1 \vdot{X}{s} e|^p \Eps_{j,1} | (\Eps_{j,2} \cap \Eps_{j,3})^c]^{1/p} \\
        &\le \sup_{p \ge 1} p^{-1} \E_{\D_j} [|\vdot{X}{s} e|^p| \Eps_{j,2}^c \cup \Eps_{j,3}^c]^{1/p} \\
        &\le \sup_{p \ge 1} p^{-1} \left (\sqrt{\E_{\D_j} [|\vdot{X}{s}|^{2p} | \Eps_{j,2}^c \cup \Eps_{j,3}^c]} \sqrt{\E[|e|^{2p}| \Eps_{j,2}^c \cup \Eps_{j,3}^c ]} \right)^{1/p} \\
        &\le \sup_{p \ge 1} p^{-1} \left (\sqrt{\E_{\D_j} [|\vdot{X}{s}|^{2p} | \Eps_{j,2}^c] + \E_{\D_j} [|\vdot{X}{s}|^{2p} | \Eps_{j,3}^c]} \sqrt{\E[|e|^{2p}]} \right)^{1/p} \\
        &\le C,
    \end{align*}
     for some constant $C > 0$. The probability is bounded as $P(\Eps_j \cap (\Eps_{j,2} \cap \Eps_{j,3})^c) \le \pi_j^* O\left(D_m/R_{j1}^* + \tau_j / R_{j1}^* \right)$, so we can bound the deviation in this case as 
     \begin{align*}
        \| \frac{1}{n} \sum_i w_{1,i} X_i e_i \indic_{\Eps_{j} \cap \Eps_{j,2}^c} - E[w_{1,i} X_i e_i \indic_{\Eps_{j} \cap \Eps_{j,2}^c}] \| = O \left( \sqrt{\frac{\pi_j^*}{k\rho_{\pi}} \vee \frac{\log(d k^2T/\delta)}{n}} \sqrt{\frac{d \log(dk^2T/\delta)}{n} }\right).
    \end{align*} 
     given our initialization and SNR condition.
     
    \item $w_{i,1} X_i e_i \indic_{\Eps_1}$ ($j = 1$): Finally, it is the easiest case since 
    \begin{align*}
        \|W\|_{\psi_1} &= \sup_{p \ge 1} p^{-1} \E_\D[|w_1 \vdot{X}{s} e|^p | \Eps_{1}]^{1/p} \\
        &= \sup_{p \ge 1} p^{-1} \E_{\D_1} [|w_1 \vdot{X}{s} e|^p]^{1/p} \\
        &\le \sup_{p \ge 1} p^{-1} \E_{\D_1} [|\vdot{X}{s} e|^p]^{1/p} \\
        &\le \sup_{p \ge 1} p^{-1} \left (\sqrt{\E_{\D_1} [|\vdot{X}{s}|^{2p}]} \sqrt{\E_{\D_1} [|e|^{2p}]} \right)^{1/p} \\
        &\le c_3,
    \end{align*}
    for some constant $c_3$. We can apply the same trick and get
     \begin{align*}
        \| \frac{1}{n} \sum_i w_{1,i} X_i e_i \indic_{\Eps_{1}} - E[w_{1,i} X_i e_i \indic_{\Eps_{1}}] \| = O \left( \sqrt{\pi_1^* \vee \frac{\log(d k^2T/\delta)}{n}} \sqrt{\frac{d \log(dk^2T/\delta)}{n} }\right).
    \end{align*} 
\end{enumerate}
    
Now we collect every scale of deviations from each item, and conclude that with probability $1-\delta/kT$ (by taking union bound over $O(k)$ items), 
we have
\begin{align}
    \label{eq:e_B_final_bound}
    e_B &= \frac{1}{n} \sum_i w_{1,i} X_i (Y_i - \vdot{X}{\beta_1^*}) - \E_\D [w_1 X (Y - \vdot{X}{\beta_1^*})] \nonumber \\
    &\le  \sqrt{\frac{d}{n} \log^2 (d k^2T /\delta)} \Bigg (\sum_{j \neq 1}^k  \sqrt{\pi_j^*} \rho_{j1} R_{j1}^* \exp(-\tau_j^2) + R_{j1}^* \exp(-\tau_j^2/16) + D_m \sqrt{\frac{\pi_j^* D_m}{R_{j1}^*} \vee \frac{1}{n}}  \nonumber \\
    &+ (1 \vee \rho_{j1}) \sqrt{\frac{1}{n} \vee \frac{\pi_j^* \tau_j}{R_{j1}^*}} + \sqrt{\pi_j^*} \rho_{j1} \exp(-\tau_j^2) + \tau_j \exp(-\tau_j^2/16) + \sqrt{\frac{\pi_j^*}{k \rho_\pi} \vee \frac{1}{n}} \Bigg) \nonumber \\
    &+ \sqrt{\frac{d \log(dk^2T/\delta)}{n}} \sqrt{\pi_1^*}.
\end{align}

As we set $\tau_j = c_\tau \sqrt{\log(k \rho_\pi R_{j1}^*)}$, SNR and initialization condition
\begin{align*}
    R_{j1}^* &= \Omega \left(k \rho_\pi \ \log(\rho_\pi k R_{j1}^*) \right) = \tilde{\Omega}(k), \\   
    D_m / R_{j1}^* &\le O(1 / (k\rho_\pi)), 
\end{align*}
and sample complexity 
$$n \gg \left(k/\pi_{min}^* (d / \epsilon^2) \log^2 (d k^2T/\delta) \right) \vee (k^2T/\delta)^{1/3},$$ every term inside the summation in \eqref{eq:e_B_final_bound} is now less than $O(\sqrt{\pi_1^* / k})$ or $O(D_m \sqrt{\pi_1^* / k})$. Thus, 
$$e_B \le O\left( \epsilon \sqrt{\frac{\pi_{min}}{k}} \left( k (1+D_m) \sqrt{\pi_1^*/k} \right) + \epsilon \pi_1^* \right).$$ 
We can conclude that $e_B \le \pi_1^* D_m \epsilon + \pi_1^* \epsilon$ with probability at least $1 - \delta/kT$.

\subsection{Concentration of $A$}
\label{appendix:concentrate_A}
As we only are interested in lower bound of the minimum eigenvalue, we only need to consider the concentration of $w_{1,i} X_i X_i^{\top} \indic_{\Eps_j}$ since $\frac{1}{n} \sum_i w_{1,i} X_i X_i^{\top} \succeq \frac{1}{n} \sum_i w_{1,i} X_i X_i^{\top} \indic_{\Eps_1}$. The concentration comes from standard concentration argument for random matrix with sub-exponential norm \cite{vershynin2010introduction}. For any fixed $s \in \mathbb{S}^{d-1}$, we have 
$$
    \| w_1 \vdot{X}{s}^2 \|_{\psi_1} \le 2 \|\vdot{X}{s} \|_{\psi_2}^2 \le K,
$$
with some universal constant $K$, since $w_1$ is bounded in [0,1]. Using this and $(1/2)$ covering-net argument over unit sphere, and the same argument we used with Proposition \ref{lemma:indic_prob_decompose}, we get
$$
    \big\| \frac{1}{n} \sum_i w_{1,i} X_i X_i^{\top} \indic_{\Eps_1} - \E_{\D} [w_1 XX^{\top} \indic_{\Eps_1}] \big\|_{op} \le O\left( \sqrt{\pi_1^*} \sqrt{\frac{d \log(k^2T/\delta)}{n}} \right),
$$
with high probability. As we see in the proof of Appendix \ref{Appendix:population_EM_A}, $\E_{\D} [w_1 XX^{\top} \indic_{\Eps_1}] \succeq (\pi_1^* / 2) I $ with good initialization and SNR. Thus, 
$$
    \frac{1}{n} \sum_i w_{1,i} X_i X_i^{\top} \succeq \frac{\pi_1^*}{2}I - \sqrt{\frac{\pi_1^* d \log (k^2T/\delta)}{n}} I \succeq \left( \frac{\pi_1^*}{2} - \epsilon \pi_1^* \right) I,
$$
given $n = \Omega(d \log(k^2T/\delta) / \pi_{min})$. Thus, we can get $\|A_n^{-1}\|_{op} \le 2/\pi_1^*$ with high probability.

\subsection{Concentration of Mixing Weights}
\label{appendix:concentrate_weights}

We can again use the per-sample decomposition strategy. The target we will bound is the error $\Big| \frac{1}{n} \sum_i w_{1,i} - \E_\D [w_1] \Big|$. As before, decompose $w_{1,i}$ as 
$$
    w_{1,i} = \left( \sum_{j>1}^k w_{1,i} \indic_{\Eps_j \cap \Eps_{j,good}} + w_{1,i} \indic_{\Eps_j \cap \Eps_{j,good}^c} \right) + w_{1,i} \indic_{\Eps_1}.
$$
It is the repetition of proofs for other quantities we have considered so far. 

\begin{enumerate}
     \item $w_{i,1} \indic_{\Eps_{j} \cap \Eps_{j,good}}$, $j \neq 1$: The difference is, now in all cases $W$ is a sub-Gaussian random variable. Note that $w_1$ is always less than 1. 
     \begin{align*}
        \| W \|_{\psi_1} &= \sup_{p \ge 1} p^{-1/2} \E_\D[|w_1|^p | \Eps_{j} \cap \Eps_{j,good} ]^{1/p} \\
        &= \sup_{p \ge 1} p^{-1/2} \E_{\D_j} [|w_1|^p | \Eps_{j,good}]^{1/p} \\
        &\le C \rho_{j1} \exp(-\tau_j^2),
    \end{align*}
     Following the same trick we used with Proposition \ref{lemma:indic_prob_decompose},with probability at least $1 - \delta/(k^2T)$, we get
     \begin{align*}
        | \frac{1}{n} \sum_i w_{1,i} \indic_{\Eps_{j} \cap \Eps_{j,good}} - E[w_{1,i} \indic_{\Eps_{j} \cap \Eps_{j,good}}] | = O \left( \rho_{j1} \exp(-\tau_j^2) \sqrt{\pi_j^*} \sqrt{\frac{1}{n} \log(k^2T/\delta)}\right).
    \end{align*}
     
     \item $w_{i,1} \indic_{\Eps_{j} \cap \Eps_{j,good}^c}$, $j \neq 1$: For this case, we set $W = w_{i,1} | \Eps_{j} \cap \Eps_{j,good}^c$ and find that 
     \begin{align*}
        \|W\|_{\psi_2} &= \sup_{p \ge 1} p^{-1/2} \E_\D[|w_1|^p \Eps_{j} \cap \Eps_{j,good}^c]^{1/p} \le 1.
    \end{align*}
     The probability is bounded as $P(\Eps_j \cap \Eps_{j,good}^c) \le \pi_j^* O\left(\exp(-\tau_j^2/2) + D_m/R_{j1}^* + \tau_j / R_{j1}^* \right) \le O(\pi_j^* / (k\rho_\pi))$, so we can bound the deviation in this case as 
     \begin{align*}
        | \frac{1}{n} \sum_i w_{1,i} \indic_{\Eps_{j} \cap \Eps_{j,2}^c} - E[w_{1,i} \indic_{\Eps_{j} \cap \Eps_{j,2}^c}] | = O \left( \sqrt{\frac{\pi_j^*}{k\rho_{\pi}} \vee \frac{\log(k^2T/\delta)}{n}} \sqrt{\frac{\log(k^2T/\delta)}{n} }\right).
    \end{align*} 
     given our initialization and SNR condition.
     
    \item $w_{i,1} \indic_{\Eps_1}$: Finally, it is the easiest case since 
    \begin{align*}
        \|W\|_{\psi_2} &= \sup_{p \ge 1} p^{-1/2} \E_\D[|w_1|^p | \Eps_{1}]^{1/p} \le 1,
    \end{align*}
    We can apply the same trick and get
     \begin{align*}
        | \frac{1}{n} \sum_i w_{1,i} \indic_{\Eps_{1}} - E[w_{1,i} \indic_{\Eps_{1}}] | = O \left( \sqrt{\pi_1^* \vee \frac{\log(k^2T/\delta)}{n}} \sqrt{\frac{\log(k^2T/\delta)}{n} }\right).
    \end{align*} 
\end{enumerate}

Now combining this all, given $n = \Omega(k\epsilon^{-2}/\pi_{min})$ we have
\begin{align*}
    \left| \frac{1}{n} \sum_i w_{1,i} - \E_{\D} [w_1] \right| &\le \sqrt{\frac{1}{n} \log(k^2T/\delta)} \left( \sum_{j>1}^k \rho_{j1} \exp(-\tau_j^2) \sqrt{\pi_j^*} + \sqrt{\frac{\pi_1^*}{k}} \right) + \sqrt{ \frac{\pi_1^* \log(k^2T/\delta)}{n}} \\
    &\le \epsilon \sqrt{\frac{\pi_1^*}{k}} \left( \sum_{j>1}^k \frac{\sqrt{\rho_{j1}} \sqrt{\pi_1^*} }{k \rho_{\pi}} + \sqrt{\frac{\pi_1^*}{k}} \right) + \epsilon \pi_1^* \le O(\pi_1^* \epsilon).
\end{align*}
This implies the concentration of mixing weights in relative scale.

\section{Proof of Auxiliary Lemmas}
\label{appendix:technical_lemmas}

\anglecomp*
\begin{proof}
    Equation \eqref{eq:aux_angle_1} is a consequence of Lemma 6 in \cite{yi2016solving} and elementary rule of union bounds.

    For \eqref{eq:aux_angle_2}, we first look at $p^{th}$ moment conditioned on only one event. Recall that in \cite{yi2016solving}, only the case for $p=2$ is proven. Without loss of generality, due to the rotational invariance of standard Gaussian distribution, we can assume span($u, v_1$) = span($\bm{e}_1, \bm{e}_2$). 
    
    Change first two coordiantes of $X$, $x_1, x_2$ to combination of $r$ Rayleigh distribution and $\theta$ uniformly distributed over $[0, 2\pi)$. Then define $Y = \vdot{s_{3:d}}{X_{3:d}}$ where $s_{3:d}, X_{3:d}$ be partial vectors of $s$ and $X$ from third coordinate. Then $Y \sim \mathcal{N}(0,\|s_{3:d}\|^2)$, and $r, \theta, Y$ are all independent.
    
    Now note that the event $\Eps_1 = |\vdot{X}{u_1}| \ge |\vdot{X}{v}|$ only depends on $\theta$. Then,
    \begin{align*}
        \E[\vdot{X}{s}^p | \Eps_1^c] &= \E[|s_1 r \cos \theta + s_2 r \sin \theta + Y|^p | \Eps_1^c] \\
        &= \frac{\E[|s_1 r \cos \theta + s_2 r \sin \theta + Y|^p \indic_{\Eps_1^c}]}{P(\Eps_1^c)} \\
        &= \frac{\E_{\theta}[\E_{r, Y}[|r s_1 \cos \theta + r s_2 \sin \theta + Y|^p|\theta] \ \indic_{\theta \in \Eps_1^c}]}{P(\Eps_1^c)} \\
        &= \frac{\E_{\theta}[(\E_{r, Y}[|r s_1 \cos \theta + r s_2 \sin \theta + Y|^p|\theta]^{1/p})^p \ \indic_{\theta \in \Eps_1^c}]}{P(\Eps_1^c)} \\
        &\underset{(i)}{\le} \frac{\E_{\theta}[(\E_{r}[|r s_1 \cos \theta + r s_2 \sin \theta|^p | \theta]^{1/p} + \E_Y[|Y|^p|\theta]^{1/p})^p \ \indic_{\theta \in \Eps_1^c}]}{P(\Eps_1^c)} \\
        &\underset{(ii)}{\le} \frac{\E_{\theta}[(\E_r[r^p |s_1 \cos \theta + s_2 \sin \theta|^p | \theta]^{1/p} + \E_Y[|Y|^p]^{1/p})^p \ \indic_{\theta \in \Eps_1^c}]}{P(\Eps_1^c)} \\
        &{\le} \frac{\E_\theta[\E_r[r^p]^{1/p} \|s_{1:2}\| + \E_Y[|Y|^p]^{1/p})^p \ \indic_{\theta \in \Eps_1^c}]}{P(\Eps_1^c)} \\
        &\underset{(iii)}{=} \frac{(\E_r[r^p]^{1/p} \|s_{1:2}\| + \E_Y[|Y|^p]^{1/p})^p \ \E_\theta[\indic_{\theta \in \Eps_1^c}]}{P(\Eps_1^c)} \\
        &= (\E[r^p]^{1/p} \|s_{1:2}\| + \E[|Y|^p]^{1/p})^p,
    \end{align*}
    where (i) we used Minkowski's inequality, (ii) we used independence of $\theta$ and $Y$, and (iii) used independence of all terms from $\theta$.
    
    Then, since $r \sim \text{Rayleigh(1)}$ and $Y \sim \mathcal{N}(0,\|s_{3:d}\|^2)$, we have an exact value for each $p^{th}$ moments from well-known distribution properties. That is, 
    \begin{align*}
        \E[\vdot{X}{s}^p | \Eps^c] &\le \left( \|s_{1:2}\| \sqrt{2} \Gamma(1+p/2)^{1/p} + \|s_{3:d}\| \sqrt{2} \left(\Gamma ((p+1)/2) / \sqrt{\pi} \right)^{1/p} \right)^p.
    \end{align*}
    Now since $\Gamma(1+ p/2) \ge 2 \Gamma((p+1) / 2)/\sqrt{\pi}$ for $p \ge 1$, and $$\|s_{1:2}\| + \|s_{3:d}\| \le \sqrt{\|s_{1:2}\|^2 + \|s_{3:d}\|^2} \sqrt{2} = \sqrt{2}$$ since $s$ is an unit vector, we conclude that 
    $$
        \E[\vdot{X}{s}^p | \Eps_1^c] \le 2^p \Gamma(1+p/2).
    $$
    
    Now we move on to conditioning on $\Eps^c$. It comes from elementary property of union of the events,
    \begin{align*}
        \E[|\vdot{X}{s}|^p | \Eps^c] &= \frac{\E[|\vdot{X}{s}|^p \indic_{\Eps^c}]}{P(\Eps^c)} \le \frac{\E[|\vdot{X}{s}|^p \sum_i \indic_{\Eps_i^c}]}{P(\Eps^c)} \\
        &= \sum_i \frac{\E[|\vdot{X}{s}|^p \indic_{\Eps_i^c}]}{P(\Eps^c)}
        \le \sum_i \frac{\E[|\vdot{X}{s}|^p \indic_{\Eps_i^c}]}{P(\Eps_i^c)} \\
        &\le k 2^{p} \Gamma(1+p/2),
    \end{align*}
    where we used $P(\Eps^c) \ge P(\Eps_i^c)$, and $\indic_{\Eps^c} \le \sum_i \indic_{\Eps_i^c}$ since $\Eps^c = \cup_i \Eps_i^c$. The claim follows.
\end{proof}

\thresholdcomp*
\begin{proof}
    Equation \eqref{eq:aux_threshold_1} is a direct consequence of lemma 9(v) in \cite{balakrishnan2017statistical} and union bound.

    We start of \eqref{eq:aux_threshold_2} with the same strategy in proof of \ref{lemma:angle_comparison}. Let us consider only one comparison first. Let $\Eps_1 = \{ |\vdot{X}{u_1} \ge \alpha_1 \}$. Without loss of generality (by rotational invariance of standard Gaussian), let $u_1 = \bm{e}_1$ and $Y = \vdot{x_{2:d}}{s_{2:d}}$. 
    \begin{align*}
        \E[|\vdot{X}{s}|^p | \Eps_1^c] &= \E[|s_1 x_1 + Y|^p | (|x_1| \le \alpha_1)] \\
        &= \frac{\E[|s_1 x_1 + Y|^p \indic_{x_1 \le \alpha_1}]}{P(|x_1| \le \alpha_1)} \\
        &\le \frac{\E[\E[|s_1 x_1 + Y|^p | x_1] \indic_{x_1 \le \alpha_1}]}{P(x_1 \le \alpha_1)} \\
        &\le \frac{\E[(\E[|s_1 x_1|^p | x_1]^{1/p} + \E[|Y|^p | x_1]^{1/p})^p | x_1] \indic_{x_1 \le \alpha_1}]}{P(x_1 \le \alpha_1)} \\
        &\le \frac{\E[ (|s_1 x_1| + \E[|Y|^p]^{1/p})^p \indic_{x_1 \le \alpha_1}]}{P(x_1 \le \alpha_1)} \\
        &\le \frac{\E[ (|s_1 \alpha_1| + \sqrt{2} \| s_{2:d} \| (\Gamma((1+p)/2)/\sqrt{\pi})^{1/p})^p \indic_{x_1 \le \alpha_1}]}{P(x_1 \le \alpha_1)} \\
        &= \left(|s_1 \alpha_1| + \sqrt{2} \| s_{2:d} \| (\Gamma((1+p)/2)/\sqrt{\pi})^{1/p} \right)^p \\
        &\le 2^p \Gamma((1+p)/2) / \sqrt{\pi}.
    \end{align*}
    
    The rest of the proof follows by decomposing union events into separate events as before.
    \begin{align*}
        \E[|\vdot{X}{s}|^p | \Eps^c] &= \frac{\E[|\vdot{X}{s}|^p \indic_{\Eps^c}]}{P(\Eps^c)} \le \frac{\E[|\vdot{X}{s}|^p \sum_i \indic_{\Eps_i^c}]}{P(\Eps^c)} \\
        &= \sum_i \frac{\E[|\vdot{X}{s}|^p \indic_{\Eps_i^c}]}{P(\Eps^c)}
        \le \sum_i \frac{\E[|\vdot{X}{s}|^p \indic_{\Eps_i^c}]}{P(\Eps_i^c)} \\
        &\le k 2^{p} \Gamma((1+p)/2) / \sqrt{\pi}.
    \end{align*}
\end{proof}

\probdecomp*
\begin{proof}
    We are interested in bounding the following probability
    \begin{align*}
        P\left(\| \sum_i \left(X_i \indic_{A} - \E[X_i \indic_{A}]\right)\|_2 \ge nt\right).
    \end{align*}
    We will upper bound this probability by spliting it with conditioning on every possible set of Bernoulli variables $Z_i$, then arrange them.
    \begin{align*}
        P\Bigg(\Big\| &\sum_i \Big(X_i \indic_{A} - \E[X_i \indic_{A}]\Big) \Big\| \ge nt\Bigg) = \sum_{\{Z_i\}_1^n \in \{0,1\}^n} P \left(\| \sum_i (X_i Z_i - \E[X_i \indic_{A}])\| \ge nt \Big| \{ Z_i \}_1^n \right) P(\{ Z_i \}_1^n).
    \end{align*}
    Note that $X_i Z_i = 0$ when $Z_i = 0$, and $X_i Z_i = X_i | A = Y_i$ when $Z_i = 1$. Now we divide the cases into when $\sum_i Z_i \le n_e$ and $\sum_i Z_i > n_e$.
    \begin{align*}
        & \sum_{\{Z_i\}_1^n \in \{0,1\}^n} P \left( \|(\sum_{i: Z_i = 1} X_i) - n \E[X_i | A] P(A)\| \ge nt \Big| \{ Z_i \}_1^n \right) P(\{ Z_i \}_1^n) \\
        &\le \sum_{\{Z_i\}_1^n \in \{0,1\}^n, \sum_i Z_i \le n_e} P\left( \|(\sum_{i: Z_i = 1} X_i) - n\E[X | A] P(A)\| \ge nt \Big| \{ Z_i \}_1^n \right) P(\{ Z_i \}_1^n) + P\left(\sum_i Z_i \ge n_e + 1\right).
    \end{align*}
    We can decouple the first term above into two terms as the following:
    \begin{align*}
        P &\left( \|(\sum_{i: Z_i = 1} X_i) - n\E[X | A] P(A)\| \ge nt \Big| \{ Z_i \}_1^n \right) \\
        &= P \left( \|\sum_{i: Z_i = 1} (X_i - \E[X | A]) + \E[X | A](\sum_i Z_i - nP(A))\| \ge nt \Big| \{ Z_i \}_1^n \right) \\
        &\le P \left( \|\sum_{i: Z_i = 1} (X_i - \E[X | A])\| \ge nt_1 
        \Big| \{ Z_i \}_1^n \right) + P\left( \|\E[X | A](\sum_i Z_i - nP(A))\| \ge nt_2 \Big| \{ Z_i \}_1^n \right).
    \end{align*}
    where $t_1 + t_2 = t$. Then we observe that conditioned on $Z_i = 1$, $X_i$ is actually $Y_i$, and we can discard all $X_i$ for $i$ such that $Z_i = 0$. Thus, the first expression is simplified to
    \begin{align}
        \label{eq:X_bar_Z_equal_Y}
        P \left( \|\sum_{i: Z_i = 1} (X_i - \E[X | A])\| \ge nt_1 
        \Big| \{ Z_i \}_1^n, \sum_i Z_i = m \right) &= P\left( \left\| \sum_{j=1}^{m} (Y_j - \E[Y]) \right\| \ge n t_1 \right),
    \end{align}
    Here, $j$ is a new index variable, and now the condition is only on the sum of $Z_i$, which is $m$. Now we are ready to wrap up the result:
    \begin{align*}
        P\Bigg(\Big\| &\sum_i \Big(X_i \indic_{A} - \E[X_i \indic_{A}]\Big) \Big\| \ge nt\Bigg) \\
        &\le \sum_{\{Z_i\}_1^n \in \{0,1\}^n, \sum_i Z_i \le n_e} P\left( \left\| \sum_{j=1}^{m} (Y_j - \E[Y]) \right\| \ge n t_1 \right) P(\{Z_i\}_1^n, \sum_i Z_i = m) \\ 
        &+ \sum_{\{Z_i\}_1^n \in \{0,1\}^n, \sum_i Z_i \le n_e} P\left( \|\E[Y]\| \Big|\sum_i Z_i - nP(A) \Big| \ge n t_2 \Big| \{Z_i\}_1^n \right) P(\{Z_i\}_1^n) \\
        &+ P\left(\sum_i Z_i \ge n_e + 1 \right) \\
        &\le \max_{m \le n_e} P\left( \left\| \sum_{j=1}^{m} (Y_j - \E[Y]) \right\| \ge n t_1 \right) \\
        &+ P\left( \|\E[Y]\| \Big|\sum_i Z_i - nP(A) \Big| \ge n t_2\right) + P\left(\sum_i Z_i \ge n_e + 1 \right),
    \end{align*}
    where the last inequality we used the fact $\sum_{\{Z_i\}_1^n \in \{0,1\}^n} P(\{Z_i\}_1^n) = 1$, and \eqref{eq:X_bar_Z_equal_Y} is only conditioned on the sum of $Z_i$ being less than $n_e$. We divide by $n$ in conditions inside the first two probabilities, and we get the theorem.
\end{proof}

\section{Defered Proof: Bounding $B$ for population EM when $D_m \le 1$}
\label{appendix:population_EM_B_II}

\paragraph{Case II. $\max_j \| \Delta_j \| \le 1$:} 
We use mean-value theorem to reformulate $\Delta_w$. We additionally define a symbol $\delta_j := \pi_j - \pi_j^*$. Denote $\beta_j^u = \beta_j^* + u \Delta_j$ and $\pi_j^u = \pi_j^* + u \delta_j$ for $u \in [0,1]$, and let $w_1^u$ be the weight in E-step constructed with $\beta_j^u$ and $\pi_j^u$. Then, by mean-value theorem, for some $u \in [0,1]$, $B = \| \E_\D [\Delta_w^u \vdot{X}{s} (Y - \vdot{X}{\beta_1^*})] \|$ where
\begin{align*}
    \Delta_w^u &= \underbrace{-w_1^u (1-w_1^u) (\vdot{X}{\beta_1^u} - Y) \vdot{X}{\Delta_1} + \sum_{l \neq 1} w_1^u w_l^u (\vdot{X}{\beta_l^u} - Y) \vdot{X}{\Delta_l}}_{\Delta_{w,1}} \\ &\qquad \underbrace{-w_1^u (1-w_1^u) \delta_1 / \pi_1^u + \sum_{l \neq 1} w_1^u w_l^u \delta_l / \pi_l^u}_{\Delta_{w,2}},
\end{align*}
for some $u \in [0,1]$. Note that $\delta_j / \pi_j^u \le 2\delta_j / \pi_j^* \le 1$ guaranteed by initialization condition and the result for $D_m \ge 1$. Let us now redefine $D_m = \max(\max_j \|\Delta_j\|, \max_j \delta_j / \pi_j^*)$. Then for each $j$, we can decompose the target term as
\begin{align*}
    |\E_{\D_j} [\Delta_w^u \vdot{X}{s} (Y - \vdot{X}{\beta_1^*})]| \le \underbrace{|\E_{\D_j} [\Delta_{w,1} \vdot{X}{s} (Y - \vdot{X}{\beta_1^*})]|}_{E_1} + \underbrace{|\E_{\D_j} [\Delta_{w,2} \vdot{X}{s} (Y - \vdot{X}{\beta_1^*})]|}_{E_2}.
\end{align*}
We will bound $E_1$ and $E_2$ separately as we proceed.

\paragraph{$j \neq 1$:} 
\paragraph{Bounding E1.} We first consider bounding the first term.
\begin{align*}
    E_1 &= |\E_{\D_j} [\Delta_{w,1} \vdot{X}{s} (Y - \vdot{X}{\beta_1^*})]| \\
    &\le \underbrace{|\E_{\D_j} [\Delta_{w,1} \vdot{X}{s} \vdot{X}{\beta_j^* - \beta_1^*}]|}_{b_1} + \underbrace{|\E_{\D_j} [\Delta_{w,1} \vdot{X}{s} e]|}_{b_2}, 
\end{align*}
As before, we will first bound $b_1$. It is a bit complicated as it involves many algebraic terms, but the idea is the same.
\begin{align*}
    b_1 &\le \underbrace{\left| \E_{\D_j} [w_1^u (\vdot{X}{\beta_j^* - \beta_1^u} + e) \vdot{X}{\Delta_1} \vdot{X}{s} \vdot{X}{\beta_j^* - \beta_1^*}] \right|}_{d_1} \\
    &+ \underbrace{\left| \E_{\D_j} \left[\sum_{l=1}^k w_1^u w_l^u (\vdot{X}{\beta_j^* - \beta_l^u} + e) \vdot{X}{\Delta_l} \vdot{X}{s} \vdot{X}{\beta_j^* - \beta_1^*} \right] \right|}_{d_2}.
\end{align*}

We bound $d_2$ first. Consider the following good events:
\begin{align*}
    & \Eps_1 = \{|\vdot{X}{\Delta_l}| \le D_m \tau_j, \forall l\} \cap \{|e| \le \tau_j \}, \ 
    \Eps_2 = \{|\vdot{X}{\beta_j^* - \beta_1^u}| \ge 4 \tau_j \}.
\end{align*}
We will set $\tau_j = c_\tau \left(\sqrt{\log(R_{j1}^* k \rho_\pi)} \right)$ for some large constant $c_\tau > 0$.

Under event $\Eps_1$, when $l \neq j$, we claim $|w_l^u (\vdot{X}{\beta_j^* - \beta_l^u} + e)| \le \rho_{jl} |\exp(-6 \tau_j^2) 4\tau_j| + w_l^u 4\tau_j$. Let us denote $r := (\vdot{X}{\beta_j^* - \beta_l^u} + e)$. Then
\begin{align*}
    w_l^u &\le \rho_{jl} \exp\left( \frac{-(\vdot{X}{\beta_j^* - \beta_l^u} + e)^2 + (\vdot{X}{\beta_j^* - \beta_j^u} + e)^2}{2} \right) \\
    &= \rho_{jl} \exp\left( (\vdot{X}{\beta_j^* - \beta_j^u} + e)^2/2 \right) \exp\left( -(\vdot{X}{\beta_j^* - \beta_l^u} + e)^2/2 \right) \\
    &= \rho_{jl} \exp\left( 2\tau_j^2 \right) \exp\left( -r^2/2 \right).
\end{align*}
Thus $|w_l^u r| \le \rho_{jl} \exp(2\tau_j^2) r\exp(-r^2/2)$. The function $f(r) = r\exp(-r^2/2)$ is maximized when $r=1$, and decreasing afterward. Therefore, we can conclude that whenever $r > 4\tau_j$, 
$$
    |w_l^u r| \le \rho_{jl} \exp\left( 2\tau_j^2 \right) \sup_{r \ge 4\tau_j} r \exp\left( -r^2/2 \right) \le \rho_{jl} \exp(2\tau_j^2) 4\tau_j \exp(-8\tau_j^2) \le \rho_{jl} 4\tau_j \exp(-6\tau_j^2).
$$

When $4\tau_j > r$, we have $|w_l^u r| \le w_l^u 4\tau_j$. Thus, we have $|w_l^u r| \le  4 \rho_{jl} \tau_j \exp(-6\tau_j^2) + |w_l^u 4\tau_j|$. 

For $l = j$, under event $\Eps_1$, we know $|\vdot{X}{\Delta_j} + e| \le 2 \tau_j$. Thus, it is also true for $j = l$ that $|w_l^u r| \le (4\tau_j \exp(-6\tau_j^2) \vee |w_l^u| 4 \tau_j)$.

\allowdisplaybreaks
Now we plugging these relations into $d_2$, we get
\begin{align*}
    \E_{\D_j} &\left[\sum_{l} w_1^u w_l^u (\vdot{X}{\beta_j^* - \beta_l^u} + e) \vdot{X}{\Delta_l} \vdot{X}{s} \vdot{X}{\beta_j^* - \beta_1^*} \right] \\ 
    &\le \rho_\pi \E_{\D_j} \left[\sum_{l} \left| w_1^u \exp(-6\tau_j^2) 4\tau_j \vdot{X}{\Delta_l} \vdot{X}{s} \vdot{X}{\beta_j^* - \beta_1^*} \right| \indic_{\Eps_1} \right] \\
    &+ \E_{\D_j} \left[\sum_{l} \left|w_1^u w_l^u 4\tau_j \vdot{X}{\Delta_l} \vdot{X}{s} \vdot{X}{\beta_j^* - \beta_1^*} \right| \indic_{\Eps_1} \right]  \\
    &+ \E_{\D_j} \left[ \left| \sum_{l} w_1^u w_l^u (\vdot{X}{\beta_j^* - \beta_l^u} + e) \vdot{X}{\Delta_l} \vdot{X}{s} \vdot{X}{\beta_j^* - \beta_1^*} \right| \indic_{\Eps_1^c} \right] \\
    &\le 4 \rho_\pi D_m\tau_j^2 \exp(-6\tau_j^2) \underbrace{\E_{\D_j} \left[\sum_{l} \left| w_1^u \vdot{X}{s} \vdot{X}{\beta_j^* - \beta_1^*} \right| \indic_{\Eps_1} \right] }_{(i)} \\
    &+ 4 D_m \tau_j^2 \underbrace{\E_{\D_j} \left[ \left| (\sum_l w_l^u) w_1^u \vdot{X}{s} \vdot{X}{\beta_j^* - \beta_1^*} \right| \indic_{\Eps_1} \right]}_{(ii)} \\
    &+ \underbrace{\E_{\D_j} \left[\left| \sum_{l} w_1^u w_l^u (\vdot{X}{\beta_j^* - \beta_l^u} + e) \vdot{X}{\Delta_l} \vdot{X}{s} \vdot{X}{\beta_j^* - \beta_1^*} \indic_{\Eps_1^c} \right| \right]}_{(iii)}.
\end{align*}

For (i), 
\begin{align*}
    \E_{\D_j} &\left[\sum_{l} \left| w_1^u \vdot{X}{s} \vdot{X}{\beta_j^* - \beta_1^*} \right| \indic_{\Eps_1} \right] \\
    &\le \sum_{l} \sqrt{\E_{\D_j} [\vdot{X}{\beta_j^* - \beta_1^*}^2]} \sqrt{\E_{\D_j} [\vdot{X}{s}^2]} = \sum_l R_{j1}^* = k R_{j1}^*.
\end{align*}

For (ii), 
\begin{align*}
    \E_{\D_j} \left[ \left| w_1^u \vdot{X}{s} \vdot{X}{\beta_j^* - \beta_1^*} \right| \indic_{\Eps_1} \right] &= \E_{\D_j} \left[ \left| w_1^u \vdot{X}{s} \vdot{X}{\beta_j^* - \beta_1^*} \right| \indic_{\Eps_1 \cap \Eps_2} \right] \\
    &+ \E_{\D_j} \left[ \left| w_1^u \vdot{X}{s} \vdot{X}{\beta_j^* - \beta_1^*} \right| \indic_{\Eps_1 \cap \Eps_2^c} \right].
\end{align*}
Under event $\Eps_1 \cap \Eps_2$, it is easy to see that 
\begin{align*}
    |\vdot{X}{\beta_j^* - \beta_j^u} + e| \le 2\tau_j, \ |\vdot{X}{\beta_j^* - \beta_1^u} + e| \ge 3\tau_j, \ w_1^u \le \rho_{j1} \exp(-2\tau_j^2),
\end{align*}
thus 
$$
\E_{\D_j} \left[ \left| w_1^u \vdot{X}{s} \vdot{X}{\beta_j^* - \beta_1^*} \right| \indic_{\Eps_1 \cap \Eps_2} \right] \le \rho_{j1} \exp(-2 \tau_j^2) R_{j1}^*.
$$
For the second term:
\begin{align*}
    \E_{\D_j} \left[ \left| w_1^u \vdot{X}{s} \vdot{X}{\beta_j^* - \beta_1^*} \right| \indic_{\Eps_1 \cap \Eps_2^c} \right] &\le \E[ |\vdot{X}{s}| |\vdot{X}{\beta_j^* - \beta_1^u} + u\vdot{X}{\Delta_1}| \indic_{\Eps_1 \cap \Eps_2^c} ] \\
    &\le \E[|\vdot{X}{s}| (5 \tau_j) \indic_{\Eps_1 \cap \Eps_2^c} ]  \\
    &\le 5\tau_j \E[ |\vdot{X}{s}| | \Eps_2^c] P(\Eps_2^c) \\
    &\le c_1 \tau_j^2 / R_{j1}^*.
\end{align*}

For (iii), note that $P(\Eps_1^c) \le 2k\exp(-\tau_j^2/2)$. Then,
\begin{align}
    \label{eq:intermidate_iii}
    \text{(iii)} &\le \E_{\D_j} \left[\left| \sum_{l} w_1^u w_l^u (\vdot{X}{\beta_j^* - \beta_l^u} + e) \vdot{X}{\Delta_l} \vdot{X}{s} \vdot{X}{\beta_j^* - \beta_1^*} \indic_{\Eps_1^c} \right| \right] \nonumber \\ 
    &\le \sum_{l} \sqrt{\E_{\D_j}[{w_l^u}^2 \vdot{X}{\beta_j^* - \beta_l^u} + e)^2 \vdot{X}{\Delta_l}^2} \sqrt{\E_{\D_j} \left[\vdot{X}{s}^2 \vdot{X}{\beta_j^* - \beta_1^*}^2 \indic_{\Eps_1^c} \right]} \nonumber \\
    &\le \sum_{l} \sqrt{\E_{\D_j}[(w_l^u)^2 (\vdot{X}{\beta_j^* - \beta_l^u} + e)^2 \vdot{X}{\Delta_l}^2 } \sqrt[8]{\E_{\D_j} [\vdot{X}{s}^8]} \sqrt[8]{\E_{\D_j} \left[\vdot{X}{\beta_j^* - \beta_1^*}^8 \right]} \sqrt[4]{P(\Eps_1^c)} \nonumber \\
    &\le c R_{j1}^* \sqrt[4]{k} \exp(-\tau_j^2/8) \left(\sum_{l} \sqrt{\E_{\D_j}[(w_l^u)^2 (\vdot{X}{\beta_j^* - \beta_l^u} + e)^2 \vdot{X}{\Delta_l}^2 } \right).
\end{align}

In order to bound \eqref{eq:intermidate_iii}, we need the following equation which we defer to prove in \ref{Appendix:population_supp}:
\begin{lemma}
    \label{lem:supplementary_Dm_less1}
    If $D_m \le 1$, for $j \neq l$,
    \begin{equation}
        \label{eq:inner_Eps_l}
        \E_{\D_j} \left[(w_l^u)^2 \vdot{X}{(\beta_j^* - \beta_l^u + e)}^2\vdot{X}{\Delta_l}^2\right] \le O\left((\rho_{jl} R_{jl}^*)^2 \exp(-\tau_l^2/2) + \tau_l^3 / R_{jl}^* \right) \| \Delta_l \|^2,
    \end{equation}
    which is less than $O(\| \Delta_l \|^2)$ with $\tau_l = O(\sqrt{ \log(R_{jl}^* \rho_\pi)})$.
    
    If $j = l$, we have
    \begin{equation}
        \label{eq:inner_Eps_l2}
        \E_{\D_j} \left[(w_j^u)^2 \vdot{X}{(\beta_j^* - \beta_j^u + e)}^2\vdot{X}{\Delta_j}^2\right] \le O\left(\tau_j^2 + (\|\Delta_j\|^2 + 1) \sqrt{k} \exp(-\tau_j^2/4) \right) \| \Delta_j \|^2,
    \end{equation}
    which is less than $O(\| \Delta_j \|^2 \log k)$ with $\tau_j = O(\sqrt{\log k})$.
\end{lemma}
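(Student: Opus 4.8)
The two cases are handled by completely different means. When $j=l$ the weight $w_j^u \in [0,1]$ offers no help, so I would simply drop it: since $\beta_j^* - \beta_j^u = -u\Delta_j$, the integrand is at most $(e - u\vdot{X}{\Delta_j})^2\vdot{X}{\Delta_j}^2$, and writing $a := \vdot{X}{\Delta_j} \sim \mathcal{N}(0,\|\Delta_j\|^2)$ independent of $e\sim\mathcal{N}(0,1)$ gives $\E_{\D_j}[(e-ua)^2 a^2] = u^2\,\E[a^4] + \E[e^2]\,\E[a^2] = 3u^2\|\Delta_j\|^4 + \|\Delta_j\|^2 = O(\|\Delta_j\|^2)$ because $u\le 1$ and $\|\Delta_j\| \le D_m \le 1$. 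This is already stronger than the claimed $O(\|\Delta_j\|^2\log k)$; the exact form in \eqref{eq:inner_Eps_l2}, if one wants it, comes from a good/bad split at threshold $\tau_j$ — the $\tau_j^2$ term from the event $\{|e|\le\tau_j\}\cap\{|\vdot{X}{\Delta_j}|\le D_m\tau_j\}$ and the $\sqrt k\exp(-\tau_j^2/4)$ term from its complement via Lemma \ref{lemma:threshold comparison} — but that refinement is unnecessary here.

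For $j\neq l$ I would mirror the $B_j$ argument of Appendix \ref{Appendix:population_EM_B}. Put $r := Y - \vdot{X}{\beta_l^u} = \vdot{X}{\beta_j^* - \beta_l^u} + e$, note $\rho_{jl}^u := \pi_l^u/\pi_j^u \le 3\rho_{jl}$ (each $\pi_m^u \in [\pi_m^*/2, \tfrac32\pi_m^*]$ by the initialization condition and the $D_m\ge1$ analysis), and introduce the good event $\Eps := \Eps_1\cap\Eps_2\cap\Eps_3$ with $\Eps_1 = \{|e|\le\tau_l\}$, $\Eps_2 = \{4(|\vdot{X}{\Delta_j}|\vee|\vdot{X}{\Delta_l}|) \le |\vdot{X}{\beta_j^*-\beta_l^*}|\}$, $\Eps_3 = \{|\vdot{X}{\beta_j^*-\beta_l^*}| \ge 4\sqrt2\,\tau_l\}$. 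Decompose the target expectation into the four pieces carrying $\indic_\Eps$, $\indic_{\Eps_1^c}$, $\indic_{\Eps_1\cap\Eps_2^c}$, $\indic_{\Eps_1\cap\Eps_2\cap\Eps_3^c}$. On $\Eps$ the perturbation is negligible against the true gap: $\Eps_2$ gives $|\vdot{X}{\beta_j^*-\beta_l^u}| \ge \tfrac34|\vdot{X}{\beta_j^*-\beta_l^*}|$ and $|\vdot{X}{\beta_j^*-\beta_j^u}| \le \tfrac14|\vdot{X}{\beta_j^*-\beta_l^*}|$, so exactly as in \eqref{eq:w_1_bound} one gets $w_l^u \le 3\rho_{jl}\exp\!\big(\tfrac12(-\tfrac5{32}|\vdot{X}{\beta_j^*-\beta_l^*}|^2 + 3e^2)\big) \le 3\rho_{jl}\exp(-\tau_l^2)$ using $\Eps_1,\Eps_3$; Cauchy--Schwarz on $r^2\vdot{X}{\Delta_l}^2$ with the Gaussian fourth moment and $\|\beta_j^*-\beta_l^u\| \le R_{jl}^* + 1$ then yields $O((\rho_{jl}R_{jl}^*)^2\exp(-2\tau_l^2))\|\Delta_l\|^2$. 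On the three bad events I would bound $w_l^u \le 1$ and apply H\"older together with the conditional-moment and event-probability estimates of Lemmas \ref{lemma:angle_comparison} and \ref{lemma:threshold comparison}: $P(\Eps_1^c)\le 2\exp(-\tau_l^2/2)$ with $\E[r^2\vdot{X}{\Delta_l}^2\mid\Eps_1^c] = O((R_{jl}^*)^2\|\Delta_l\|^2)$ (the cross term vanishes by symmetry of $e$); $P(\Eps_2^c) = O(D_m/R_{jl}^*)$ with conditional moment $O(\|\Delta_l\|^2)$ since $|r|\lesssim|\vdot{X}{\Delta_j}|+|\vdot{X}{\Delta_l}|+|e|$ there and $D_m\le1$; and $P(\Eps_3^c) = O(\tau_l/R_{jl}^*)$ with $\E[r^2\vdot{X}{\Delta_l}^2\mid\Eps_3^c] = O(\tau_l^2\|\Delta_l\|^2)$ since $|r|\lesssim\tau_l+|\vdot{X}{\Delta_l}|$ there. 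Adding the four contributions gives \eqref{eq:inner_Eps_l} (absorbing $\exp(-2\tau_l^2)\le\exp(-\tau_l^2/2)$); taking $\tau_l = c_\tau\sqrt{\log(R_{jl}^*\rho_\pi)}$ with $c_\tau$ a large absolute constant makes $(\rho_{jl}R_{jl}^*)^2\exp(-\tau_l^2/2)\le 1$ (using $\rho_{jl}\le\rho_\pi$) and $\tau_l^3/R_{jl}^* = O(1)$ by the SNR condition, so the bound collapses to $O(\|\Delta_l\|^2)$.

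The main obstacle is the bookkeeping in the good-event estimate for $j\neq l$: one must show the perturbed weight $w_l^u$ is still exponentially small even though it is built from $\beta_j^u,\beta_l^u,\pi^u$ while the good event is phrased through the true gap $\beta_j^*-\beta_l^*$. This requires tracking the triangle-inequality slack $u\|\Delta\| \le D_m \le 1$ against $R_{jl}^* = \tilde{\Omega}(k)$ and fixing the numerical constants in $\Eps_1,\Eps_2,\Eps_3$ so that the surviving exponent is a strictly negative multiple of $\tau_l^2$; everything else is routine H\"older estimates with the auxiliary lemmas.
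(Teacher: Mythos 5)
Your proposal is correct, and it diverges from the paper's own proof in an interesting way on the $j=l$ case while staying close to it on $j\neq l$. For $j=l$ the paper repeats the event-splitting machinery (conditioning on $\{|\vdot{X}{\Delta_j}|\le D_m\tau_j\}\cap\{|e|\le\tau_j\}$ and its complement, paying a $\sqrt{k}\exp(-\tau_j^2/4)$ term and a final $\log k$), whereas you observe that $\beta_j^*-\beta_j^u=-u\Delta_j$ and $w_j^u\le 1$, so a direct Gaussian moment computation gives $\E[(e-u\,\vdot{X}{\Delta_j})^2\vdot{X}{\Delta_j}^2]=\|\Delta_j\|^2+3u^2\|\Delta_j\|^4\le 4\|\Delta_j\|^2$; this is simpler and strictly stronger than the paper's $O(\|\Delta_j\|^2\log k)$, and since only the magnitude of the bound is used downstream in \eqref{eq:intermidate_iii}, reproducing the literal form \eqref{eq:inner_Eps_l2} is indeed unnecessary. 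For $j\neq l$ your argument is the same technique as the paper's but with a different event choice: you recycle the $D_m>1$ events phrased through the true gap ($\Eps_2$ comparing $|\vdot{X}{\Delta_j}|\vee|\vdot{X}{\Delta_l}|$ to $|\vdot{X}{\beta_j^*-\beta_l^*}|$, $\Eps_3$ a threshold $4\sqrt{2}\tau_l$ on the true gap) and bound $w_l^u\le 3\rho_{jl}\exp(-\tau_l^2)$ exactly as in \eqref{eq:w_1_bound}, while the paper thresholds $|\vdot{X}{\Delta_l}|$ absolutely (event $\Eps_{1,l}$) and the \emph{perturbed} gap $|\vdot{X}{\beta_j^*-\beta_l^u}|$ (event $\Eps_{2,l}$), then controls the product $w_l^u\,r$ via the maximization of $r e^{-r^2/2}$; both routes land on the same three contributions $(\rho_{jl}R_{jl}^*)^2 e^{-c\tau_l^2}$, $O(\tau_l^3/R_{jl}^*)$, and a small-probability remainder, so the difference is cosmetic — your choice buys reuse of the Appendix \ref{Appendix:population_EM_B} bookkeeping, the paper's choice avoids tracking $r$ separately from the weight. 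One small caveat: your conditional-moment claims on $\Eps_2^c$ and $\Eps_3^c$ silently drop the $k$ (or, via Corollary \ref{corollary:key_bound}, $\log k$) factors coming from Lemmas \ref{lemma:angle_comparison} and \ref{lemma:threshold comparison}; these are harmless because they are swallowed by the SNR condition $R_{jl}^*\ge R_{min}=\tilde{\Omega}(k\rho_\pi)$ in the final step (the paper's own proof is equally casual about them), but if you want the displayed bound \eqref{eq:inner_Eps_l} verbatim you should either carry them explicitly or note that they only perturb the $\tau_l^3/R_{jl}^*$ term by polylogarithmic factors.
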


Then, we can bound \eqref{eq:intermidate_iii} by
\begin{align*}
    \eqref{eq:intermidate_iii} &\le O\left( R_{j1}^* \sqrt[4]{k} \exp(-\tau_j^2/8) (\sum_{l\neq j} D_m + \sqrt{\log k} D_m) \right) \\
    &\le O \left(R_{j1}^* k^{5/4} \exp(-\tau_j^2/8) D_m \right).
\end{align*}

Combining all results, we have 
\begin{align*}
    d_2 \le O\left( (\rho_\pi k \tau_j^2 + k^{5/4}) \exp(-\tau_j^2/8) R_{j1}^* + \tau_j^4 / R_{j1}^* \right) D_m.
\end{align*}
Then, we set $\tau_j = \Theta \left(\sqrt{\log(R_{j1}^* k \rho_\pi)} \right)$ to get $d_2 \le c_d D_m / (k \rho_\pi)$ along with $R_{j1}^* \ge R_{min} \ge \tilde{\Omega}(k\rho_\pi)$.

Now for $d_1$, we follow the exactly same path, while the only difference is that it does not involve summation over all components. 
\begin{align*}
    d_1 &= \E_{\D_j} \left[ w_1^u(\vdot{X}{\beta_j^* - \beta_1^u} + e) \vdot{X}{\Delta_1} \vdot{X}{s} \vdot{X}{\beta_j^* - \beta_1^*} \right] \\
    &\le \rho_\pi \exp(-6\tau_j^2) 4\tau_j \left[ |\vdot{X}{\Delta_1} \vdot{X}{s} \vdot{X}{\beta_j^* - \beta_1^*}| \indic_{\Eps_1} \right] \\
    &+ 4\tau_j \E_{\D_j} \left[ |w_1^u \vdot{X}{\Delta_1} \vdot{X}{s} \vdot{X}{\beta_j^* - \beta_1^*}| \indic_{\Eps_1} \right] \\
    &+ \E \left[ |w_1^u (\vdot{X}{\beta_j^* - \beta_1^u} + e) \vdot{X}{\Delta_1} \vdot{X}{s} \vdot{X}{\beta_j^* - \beta_1^*}| \indic_{\Eps_1^c} \right] \\
    &\le O \left((\rho_\pi \tau_j^2 + \sqrt[4]{k}) \exp(-\tau_j^2/8) R_{j1}^*  + \tau_j^4 / R_{j1}^* \right) D_m,
\end{align*}
where we can set $\tau_j$ the same, and we get $d_1 \le c_{d'} D_m / (k\rho_\pi)$. Therefore we complete the proof for $b_1 \le c_b D_m / (k \rho_\pi)$.

The bound for $b_2$ is replicate of the proof for $b_1$ except that, at the end of inequality we get $\sqrt{\E[\vdot{X}{s}^2 e^2]}$ instead of $\sqrt{\E[\vdot{X}{s}^2 \vdot{X}{\beta_j^* - \beta_1^*}^2]}$. Specifically, we start from
\begin{align*}
    b_2 &\le \underbrace{|\E_{\D_j} [w_1^u (\vdot{X}{\beta_j^* - \beta_1^u} + e) \vdot{X}{\Delta_1} \vdot{X}{s} e]|}_{d_1} \\
    &+ \underbrace{|\E_{\D_j} [\sum_{l=1}^k w_1^u w_l^u (\vdot{X}{\beta_j^* - \beta_l^u} + e) \vdot{X}{\Delta_l} \vdot{X}{s} e]|}_{d_2}.
\end{align*}
For $d_2$, applying the same argument, we get
\begin{align*}
    \E_{\D_j} &\left[\sum_{l} w_1^u w_l^u (\vdot{X}{\beta_j^* - \beta_l^u} + e) \vdot{X}{\Delta_l} \vdot{X}{s} e \right] \\ 
    &\le 4 \rho_\pi D_m\tau_j^2 \exp(-6\tau_j^2) \underbrace{\E_{\D_j} \left[\sum_{l} \left| w_1^u \vdot{X}{s} e \right| \indic_{\Eps_1} \right] }_{(i)} + 4 D_m \tau_j^2 \underbrace{\E_{\D_j} \left[ \left| (\sum_l w_l^u) w_1^u \vdot{X}{s} e \right| \indic_{\Eps_1} \right]}_{(ii)} \\
    &+ \underbrace{\E_{\D_j} \left[\left| \sum_{l} w_1^u w_l^u (\vdot{X}{\beta_j^* - \beta_l^u} + e) \vdot{X}{\Delta_l} \vdot{X}{s} e \indic_{\Eps_1^c} \right| \right]}_{(iii)}.    
\end{align*}

Then, we can go through exactly same path to bound each (i), (ii), (iii). Finally, set $\tau_j = \Theta \left( \sqrt{\log(R_{j1}^* k \rho_\pi)} \right)$ as before and we get the bound $E_1 \le c_b D_m / (k \rho_\pi)$ for $j \neq 1$.

\paragraph{Bounding $E_2$, the term from mismatch in mixing weights.} Recall that 
\begin{align*}
    E_2 &= |\E_{\D_j} [\Delta_{w,2} \vdot{X}{s} (Y - \vdot{X}{\beta_1^*})]|, \\
    \Delta_{w,2} &= -w_1^u(1-w_1^u) \delta_1 / \pi_1^u + \sum_{l\neq1} w_1^u w_l^u \delta_l / \pi_l^u \\
    &\le \left| w_1^u(1-w_1^u) + \sum_{l\neq1} w_1^u w_l^u \right| 2 D_m = 2 w_1^u D_m.
\end{align*}
Hence, $E_2 \le 2 D_m \E_{\D_j} [w_1^u |\vdot{X}{s} (Y - \vdot{X}{\beta_1^*})|]$. We have already seen similar equation when we handle $D_m \ge 1$. Only difference is that $\Delta_w$ is now changed to $w_1^u$, but we can observe that we can reuse the exactly same procedure. (Remember the only property we needed for $\Delta_w$ was that it to be less than $\exp(\cdot)$ under good events, which is also true for $w_1^u$). Following the procedure to derive equation \eqref{eq:D_m_ge_1_b_1} and \eqref{eq:D_m_ge_1_b_2}, $E_2$ can be bounded by
$$
    O\left(\exp(-\tau_j^2/4) (1 \vee \rho_{j1}) R_{j1}^* + \tau_j^2 / R_{j1}^* + D_m/R_{j1}^* \right) D_m,
$$
which the same choice of parameters $\tau_j = \Theta(\sqrt{\log (R_{j1}^* k \rho_\pi)}$ gives $E_2 \le c_b D_m / (k\rho_\pi)$ with the same SNR condition.

\paragraph{$j = 1$:} 
\paragraph{Bounding E1.} We define events
\begin{align*}
    \Eps_1 &= \{|\vdot{X}{\Delta_j}| \le D_m \tau, \forall j\} \cap \{|e| \le \tau \} \\
    \Eps_2 &= \{ |\vdot{X}{\beta_1^* - \beta_j^u}| \ge 4\tau, \forall j \neq 1 \}.
\end{align*}
For bounding $E_1$, same as when $D_m \ge 1$, $b_1 = 0$. Thus, we consider $b_2$ only, which is
\begin{align*}
    b_2 &= |\E_{\D_1} [\Delta_w \vdot{X}{s} e]| \\
    &\le \Bigg| \underbrace{\E_{\D_1} \Bigg[w_1^u(1-w_1^u) (\vdot{X}{\beta_1^* - \beta_1^u} + e) \vdot{X}{\Delta_1} \vdot{X}{s} e \Bigg]}_{d_1} \\
    &+ \underbrace{\E_{\D_1} \Bigg[ \sum_{l\neq1} w_1^u w_l^u (\vdot{X}{\beta_1^* - \beta_l^u} + e) \vdot{X}{\Delta_l} \vdot{X}{s} e \Bigg]}_{d_2} \Bigg|.
\end{align*}

First part of the proof follows the path for $j \neq 1$.
\begin{align*}
    d_2 &\le \E_{\D_1} \left[ \left|\sum_{l\neq 1} w_1^u w_l^u (\vdot{X}{\beta_1^* - \beta_l^u} + e) \vdot{X}{\Delta_l} \vdot{X}{s} e \indic_{\Eps_1} \right| \right] \\
    &\le \E_{\D_1} \left[\sum_{l \neq 1} \left| w_1^u \rho_\pi \exp(-6\tau^2) 4\tau \vdot{X}{\Delta_l} \vdot{X}{s} e \right| \indic_{\Eps_1} \right] + \E_{\D_1} \left[\sum_{l \neq 1} \left|w_1^u w_l^u 4\tau \vdot{X}{\Delta_l} \vdot{X}{s} e \right| \indic_{\Eps_1} \right]  \\
    &+ \E_{\D_1} \left[ \left| \sum_{l \neq 1} w_1^u w_l^u (\vdot{X}{\beta_1^* - \beta_l^u} + e) \vdot{X}{\Delta_l} \vdot{X}{s} e \right| \indic_{\Eps_1^c} \right] \\
    &\le 4 \rho_\pi D_m\tau^2 \exp(-6\tau^2) \underbrace{\E_{\D_1} \left[\sum_{l} \left| w_1^u \vdot{X}{s} e \right| \indic_{\Eps_1} \right] }_{(i)} + 4 D_m \tau^2 \underbrace{\E_{\D_1} \left[ \left| (\sum_{l\neq 1} w_l^u) w_1^u \vdot{X}{s} e \right| \indic_{\Eps_1} \right]}_{(ii)} \\
    &+ \underbrace{\E_{\D_1} \left[\left| \sum_{l \neq 1} w_1^u w_l^u (\vdot{X}{\beta_1^* - \beta_l^u} + e) \vdot{X}{\Delta_l} \vdot{X}{s} e\indic_{\Eps_1^c} \right| \right]}_{(iii)}.
\end{align*}

For (i), 
$$
    \E_{\D_1} \left[\sum_{l} \left| w_1^u \vdot{X}{s} e \right| \indic_{\Eps_1} \right] \le \sum_k \E_{\D_1} [|e \vdot{X}{s}|] \le k.
$$

(ii), we use event $\Eps_2$ as before,
\begin{align*}
    \E_{\D_1} \left[ \left| (1 - w_1^u) w_1^u \vdot{X}{s} e \right| \indic_{\Eps_1} \right] &= \E_{\D_1} \left[ \left| (1 - w_1^u) w_1^u \vdot{X}{s} e \right| \indic_{\Eps_1 \cap \Eps_2} \right] \\
    &+ \E_{\D_1} \left[ \left| (1 - w_1^u) w_1^u \vdot{X}{s} e \right| \indic_{\Eps_1 \cap \Eps_2^c} \right].
\end{align*}
Under event $\Eps_1 \cap \Eps_2$, it is now easy to show that $w_l^u \le 3 \rho_\pi \exp(-2 \tau^2)$ for all $l \neq 1$. Thus, $1 - w_1^u \le 3 k \rho_\pi \exp(-2 \tau^2)$, and 
$$
    \E_{\D_1} \left[ \left| (1 - w_1^u) w_1^u \vdot{X}{s} e \right| \indic_{\Eps_1 \cap \Eps_2} \right] \le 3 k \rho_\pi \exp(-2 \tau^2) \E_{\D_1} [|\vdot{X}{s} e|] \le 3 k \rho_\pi \exp(-2 \tau^2).
$$
For $\Eps_1 \cap \Eps_2^c$, 
\begin{align*}
    \E_{\D_1} \left[ \left| (1 - w_1^u) w_1^u \vdot{X}{s} e \right| \indic_{\Eps_1 \cap \Eps_2^c} \right] &\le \E_{\D_1} [|\vdot{X}{s} e| \indic_{\Eps_2^c}] \\
    &\le \sqrt{\E_{\D_1} [\vdot{X}{s}^2 | \Eps_2^c]} \sqrt{\E_{\D_1}[e^2| \Eps_2^c]} P(\Eps_2^c) \\
    &\le c_1 \sqrt{\log k} \frac{k \tau}{R_{min}}.
\end{align*}

For (iii), 
\begin{align*}
    (iii) &= \E_{\D_1} \left[\left| \sum_{l \neq 1} w_1^u w_l^u (\vdot{X}{\beta_1^* - \beta_l^u} + e) \vdot{X}{\Delta_l} \vdot{X}{s} e\indic_{\Eps_1^c} \right| \right] \\
    &\le \sum_{l \neq 1} \E_{\D_1} \left[\left| w_l^u (\vdot{X}{\beta_1^* - \beta_l^u} + e) \vdot{X}{\Delta_l} \vdot{X}{s} e\indic_{\Eps_1^c} \right| \right],
\end{align*}
\begin{align*}
    \E_{\D_1} \left[\left| w_l^u (\vdot{X}{\beta_1^* - \beta_l^u} + e) \vdot{X}{\Delta_l} \vdot{X}{s} e\indic_{\Eps_1^c} \right| \right] \le &\sqrt{\E_{\D_1}[(w_l^u)^2 (\vdot{X}{\beta_1^* - \beta_l^u} + e)^2 \vdot{X}{\Delta_l}^2]} \\
    & \sqrt[4]{\E_{\D_1}[\vdot{X}{s}^4 e^4]} \sqrt[4] {P(\Eps_1^c)}, 
\end{align*}

For bounding $\sqrt{\E_{\D_1}[(w_l^u)^2 (\vdot{X}{\beta_1^* - \beta_l^u} + e)^2 \vdot{X}{\Delta_l}^2]}$ for $l \neq 1$, we can again use Lemma \ref{lem:supplementary_Dm_less1}. We also have that $P(\Eps_1^c) \le k \exp(-\tau^2/2)$. Then, 
\begin{align*}
    (iii) \le c_2 k \sqrt[4]{k} \exp(-\tau^2/8) D_m.
\end{align*}
Combining all, 
\begin{equation}
    \label{eq:d_2_Dm_less_1_j_eq_1}
    d_2 \le O\left((k^{5/4} + k\tau^2) \exp(-\tau^2/8) + k \sqrt{\log k} \tau^3 / R_{min} \right) D_m.
\end{equation} 
Along with our choice $\tau = \Theta(\sqrt{\log(k\rho_\pi)})$ and $R_{min} = \tilde{\Omega}(k)$, we get $d_2 \le c_d D_m$.

For bounding $d_1$, (all constants $c_1, c_2, ...$ are renewed)
\begin{align*}
    d_1 &= \E_{\D_1} \left[w_1^u (1 - w_1^u) (\vdot{X}{\beta_1^* - \beta_1^u} + e) \vdot{X}{\Delta_1} \vdot{X}{s} e \right] \\
    &\le \E_{\D_1} \left[|w_1^u (1 - w_1^u) (\vdot{X}{\beta_1^* - \beta_1^u} + e) \vdot{X}{\Delta_1} \vdot{X}{s} e| \indic_{\Eps_1 \cap \Eps_2} \right] \\
    &+ \E_{\D_1} \left[|w_1^u (1 - w_1^u) (\vdot{X}{\beta_1^* - \beta_1^u} + e) \vdot{X}{\Delta_1} \vdot{X}{s} e| \indic_{\Eps_1^c} \right] \\
    &+ \E_{\D_1} \left[|w_1^u (1 - w_1^u) (\vdot{X}{\beta_1^* - \beta_1^u} + e) \vdot{X}{\Delta_1} \vdot{X}{s} e| \indic_{\Eps_1 \cap \Eps_2^c} \right] \\
    &\le k \rho_\pi \exp(-\tau^2/2) \underbrace{\E_{\D_1} \left[|(|u\vdot{X}{\Delta_1}| + |e|) \vdot{X}{\Delta_1} \vdot{X}{s} e| \indic_{\Eps_1 \cap \Eps_2} \right]}_{(i)} \\
    &+ \underbrace{\E_{\D_1} \left[(|u\vdot{X}{\Delta_1}| + |e|) \vdot{X}{\Delta_1} \vdot{X}{s} e| \indic_{\Eps_1^c} \right]}_{(ii)} \\
    &+ \underbrace{\E_{\D_1} \left[(|u\vdot{X}{\Delta_1}| + |e|) \vdot{X}{\Delta_1} \vdot{X}{s} e| \indic_{\Eps_1 \cap \Eps_2^c} \right]}_{(iii)}.
\end{align*}

\begin{align*}
    (i) &= \E_{\D_1} \left[|(|u\vdot{X}{\Delta_1}| + |e|) \vdot{X}{\Delta_1} \vdot{X}{s} e| \indic_{\Eps_1 \cap \Eps_2} \right] \\
    &\le \E_{\D_1} \left[\vdot{X}{\Delta_1}^2 |\vdot{X}{s} e| \right] + \E_{\D_1} |\left[\vdot{X}{\Delta_1} \vdot{X}{s} e^2| \right] \\
    &\le c_1\| \Delta_1 \| (1 + \|\Delta_1 \|) \le 2c_1 D_m.
\end{align*}

\begin{align*}
    (ii) &\le \E_{\D_1} \left[\vdot{X}{\Delta_1}^2 |\vdot{X}{s} e| \indic_{\Eps_1^c} \right] + \E_{\D_1} \left[|\vdot{X}{\Delta_1} \vdot{X}{s} e^2| \indic_{\Eps_1^c} \right] \\
    &= \sqrt{\E_{\D_1} [\vdot{X}{\Delta_1}^4 \vdot{X}{s}^2 e^2]} \sqrt{P(\Eps_1^c)} + \sqrt{\E_{\D_1} [\vdot{X}{\Delta_1}^2 \vdot{X}{s}^2 e^4]} \sqrt{P(\Eps_1^c)}\\
    &\le c_2 \sqrt{k} \| \Delta_1 \| \exp(-\tau^2/4).
\end{align*}

\begin{align*}
    (iii) &\le D_m^2 \tau^2 \sqrt{\E_{\D_1} [\vdot{X}{s}^2 | \Eps_2^c]} \sqrt{\E_{\D_1} [e^2 | \Eps_2^c]} P(\Eps_2^c) \\
    &+ D_m \tau \sqrt{\E_{\D_1} [\vdot{X}{s}^2 | \Eps_2^c]} \sqrt{\E_{\D_1} [e^4 | \Eps_2^c]} P(\Eps_2^c) \\
    &\le c_3 \sqrt{ \log k } D_m \frac{k\tau^3}{R_{min}},
\end{align*}
where we applied Corollary \ref{corollary:key_bound}. (i), (ii), (iii) gives a bound for $d_1$ as 
\begin{equation}
    \label{eq:d_1_Dm_less_1_j_eq_1}
    d_1 \le O \left(k \rho_\pi \exp(-\tau^2/4) + k \sqrt{\log k} \tau^3 / R_{min} \right) D_m.
\end{equation}
Now combining \eqref{eq:d_2_Dm_less_1_j_eq_1} and \eqref{eq:d_1_Dm_less_1_j_eq_1} we get the bound for $E_1 \le c_e D_m$, with the choice of $\tau = \Theta(\sqrt{\log (k\rho_\pi)})$ and high SNR $\tilde{\Omega(k)}$. 

\paragraph{Bounding $E_2$, the term from mismatch in mixing weights.} When $j = 1$,
\begin{align*}
    \Delta_{w,2} &= -w_1^u(1-w_1^u) \delta_1 / \pi_1^u + \sum_{l\neq1} w_1^u w_l^u \delta_l / \pi_l^u \le \left| w_1^u(1-w_1^u) + \sum_{l\neq1} w_1^u w_l^u \right| D_m = 2 w_1^u (1 - w_1^u) D_m.
\end{align*}
Hence, $E_2 \le D_m \E_{\D_j} [(1-w_1^u) |\vdot{X}{s} (Y - \vdot{X}{\beta_1^*})|]$. Again, we have already seen similar equation when we handle $D_m \ge 1$. Following the procedure to derive equation \eqref{eq:b2_Dm_greater_1_j_eq_1}, $E_2$ can be bounded by
$$
    O\left(k \rho_\pi \exp(-\tau^2/4) + (k\sqrt{\log k}) \tau / R_{min} + (k\sqrt{\log k}) D_m/R_{min} \right) D_m,
$$
which the same choice of parameters $\tau = \Theta(\sqrt{\log (k \rho_\pi)})$ gives $E_2 \le c_b D_m$ with the SNR condition $\tilde{\Omega}(k)$.

Summing up everything, for $j \neq 1$ we have $B_j \le O(D_m / (k\rho_\pi))$, and for $j = 1$ we have $B_1 \le O(D_m)$. Thus, $B \le \pi_1^* B_1 + \sum_{j \neq 1} \pi_j^* B_j \le c_B D_m \pi_1^*$ for some constant $c_B \in (0, 1/8)$ by properly setting constants in the proof. That is $\|\beta_1^+ - \beta_1^*\| \le c_B D_m \pi_1^*$.

\paragraph{Update for mixing weights.} The procedure is exactly same for proving the bound for $\|\beta_1^+ - \beta_1^*\|$. It is actually easier since it does not involve additional terms $\vdot{X}{s}$ and $Y - \vdot{X}{\beta_1^*}$ as can be seen in \eqref{eq:weight_update}. Thus we can follow the exact same procedure, getting $|\pi_1^+ - \pi_1^*| / \pi_1^* \le c_B D_m$.

\paragraph{Proof of Lemma \ref{lem:supplementary_Dm_less1}}
\label{Appendix:population_supp}
\begin{proof}
    If $j \neq l$, we define a new event with new parameter $\tau_l$,
    \begin{align*}
        \Eps_{1,l} = \{ |\vdot{X}{\Delta_l}| \le D_m \tau_l \} \cap \{ |e| \le \tau_l \} \\
        \Eps_{2,l} = \{ |\vdot{X}{\beta_j^* - \beta_l^u}| \ge 4 \tau_l \}.
    \end{align*}
    Under event $\Eps_{1,l}$, we can show that 
    $$|w_l^u|^2 \vdot{X}{(\beta_j^* - \beta_l^u + e)}^2 \indic_{\Eps_{1,l}} \le (\rho_{jl} \exp(-6\tau_l^2) 4\tau_l)^2 \indic_{\Eps_{1,l} \cap \Eps_{2,l}} + (w_l^u 4\tau_l)^2 \indic_{\Eps_{1,l} \cap \Eps_{2,l}^c}.$$
    Now we can bound \eqref{eq:inner_Eps_l} as,
    \begin{align*}
        \E_{\D_j} &\left[(w_l^u)^2 \vdot{X}{(\beta_j^* - \beta_l^u + e)}^2\vdot{X}{\Delta_l}^2\right] \\
        &\le \E_{\D_j} \left[16 \rho_{jl} \exp(-12\tau_l^2) \tau_l^2 \vdot{X}{\Delta_l}^2 \indic_{\Eps_{1,l} \cap \Eps_{2,l}} \right] \\
        &+ \E_{\D_j} \left[16 (w_l^u)^2 \tau_l^2 \vdot{X}{\Delta_l}^2 \indic_{\Eps_{1,l} \cap \Eps_{2,l}^c} \right] \\
        &+ \E_{\D_j} \left[(w_l^u)^2 \vdot{X}{(\beta_j^* - \beta_l^u + e)}^2\vdot{X}{\Delta_l}^2 \indic_{\Eps_{1,l}^c} \right].
    \end{align*}
    
    We do similarly bound each term:
    \begin{align*}
        \E_{\D_j} \left[16 \exp(-12\tau_l^2) \tau_l^2 \vdot{X}{\Delta_l}^2 \indic_{\Eps_{1,l} \cap \Eps_{2,l}} \right] &\le c_1 \rho_{jl} \exp(-12\tau_l^2) \tau_l^2 \|\Delta_l\|^2,
    \end{align*}
    \begin{align*}
        \E_{\D_j} \left[16 (w_l^u)^2 \tau_l^2 \vdot{X}{\Delta_l}^2 \indic_{\Eps_{1,l} \cap \Eps_{2,l}^c} \right] &\le 16 \tau_l^2 \E_{\D_j} \left[(w_l^u)^2 \vdot{X}{\Delta_l}^2 \indic_{\Eps_{2,l}^c} \right] \\
        &\le 16 \tau_l^2 \E_{\D_j} \left[\vdot{X}{\Delta_l}^2|\Eps_{2,l}^c \right]  P(\Eps_{2,l}^c) \\
        &\le c_2 \tau_l^2 \| \Delta_l \|^2 \tau_l/R_{jl}^*,
    \end{align*}
    \begin{align*}
        \E_{\D_j} &\left[[(w_l^u)^2 \vdot{X}{(\beta_j^* - \beta_l^u + e)}^2 \vdot{X}{\Delta_l}^2 \indic_{\Eps_{1,l}^c} \right] \\
        &\le \E_{\D_j} \left[2\vdot{X}{\beta_j^* - \beta_l^u}^2\vdot{X}{\Delta_l}^2\indic_{\Eps_{1,l}^c} \right] 
        + \E_{\D_j} \left[ 2e^2\vdot{X}{\Delta_l}^2 \indic_{\Eps_{1,l}^c}\right] \\
        &\le 2 \sqrt{\E_{\D_j} \left[\vdot{X}{\beta_j^* - \beta_l^u}^4 \vdot{X}{\Delta_l}^4 \right]} \sqrt{P(\Eps_{1,l}^c)} + 2 \sqrt{\E_{\D_j} \left[e^4 \vdot{X}{\Delta_l}^4 \right]} \sqrt{P(\Eps_{1,l}^c)}  \\
        &\le c_3 (R_{jl}^*)^2 \|\Delta_l\|^2 \exp(-\tau_l^2/2) + c_4 \|\Delta_l\|^2 \exp(-\tau_l^2/2).
    \end{align*}
    Set $\tau_l = \Theta(\sqrt{\log (R_{jl}^* \rho_\pi)})$. Then every terms will be canceled out and we get
    $$
        \eqref{eq:inner_Eps_l} \le O( \| \Delta_l \|^2 ).
    $$
    
    If $l = j$, then 
    \begin{align*}
        \E_{\D_j} &\left[(w_l^u)^2 \vdot{X}{(\beta_j^* - \beta_l^u + e)}^2\vdot{X}{\Delta_l}^2\right] \\
        &\le \E_{\D_j} \left[4 \tau_l^2 \vdot{X}{\Delta_l}^2 \indic_{\Eps_{1,l}} \right] \\
        &+ \E_{\D_j} \left[(\vdot{X}{\Delta_l} + e)^2 \vdot{X}{\Delta_l}^2 \indic_{\Eps_{1,l}^c} \right].
    \end{align*}
    Each term is easy to bound.
    \begin{align*}
        \E_{\D_j} \left[4 \tau_l^2 \vdot{X}{\Delta_l}^2 \indic_{\Eps_{1,l}} \right] &\le O(\tau_l^2 D_m^2). \\
        \E_{\D_j} \left[(\vdot{X}{\Delta_l} + e)^2 \vdot{X}{\Delta_l}^2 \indic_{\Eps_{1,l}^c} \right] &\le \E_{\D_j} \left[2\vdot{X}{\Delta_l}^4 + 2e^2 \vdot{X}{\Delta_l}^2 \indic_{\Eps_{1,l}^c} \right] \\
        &\le 2 \sqrt{\E_{\D_j} \left[\vdot{X}{\Delta_l}^8 \right]} \sqrt{P(\Eps_{1,l}^c)} + 2 \sqrt{\E_{\D_j} \left[ e^4 \vdot{X}{\Delta_l}^4 \right]} \sqrt{P(\Eps_{1,l}^c)} \\
        &\le O((\| \Delta_l \|^4 + \| \Delta_l \|^2) \sqrt{k} \exp(-\tau_l^2/4)). 
    \end{align*}
    We set $\tau_l = O(\sqrt{\log k})$ and get 
    $$
        \eqref{eq:inner_Eps_l} \le O(\| \Delta_l \|^2 \log k).
    $$
\end{proof}

\end{appendices}

\end{document}